\newcommand{\RR}{\mathbb{R}}
\newcommand{\WW}{\mathcal{W}}
\newcommand{\FF}{\mathcal{F}}
\newcommand{\nrm}[1]{\lvert #1 \rvert}
\DeclareMathOperator{\relu}{ReLU}
\DeclareMathOperator{\gelu}{GeLU}
\DeclareMathOperator{\leakyrelu}{LeakyReLU}
\DeclareMathOperator{\diag}{diag}
\DeclareMathOperator{\Acc}{Acc}
\DeclareMathOperator{\sort}{sort}
\DeclareMathOperator{\SoftSort}{SoftSort}
\DeclareMathOperator{\softmax}{softmax}
\DeclareMathOperator{\CKA}{CKA}
\DeclareMathOperator{\tr}{tr}
\DeclareMathOperator{\Mat}{Mat}
\DeclareMathOperator{\abs}{abs}
\numberwithin{equation}{section}
\theoremstyle{plain}
\newtheorem{theorem}[equation]{Theorem}
\newtheorem{lemma}[equation]{Lemma}
\newtheorem{corollary}[equation]{Corollary}
\newtheorem{proposition}[equation]{Proposition}
\theoremstyle{definition}
\newtheorem{definition}[equation]{Definition}
\theoremstyle{remark}
\newtheorem{remark}[equation]{Remark}
\newtheorem{example}[equation]{Example}
\newcommand{\omitt}[1]{}
\newcommand{\saveforcamready}[1]{}
\title{On the Symmetries of Deep Learning Models and their Internal Representations}
\author{%
Charles Godfrey$^{1,*}$, Davis Brown$^{1,*}$, Tegan Emerson$^{1,3,4}$, Henry Kvinge$^{1,2,3}$ \\
$^1$Pacific Northwest National Laboratory,\\
$^2$Department of Mathematics, University of Washington,\\
$^3$Department of Mathematics, Colorado State University,\\
$^4$Department of Mathematical Sciences, University of Texas, El Paso\\
$^*$Equal contribution\\
\texttt{first.last@pnnl.gov} \\
}
\begin{document}

\maketitle

\begin{abstract}

Symmetry is a fundamental tool in the exploration of a broad range of
complex systems. In machine learning symmetry has been explored in both models
and data. In this paper we seek to connect the symmetries arising from the architecture of a family of models with the symmetries of that family's internal representation of data. We do this by calculating a set of fundamental symmetry groups, which we call the {\emph{intertwiner groups}} of the model. 
We connect intertwiner groups to a model's internal representations of data through a range of experiments that probe similarities between hidden states across models with the same architecture. Our work suggests that the symmetries of a network are propagated into the symmetries in that network's representation of data, providing us with a better understanding of how architecture affects the learning and prediction process. Finally, we speculate that for ReLU networks, the intertwiner groups may provide a justification for the common practice of concentrating model interpretability exploration on the activation basis in hidden layers rather than arbitrary linear combinations thereof.

\end{abstract}


\section{Introduction}

\omitt{A mantra of modern math is that we should only fix a basis for a vector space
when a basis is imposed by the structure of the problem at hand. When such
structure is present (e.g., an inner product, a group action, a specific data
distribution), it has long been recognized that working in the ``correct'' basis
often dramatically simplifies the problem. For example, when working with
natural images, problems often become substantially easier when one works in a
wavelet basis rather than the pixel basis. At a more fundamental level, any
linear operator $A$ has a unique (up to scaling and reordering) generalized
eigenbasis induced by the Jordan decomposition of $A$. In this paper we ask
whether there are canonical bases for the hidden representation spaces of deep
neural networks.

Beyond gaining a better foundational understanding of the underlying mechanics
of these models, our motivation for this question arises from work in
explainable AI. Many of the most promising approaches in this field probe the
hidden layers of deep learning models to search for the concepts that a network
uses to make its predictions \cite{kimInterpretabilityFeatureAttribution2018}.
This search is often structured around an implicit choice of basis for hidden
representations. For example, \cite{yinDreamingDistillDataFree2020} is a set of
methods that generate visualizations for individual neurons in a hidden
representation. But are individual neurons really more significant than linear
combinations of neurons? We investigate this question with both theoretical
analysis and experiments, finding that the answer is at least to some degree
`yes', \emph{if one is using the right nonlinearity}.

When only considering the affine transformations defined by linear and
convolutional layers there is {\emph{a priori}} no reason to believe that the
activation basis is special. Instead, we find that the significance of this basis
emerges from the other key component of deep neural networks: the
nonlinearities. We identify the symmetries that commute with a number of common
nonlinearities and show that these form a group that we call the
{\emph{intertwiner group}} $G_\sigma$, of nonlinearity $\sigma$
(\cref{def:intertwiner,calc:intertwiners}). These intertwiner groups come with
a natural action on network weights for which the realization map to function
space of \cite{jacotNeuralTangentKernel2020} is invariant
(\cref{lem:comm-w-sig}) --- as such they provide a unifying framework in which
to discuss well-known weight space symmetries such as permutation of neurons
\cite{breaWeightspaceSymmetryDeep2019} and scale invariance properties of
\(\relu\) and batch-norm \cite{nairRectifiedLinearUnits2010,
ioffeBatchNormalizationAccelerating2015}. When $\sigma = \relu$, we prove that the
activation basis emerges in the following way: the set of rays spanned by
the activation basis vectors is the unique finite set of rays stabilized by
$G_\sigma$ (\cref{thm:stabilizer1}).}

Symmetry provides an important path to understanding across a range of disciplines. This principle is well-established in mathematics and physics, where it has been a fundamental tool (e.g., Noether's Theorem \cite{noether}). Symmetry has also been brought to bear on deep learning problems from a number of directions. There is, for example, a rich research thread that studies symmetries in data types that can be used to inform model architectures. The most famous examples of this are standard convolutional neural networks which encode the translation invariance of many types of semantic content in natural images into a network's architecture. In this paper, we focus on connections between two other types of symmetry associated with deep learning models: the symmetries in the learnable parameters of the model and the symmetries across different models' internal representation of the same data. 

The first of these directions of research starts with the observation that in modern neural networks there exist models with different weights that behave identically on all possible input. We show in \cref{sec:notation} that at least some of these equivalent models arise because of symmetries intrinsic to the nonlinearities of the network. We call these groups of symmetries, each of which is attached to a particular type of nonlinear layer $\sigma$ of dimension $n$, the {\emph{intertwiner groups}} $G_{\sigma_n}$ of the model. These intertwiner groups come with
a natural action on network weights for which the realization map to function space of \cite{jacotNeuralTangentKernel2020} is invariant (\cref{lem:comm-w-sig}). As such they provide a unifying framework in which to discuss well-known weight space symmetries such as permutation of neurons \cite{breaWeightspaceSymmetryDeep2019} and scale invariance properties of
\(\relu\) and batch-norm \cite{nairRectifiedLinearUnits2010, ioffeBatchNormalizationAccelerating2015}. 




Next, we tie our intertwiner groups to the symmetries between different model's internal representations of the same data. We do this through a range of experiments that we describe below; each builds on a significant recent advance in the field. 

\textbf{Neural stitching with intertwiner groups:}
The work of
\cite{Bansal2021RevisitingMS,csiszarikSimilarityMatchingNeural2021,lencUnderstandingImageRepresentations2015}
demonstrated that one can take two trained
neural networks, say A and B, with the same architecture but trained from
different randomly initialized weights, and connect the early layers of network
A to the later layers of network B with a simple ``stitching'' layer and achieve
negligible loss in prediction accuracy. This was taken as evidence of the similarity of strong model's representations of data. 
Though the original experiments use a fully
connected linear layer to stitch, we provide theoretical evidence in
\cref{thm:min-stitch} that much less is needed. Indeed, we show that the intertwiner group (which has far fewer parameters in general) is the minimal
viable stitching layer to preserve accuracy. We conduct experiments stitching networks
at \(\relu\) activation layers with the stitching layer restricted to elements of the group \(G_{\relu}\) showing in \cref{fig:stitch-penalties} that one
can stitch CNNs on CIFAR-10 \cite{Krizhevsky09learningmultiple} with only elements of
\(G_{\relu}\) incurring less than \(\approx 10 \%\) accuracy penalty at most activation layers. This is surprisingly close to the losses found when one allows for a much more expressive linear layer to be used to stitch two networks together. However, we see that there remains a significant gap between the stitching accuracies obtained using \(G_{\relu}\) and fully connected linear layers; this provides independent confirmation of earlier findings that neurons of networks trained with different random seeds (i.e. with independent initializations and different random batches) are not simply permutations of each other \cite{liConvergentLearningDifferent2015,wangUnderstandingLearningRepresentations2018}. It is also consistent with observed phenomena such as distributed representations in hidden features \cite[\S 15.4]{Goodfellow-et-al-2016} and perhaps also polysemantic neurons \cite{olah2020zoom}. 

\begin{figure}[h]
\centering
  \includegraphics[
    width=0.9\linewidth
    ]{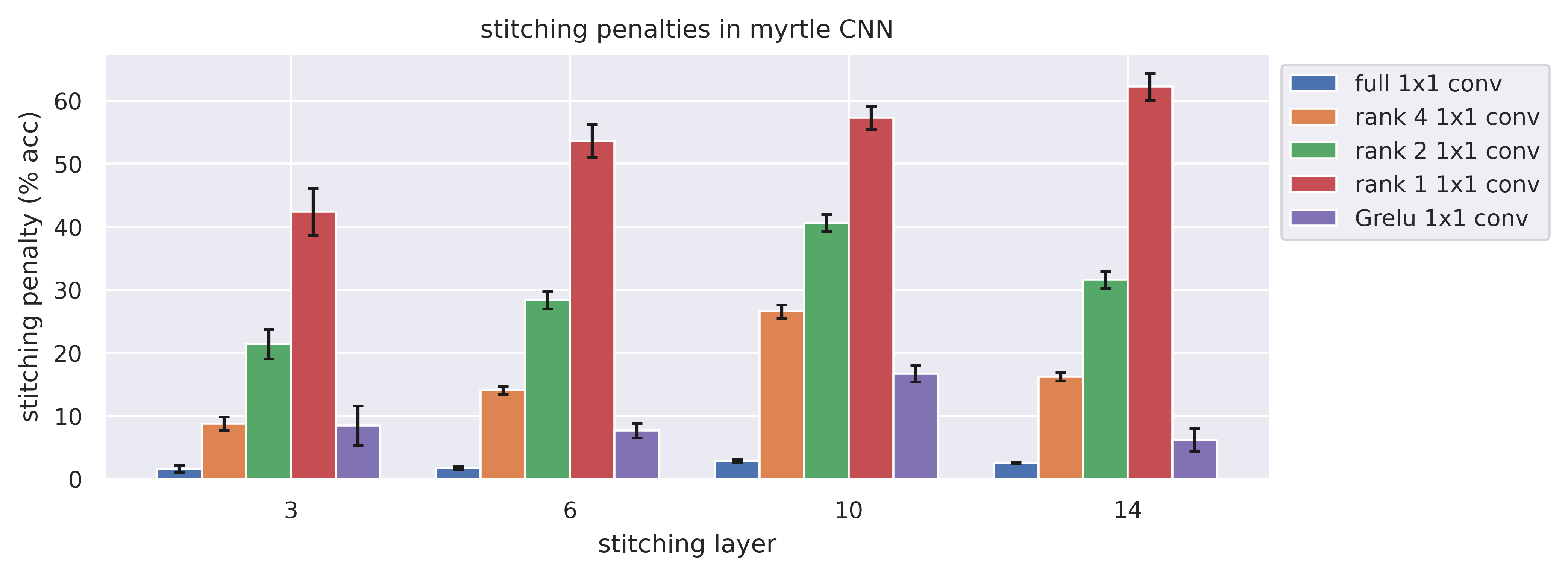}
  \caption[]{Full, reduced rank and \(G_{\relu}\)  1-by-1 convolution stitching penalties \eqref{eq-stitching-penalty} for Myrtle CNNs \cite{pageHowTrainYour2018} on CIFAR-10. Confidence intervals were obtained by evaluating stitching penalties for 32 pairs models trained with different random seeds. The accuracy of the models was  \(91.3 \pm 0.2 \%\). }\label{fig:stitch-penalties}
\end{figure}

\textbf{Representation dissimilarity measures for \(G_{\relu}\):} In \cref{sec:shapemetrics} we
present two statistical dissimilarity measures, \(G_{\relu}\)-Procrustes and \(G_{\relu}\)-CKA, for \(\relu\)-activated hidden
features in different networks, say \(A\) and \(B\). Our measures are
counterparts of orthogonal Procrustes distance (see e.g.
\cite{ding2021grounding}) and Centered Kernel Alignment (CKA)\footnote{with
linear or Gaussian radial basis function (RBF) kernel.} \cite{kornblithSimilarityNeuralNetwork2019}
respectively, which are invariant to orthogonal
transformations, and are maximized when the hidden features of networks A and B agree up to orthogonal transformations. In contrast \(G_{\relu}\)-Procrustes and \(G_{\relu}\)-CKA are invariant to \(G_{\relu}\) transformations, and are maximized when hidden features
agree up to \(G_{\relu}\) transformations. We compare and contrast our measures with their orthogonal counterparts, as well as with stitching experiment results. \Cref{fig:wreath-cka-resnet} shows a comparison of \(G_{\relu}\) and orthogonal CKA measures.

\begin{figure}
    \begin{subfigure}[h]{0.5\linewidth}
        \centering
        \includegraphics[width=\linewidth]{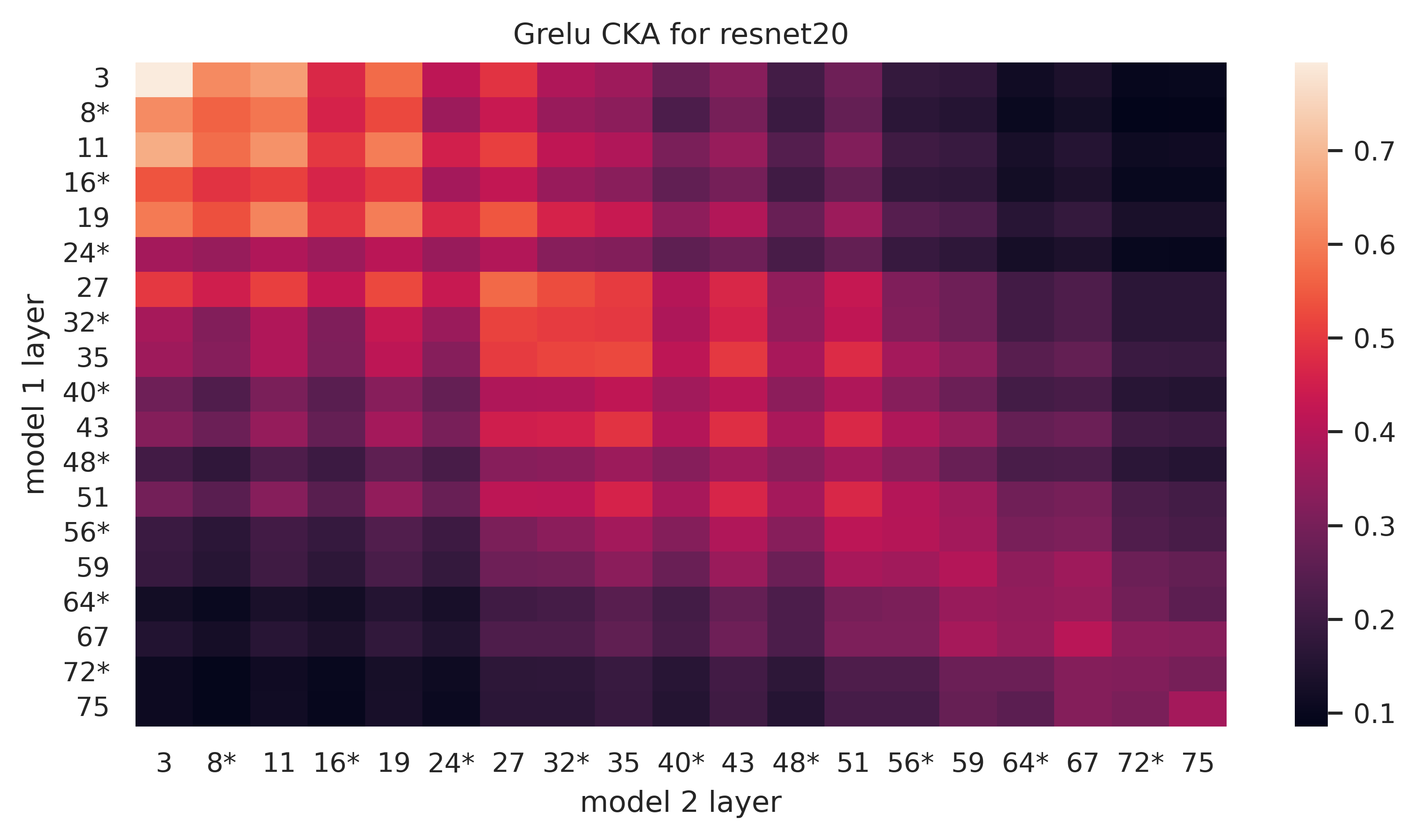}
    \end{subfigure}
    \begin{subfigure}[h]{0.5\linewidth}
        \centering
        \includegraphics[width=\linewidth]{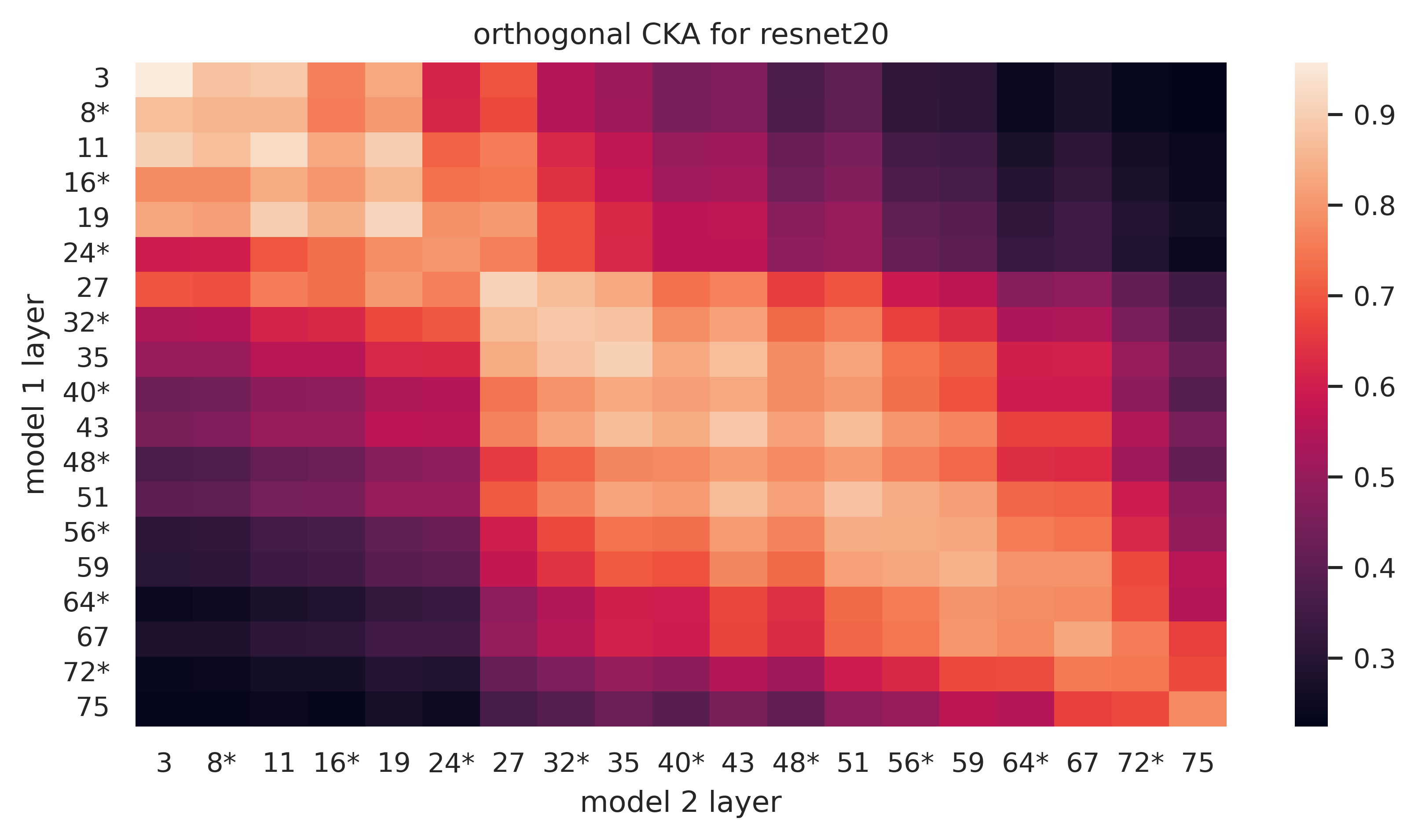}
    \end{subfigure}
    \caption{\(G_{\relu}\)-CKA and orthogonal CKA between layers of two ResNet20s trained on CIFAR-10. Results averaged over 16 pairs of models trained with different random seeds 
    . Layers marked with `*' occur inside residual blocks (\cref{rmk:rn-failure}). For further details see \cref{sec:shapemetrics}.
    }\label{fig:wreath-cka-resnet}
\end{figure}

\textbf{Impact of activation functions on interpretability:} An intriguing
finding from \cite[\S 3.2]{Bau2017NetworkDQ} was that individual neurons
are more interpretable than random linear combinations of neurons. Our results on intertwiner groups (\cref{thm:stabilizer1}) predict that this is a particular feature of ReLU networks. Indeed, \cref{fig:net-d1} shows that in the absence of \(\relu\)
activations interpretability does not decrease when one moves from individual neurons to linear combinations of neurons --- \cref{sec:net-dissect} describes this experiment in further detail. This result suggests that intertwiner groups provide a theoretical justification for the explainable AI community's focus on individual neuron activations, rather than linear combinations thereof \cite{erhan2009visualizing, zeiler2014visualizing, zhou2014object,
yosinski2015understanding,  na2019discovery}, but that this justification is only valid for layers with certain activation functions.

\begin{figure}
  \centering
  \includegraphics[width=4.2in]{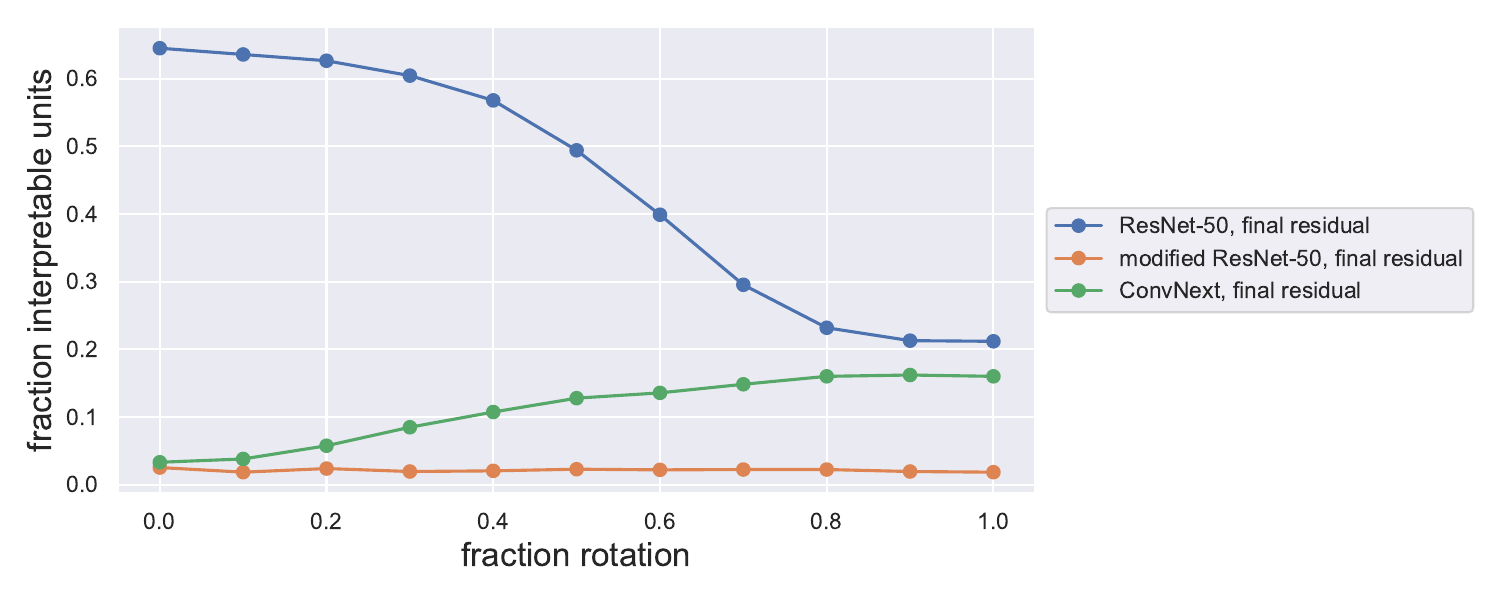}
  \caption[]{Fraction of network dissection interpretable units under rotations of
  the representation basis for a ResNet-50, as well as a modified ResNet-50 and a ConvNeXt model (both without an activation function on the residual output). \Cref{sec:net-dissect} contains details and further discussion.}\label{fig:net-d1}
\end{figure}

Taken together, our experiments provide evidence that a network's symmetries (realized through intertwiner groups) propagate down to symmetries of a model's internal representation of data. Since understanding how different models process the same data is a fundamental goal in fields such as explainable AI and the safety of deep learning systems, we hope that our results will provide an additional lens under which to examine these problems.  

\section{Related work}
\label{sec:related}


\omitt{A range of works have explored the relationship between various vector space
bases and deep neural networks. Many have focused on bases for the input space
of a network, for example
\cite{jo2017measuring,yin2019fourier,tsuzuku2019structural} all investigate the
relationships between the robustness of a deep learning-based classification
model and its sensitivity to discrete Fourier basis elements for image space.
The field of geometric deep learning on the other hand considers cases where the
input space of a deep learning model should be taken to be a non-Euclidean
manifold \cite{bronstein2017geometric}, doing away with the notion of a bases
that makes sense globally. }

The research on symmetries of neural networks is extensive, 
hence we aim to provide a
representative sample knowing it will be incomplete.
\cite{Goodfellow-et-al-2016,breaWeightspaceSymmetryDeep2019,freemanTopologyGeometryHalfRectified2017,yiPositivelyScaleInvariantFlatness2019}
study the effect of weight space symmetries on the
loss landscape.
On the other hand, \cite{badrinarayananUnderstandingSymmetriesDeep2015,Goodfellow-et-al-2016,kuninNeuralMechanicsSymmetry2021,mengGSGDOptimizingReLU2019}
study the effect of weight space symmetries on training dynamics, while
\cite{Goodfellow-et-al-2016,rolnickReverseengineeringDeepReLU2020} show that
weight space symmetries pose an obstruction to model identifiability.

Neural stitching was introduced as a means of comparing learned representations
between networks in \cite{lencUnderstandingImageRepresentations2015}. In
\cite{Bansal2021RevisitingMS}
it was shown to have intriguing connections with the ``Anna Karenina
'' (high performance models share similar internal representations of
data) and ``more is better'' (stitching later layers of a weak model to
early layers of a model trained with more
data/parameters/epochs can improve performance) phenomena.
\cite{csiszarikSimilarityMatchingNeural2021} considered constrained stitching
layers by restricting the rank of the stitching matrix or by introducing an \(\ell_1\)
sparsity penalty. Our methods are distinct in that we explicitly optimize over
the intertwiner group for ReLU nonlinearities (permutations and scalings). Both
\cite{Bansal2021RevisitingMS, csiszarikSimilarityMatchingNeural2021}
compare their stitching results with statistical dissimilarity measures such as CKA. Our \(G_{\relu}\)-Procrustes measure is a close relative of the
permutation Procrustes distance introduced in
\cite{williamsGeneralizedShapeMetrics2022}, and our \(G_{\relu}\)-CKA is a an instance of CKA \cite{kornblithSimilarityNeuralNetwork2019}
in which the kernel is taken to be \(\max \{x_1 \cdot y_1, \dots, x_d \cdot y_d\}\).

\cite{liConvergentLearningDifferent2015} developed algorithms for obtaining a
permutation to align neurons, and \cite{wangUnderstandingLearningRepresentations2018}
introduced
\emph{neuron activation subspace matching} and used it to study similarity of hidden
feature representations.
\cite{tatroOptimizingModeConnectivity2020,git-rebasin,entezari2022the} all aligned neurons with
permutations with the goal of obtaining low-loss paths between the weights of networks. The objectives for neuron alignment used in these works include 
maximizing correlation (\cite{liConvergentLearningDifferent2015,tatroOptimizingModeConnectivity2020,git-rebasin}), maximizing a ``match'' (as defined in \cite{wangUnderstandingLearningRepresentations2018}), simulated annealing search algorithms (\cite{entezari2022the}), direct alignment of weights via a bilinear assignment problem and a ``straight-through estimator'' of back-propagated training loss (\cite{git-rebasin}). Each of these is distinct from our method,
which explicitly seeks a permutation minimizing training loss and searches for one using standard convex relaxation methods for permutation optimization.

Approaches to deep learning interpretability sometimes assume that the activation basis
is special \cite{erhan2009visualizing, zeiler2014visualizing, zhou2014object,
yosinski2015understanding,  na2019discovery}. Studying individual neurons rather
than linear combinations of neurons significantly reduces the complexity of low-level approaches to the understanding
of neural networks \cite{elhage2021mathematical}. In tension with this, many
different projections of hidden layer activations appear to be semantically
coherent \cite{szegedy2013intriguing}. However, \cite{Bau2017NetworkDQ} found evidence that
the hidden feature vectors closer to the coordinate basis align more with human
concepts than vectors sampled uniformly from the unit sphere.

\section{The symmetries of nonlinearities}
\label{sec:notation}

Let $\Mat_{n_1, n_0}(\RR)$ be the algebra of all $n_0 \times n_1$ real matrices and $GL_n(\mathbb{R})$ be the group of all invertible $n \times n$
matrices. Let $\sigma:\mathbb{R} \rightarrow \mathbb{R}$ be a
continuous function. For any $n \in \mathbb{N}$, we can build a
{\emph{nonlinearity}} $\sigma_n$ from $\mathbb{R}^n$ to $\mathbb{R}^n$ by
applying $\sigma$ coordinatewise, i.e., $\sigma_n(x_1,\dots,x_n) =
(\sigma(x_1),\dots, \sigma(x_n))$. Fix some \(k > 1 \) and for each $1 \leq i < k$ let  $\ell_i: \mathbb{R}^{n_{i-1}} \rightarrow \mathbb{R}^{n_{i}}$ be the composition of an affine layer and  a nonlinear layer, so that $\ell_i(x) := \sigma_{n_i}(W_ix + b_i)$, and let  $\ell_k(x) := W_kx + b_k$. Here $W_i \in \Mat_{n_i, n_{i-1}}(\RR)$ and $b_i \in \RR^{n_i}$ are the weights and bias of layer \(i\) respectively. We define $f: \mathbb{R}^{n_0} \rightarrow \mathbb{R}^{n_k}$ to be the neural network $f = \ell_k \circ \dots \circ \ell_1$. For each $1 \leq i < k-1$ we can then decompose $f$ as $f = f_{>i} \circ f_{\leq i}$ where 
\begin{equation*}
f_{\leq i} = \ell_i \circ \dots \circ \ell_1 \quad \text{and} \quad f_{> i} = \ell_k \circ \dots \circ \ell_{i+1}.
\end{equation*}
We define 
\begin{equation*}
    W:= (W_i, b_i \, | \, i = 1, \dots, k) \quad \text{and} \quad \WW := \prod_{i=1}^k (\Mat_{n_i,
n_{i-1}}(\RR) \times \RR^{n_i})
\end{equation*}
where the former is the collection of all weights of $f$ and the latter is the space of all possible weights for a given architecture. When we want to emphasize the dependence of $f$ on weights $W$, we write $f(-, W)$ (and similarly \(f_{\leq
i}(-, W), f_{>i}(-, W)\)). 

One of the topics this work will consider is vector space bases for $f$'s hidden spaces $\mathbb{R}^{n_i}$, for $1 \leq i \leq k-1$. We will investigate the legitimacy of
analyzing features \(f_{\leq i}(D)\) for dataset $D \subset \mathbb{R}^{n_0}$ with respect to the {\emph{activation basis}} for $\mathbb{R}^{n_i}$ which is simply the usual coordinate basis, $e_1, \dots, e_{n_i}$ where $e_j = [\delta_{j\ell}]^{n_i}_{\ell=1}$ is naturally parameterized by individual neuron activations. Note that $\mathbb{R}^{n_i}$ has an infinite number of other possible bases that could be chosen.



\subsection{Intertwiner Groups}

For any $0 \leq i < k$, elements of $GL_{n_i}(\mathbb{R})$ can be applied to the hidden activation space $\mathbb{R}^{n_i}$ both before and after the nonlinear layer $\sigma_{n_i}$. We define
\begin{equation*}
    G_{\sigma_{n_i}} := \{A \in GL_{n_i}(\RR) \, | \, \text{ there exists a } B\in
  GL_{n_i}(\RR)  \text{ such that } \sigma_{n_i} \circ A = B \circ \sigma_{n_i} \}.
\end{equation*}
Informally, we can understand $G_{\sigma_{n_i}}$ to be the set of all invertible linear transformations whose action on $\mathbb{R}^{n_i}$ prior to the nonlinear layer $\sigma_{n_i}$ has an equivalent invertible  transformation after $\sigma_{n_i}$. This is an instance of the common procedure of understanding a function by understanding those operators that commute with it. For any $A \in GL_{n_i}(\mathbb{R})$, we can write $\sigma(A)$ for the $n_i \times n_i$ matrix formed by applying $\sigma$ to all entries in $A$. 

\begin{lemma}
  \label{lem:intertwiner}
  Suppose \(\sigma(I_n) \) is invertible and for each $A \in GL_{n}(\RR)$ define $\phi_\sigma(A) = \sigma(A) \sigma(I_n)^{-1}$. Then \(G_{\sigma_n}\) is a group, $\phi_\sigma: G_{\sigma_n} \to GL_{n}(\RR)$ is a homomorphism  and \(\sigma_n \circ A = \phi_{\sigma}(A) \circ \sigma_n\). 
\end{lemma}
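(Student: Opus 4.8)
The plan is to isolate the single real idea --- determining the ``companion'' matrix $B$ in the definition of $G_{\sigma_n}$ --- and then derive everything else by routine manipulation of the relation $\sigma_n \circ A = B \circ \sigma_n$.

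\textbf{Step 1 (pin down $B$).} Fix $A \in G_{\sigma_n}$, so there is $B \in GL_n(\RR)$ with $\sigma_n \circ A = B \circ \sigma_n$. Evaluate both sides at the standard basis vector $e_j$: the left-hand side is $\sigma_n(Ae_j)$, which is $\sigma$ applied entrywise to the $j$-th column of $A$, i.e.\ the $j$-th column of $\sigma(A)$; the right-hand side is $B\,\sigma_n(e_j)$, which is $B$ times the $j$-th column of $\sigma(I_n)$. Letting $j$ range over $1,\dots,n$ yields the matrix identity $\sigma(A) = B\,\sigma(I_n)$, and since $\sigma(I_n)$ is invertible by hypothesis, $B = \sigma(A)\sigma(I_n)^{-1} = \phi_\sigma(A)$. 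Thus the companion matrix is unique, $\phi_\sigma(A) = B$ lies in $GL_n(\RR)$, and the relation $\sigma_n \circ A = \phi_\sigma(A) \circ \sigma_n$ holds for every $A \in G_{\sigma_n}$ --- this is the last assertion of the lemma, and it also shows $\phi_\sigma$ really does land in $GL_n(\RR)$.

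\textbf{Step 2 (subgroup and homomorphism).} These now follow formally. For the identity, $\sigma_n \circ I_n = I_n \circ \sigma_n$, so $I_n \in G_{\sigma_n}$ with companion $I_n$ (consistently, $\phi_\sigma(I_n) = \sigma(I_n)\sigma(I_n)^{-1} = I_n$). For closure, if $\sigma_n \circ A_r = B_r \circ \sigma_n$ for $r = 1,2$ then $\sigma_n \circ (A_1A_2) = B_1 \circ \sigma_n \circ A_2 = (B_1B_2)\circ\sigma_n$, so $A_1A_2 \in G_{\sigma_n}$ with companion $B_1B_2$. For inverses, substituting $x \mapsto A^{-1}x$ in $\sigma_n(Ax) = B\sigma_n(x)$ gives $\sigma_n\circ A^{-1} = B^{-1}\circ\sigma_n$, so $A^{-1}\in G_{\sigma_n}$. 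Hence $G_{\sigma_n}$ is a subgroup of $GL_n(\RR)$. Finally, by Step 1 the companion of $A_1A_2$ equals both $\phi_\sigma(A_1A_2)$ and $B_1B_2 = \phi_\sigma(A_1)\phi_\sigma(A_2)$, so $\phi_\sigma$ is multiplicative; combined with $\phi_\sigma(I_n) = I_n$, it is a group homomorphism.

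\textbf{The point to be careful about.} There is no genuine obstacle here --- the lemma is bookkeeping --- but the hypothesis that $\sigma(I_n)$ be invertible is doing essential work. A priori the equation $\sigma_n \circ A = B \circ \sigma_n$ only constrains $B$ on the linear span of $\{\sigma_n(e_1),\dots,\sigma_n(e_n)\}$, i.e.\ on the column space of $\sigma(I_n)$; it is precisely the invertibility of $\sigma(I_n)$ that makes these columns span $\RR^n$, so that $B$ is forced to equal $\phi_\sigma(A)$ and the formula $\phi_\sigma(A) = \sigma(A)\sigma(I_n)^{-1}$ is even well-defined. Without it, neither the uniqueness of $B$ nor the homomorphism statement survives.
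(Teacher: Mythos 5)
Your proposal is correct and follows essentially the same route as the paper: both pin down the companion matrix by evaluating $\sigma_n \circ A = B \circ \sigma_n$ at the standard basis vectors to get $\sigma(A) = B\,\sigma(I_n)$, hence $B = \phi_\sigma(A)$, and then deduce the group and homomorphism properties by formal manipulation of that relation (the paper packages the subgroup check as closure under $A_1A_2^{-1}$ and cancels $\sigma_n$ via invertibility of $\sigma(I_n)$, which is the same mechanism as your uniqueness-of-companion argument). No gaps.
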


We defer all proofs to \cref{appendix-proofs}. We include concrete examples of $\sigma$ for small $n_i$ there as well. 


\begin{definition}
  \label{def:intertwiner}
  When the hypotheses of  \cref{lem:intertwiner} are satisfied (namely,
  \(\sigma(I_n) \) is invertible) we call \(G_{\sigma_n}\) the
  \textbf{intertwiner group of the activation \(\sigma_n\)}. We denote the image
  of the homomorphism $\phi_\sigma$ as
  \(\phi_{\sigma}(G_{\sigma_n})\).
\end{definition}

The intertwiner group $G_{\sigma_n}$ and $\phi_{\sigma}$ are concretely described for a range of activations in \cref{fig-intertwiner-groups} --- the last two examples motivate the generality of \cref{def:intertwiner}. Note also that in both of those cases \(A \mapsto \phi_\sigma(A)\) is \emph{not} a homomorphism on all of \(GL_{n}(\RR)\), but \emph{is} a homomorphism when restricted to the appropriate subgroup \(G_{\sigma_n}\). While a substantial part of \cref{fig-intertwiner-groups} can be found scattered in prior work, our calculations  in \cref{sec:calc-intertwiner-grps} deal with the different cases of \cref{fig-intertwiner-groups} in a uniform way, by what amounts to an algorithm that compute $G_{\sigma_n}$ and $\phi_{\sigma}$ given in terms of elementary properties of any (reasonable) activation function \(   \sigma \).\footnote{We defer further discussion of and references to this prior work to \cref{sec:calc-intertwiner-grps}.} As design of activation functions remains an active industry (for example \cite{elhage2022solu}), our techniques for computing $G_{\sigma_n}$ could be useful in future studies of network symmetries.



  \begin{table}
    \centering
    \begin{tabular}{ m{12em}  m{7cm} m{2cm} } 
    \toprule
         Activation &     $G_{\sigma_n}$ &     $\phi_\sigma(A)$ \\
    \toprule
    
    \(\sigma(x) = x\) (identity) & \(GL_n(\RR)\) & \(A\)\\
    \midrule[.05pt]
    \(\sigma(x) = \frac{e^x}{1+e^x}\) & $\Sigma_n$ & $A$\\
    \midrule[.05pt]
    $\sigma(x) = \relu(x)$ & Matrices 
    \(P D\), where \(D\) has positive entries & $A$\\
    \midrule[.05pt]
    $\sigma(x) = \text{LeakyReLU}(x)$ & Same as \(\relu \) as long as negative slope \(\neq 1\) & $A$ \\
    \midrule[.05pt]
    \(\sigma(x) = \frac{1}{\sqrt{2 \pi}} e^{-\frac{x^2}{2}}\) (RBF) & Matrices 
    \(P D\), where  \(D\) has entries in \(\{\pm 1\}\) & 
    $\mathrm{abs}(A)$\\
    \midrule[.05pt]
    \(\sigma(x) = x^d\) (polynomial) & Matrices \(P D\), where \(D\) has non-zero entries 
    & \(A^{\odot  d}\)
    \end{tabular}
    \caption{Explicit descriptions of $G_{\sigma_n}$ and $\phi_{\sigma}$ for six different activations. Here $P \in \Sigma_n$ is a permutation matrix, $D$ is a diagonal matrix, $\mathrm{abs}$ denotes the entrywise absolute value, and $A^{\odot d}$ denotes the entrywise $d$th power.}\label{fig-intertwiner-groups} 
  \end{table}


The following theorem shows that the activation basis is intimately related to the intertwiner group of \( \relu\): \( G_{\relu} \) admits a natural group-theoretic characterization in terms of the rays spanned by the activation basis, and dually the rays spanned by the activation basis can be recovered from \( G_{\relu} \) . While both its statement and proof are elementary, our interest in this theorem lies in the question of whether it could \emph{potentially} provide theoretical justification for focusing model interpretation studies on individual activations. We investigate this question further in \cref{sec:net-dissect}. 

\begin{theorem}
  \label{thm:stabilizer1}
  The group \(G_{\relu_n}\) is precisely the stabilizer of the set of
  rays \(\{ \RR_{\geq 0} e_i \subset \RR^n | i = 1, \dots, n \}\). Moreover if 
  \(\RR_{\geq 0} v_1, \dots, \RR_{\geq 0} v_N \subseteq \RR^n  \)
  is a finite set of rays stabilized by \(G_{\relu}\), then for each \( v_i = [v_{i1}, \dots, v_{in}]^T \), it must be that \(v_{ij} = 0 \) for all but one \(j \in \{1, \dots, n\}\). Equivalently up to multiplication by a positive scalar every \(v_i\) is of the form \(\pm e_j\) for some \(j\).
  
\end{theorem}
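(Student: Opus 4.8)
The plan is to treat the two assertions separately, in each case leaning on the explicit description of $G_{\relu_n}$ from \cref{fig-intertwiner-groups}: $G_{\relu_n}$ is exactly the set of \emph{monomial} matrices $PD$ where $P$ is a permutation matrix and $D$ is diagonal with strictly positive diagonal entries. Granting that description, the first assertion becomes the linear-algebra statement that the setwise stabilizer in $GL_n(\RR)$ of $\{\RR_{\geq 0}e_1,\dots,\RR_{\geq 0}e_n\}$ is precisely the set of such $PD$, and the second becomes a finiteness argument that uses only the subgroup of positive diagonal matrices.

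For the first assertion I would verify both inclusions directly. If $A=PD\in G_{\relu_n}$ and $P$ realizes the permutation $\pi$, then $Ae_i=d_ie_{\pi(i)}$ with $d_i>0$, so $A$ carries $\RR_{\geq 0}e_i$ onto $\RR_{\geq 0}e_{\pi(i)}$; as $\pi$ is a bijection, $A$ permutes the set of rays and hence lies in its stabilizer. Conversely, if $A\in GL_n(\RR)$ stabilizes $\{\RR_{\geq 0}e_i\}$, then for each $i$ there is an index $\tau(i)$ and a scalar $c_i\geq 0$ with $Ae_i=c_ie_{\tau(i)}$; invertibility of $A$ forces $c_i\neq 0$, hence $c_i>0$, and forces the columns $Ae_1,\dots,Ae_n$ to be linearly independent, which rules out $\tau$ identifying two indices, so $\tau$ is a permutation. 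Thus $A$ is a positive monomial matrix, i.e.\ $A\in G_{\relu_n}$, and combining the two inclusions with \cref{fig-intertwiner-groups} yields the claimed equality.

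For the second assertion, let $\RR_{\geq 0}v_1,\dots,\RR_{\geq 0}v_N$ be a finite set of rays stabilized by $G_{\relu_n}$. Since each element of $G_{\relu_n}$ is an invertible linear map and the set is finite, $G_{\relu_n}$ permutes this set; in particular so does the subgroup $T=\{\diag(d_1,\dots,d_n):d_j>0\}\subseteq G_{\relu_n}$, and therefore each $T$-orbit $\{\RR_{\geq 0}Dv_i : D\in T\}$ is finite. Fix $i$ and write $v_i=(a_1,\dots,a_n)$; suppose for contradiction that two coordinates $a_p,a_q$ are nonzero. For $s>0$ let $D_s$ be diagonal with $p$-th entry $s$ and all other entries $1$. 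If $\RR_{\geq 0}D_sv_i=\RR_{\geq 0}D_{s'}v_i$, then $D_sv_i=\lambda D_{s'}v_i$ for some $\lambda>0$; comparing $q$-th coordinates gives $a_q=\lambda a_q$, so $\lambda=1$, and then comparing $p$-th coordinates gives $s=s'$. Hence distinct $s$ produce distinct rays, so the $T$-orbit of $\RR_{\geq 0}v_i$ is infinite --- a contradiction. Therefore each $v_i$ has at most one nonzero coordinate, and since $v_i\neq 0$ (it spans a ray) exactly one; dividing by the absolute value of that coordinate puts $v_i$ in the form $\pm e_j$.

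Everything here is elementary, as the theorem advertises; the one step that merits care is the orbit-finiteness computation, where one must actually exhibit an infinite family of rays in the $T$-orbit of any ray with two or more nonzero coordinates, rather than just assert it. I would also flag two points in the write-up: the second assertion needs only the maximal torus $T$ of positive diagonal matrices and not the permutation part of $G_{\relu_n}$, and the argument is blind to the sign of $e_j$, so what is really being detected is the coordinate-hyperplane structure rather than the positivity of the $\relu$ rays.
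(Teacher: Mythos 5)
Your proposal is correct and follows essentially the same route as the paper: the first part is the identical two-inclusion argument via the monomial description of $G_{\relu_n}$, and the second part uses the same one-parameter family of diagonal matrices scaling one of two nonzero coordinates to produce infinitely many distinct rays (the paper checks distinctness via a $2\times 2$ minor of $(v \mid Dv)$, you via a direct coordinate comparison of $D_s v$ and $D_{s'}v$, which if anything handles pairwise distinctness slightly more explicitly). No gaps.
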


\subsection{Weight space symmetries}
\label{sect-weight-space}

The intertwiner group is also a natural way to describe the weight space symmetries of a neural network. We denote by \(\FF \subseteq C(\RR^{n_0}, \RR^{n_k})\) the space of continuous
functions that can be described by a network with the same architecture as
\(f\). As described in \cite{jacotNeuralTangentKernel2020} there is a
\textbf{realization map} \(\Phi: \WW \to \FF\) mapping weights \(W \in \WW\) to
the associated function \(f \in \FF\). $\Phi$ arises because there are generally multiple sets of weights that yield the same function. We will show that \(\Phi\) is invariant
with respect to an action of the intertwiner groups on \(\WW\) so that intertwiner groups form a set of ``built-in'' weight space symmetries of \(f\). This result, which encompasses phenomena including permutation
symmetries of hidden neurons, is well known in many particular cases (e.g., \cite[\S 8.2.2]{Goodfellow-et-al-2016}, \cite[\S 3]{breaWeightspaceSymmetryDeep2019}, \cite[\S 2]{freemanTopologyGeometryHalfRectified2017}, \cite[\S 3]{mengGSGDOptimizingReLU2019}, \cite[\S 3, A]{rolnickReverseengineeringDeepReLU2020}). From \Cref{lem:comm-w-sig} we can also derive corollaries regarding symmetries of the loss landscape --- these are included in \cref{sec:sym-ll}.
\begin{proposition}
  \label{lem:comm-w-sig}
  Suppose $A_i \in G_{\sigma_{n_i}}$ for $1 \leq i \leq k-1$, and let 
  \begin{equation*}
  W' = (A_1 W_1, A_1b_1, A_2 W_2 \phi_{\sigma}(A_1^{-1}), A_2 b_2 , \dots,
  W_{k}\phi_{\sigma}(A_{k-1}^{-1}), b_{k})
  \end{equation*}
  Then, as functions, for each $m$
  \begin{equation}
  \label{eq:equivariant-thing}
    f_{\leq m}(x, W') = \phi_\sigma(A_m) \circ f_{\leq m}(x, W) \text{  and  }f_{> m}(x, W') = f_{>m}(x, W) \circ \phi_{\sigma}(A_m)^{-1},
  \end{equation}
  In particular, \(f(x, W') = f(x, W)\) for all \(x \in \RR^{n_0}\).  Equivalently, we
  have \(\Phi(W') = \Phi(W) \in \FF\).
\end{proposition}
\begin{remark}
    \label{rmk:rn-failure}
    We will show in \cref{appendix-proofs} that the statement of this theorem must be modified  if the architecture of \(f\) contains residual connections. By placing suitable restrictions on the matrices \(A_i\)\footnote{Namely, that \(A_l = A_m\) if layers \(l\) and \(m\) are joined by a sequence of residual connections.} we can recover a form of \cref{eq:equivariant-thing} provided \(m\) occurs at the end of a residual block. However, there doesn't seem to be a way to obtain such an identity when \( m\) occurs \emph{inside} a residual block; we see empirical evidence consistent with this point in \cref{fig:rn-pro,fig:wreath-cka-resnet,fig:stitch-penalties-resnet} below.
\end{remark}

\subsection{A ``sanity test'' for intertwiners}
\label{sec:sanity-rotations}

To test \cref{lem:comm-w-sig} with a simple experiment, we begin with a Myrtle CNN \cite{pageHowTrainYour2018} network\footnote{This is a simple 5-layer CNN, with no residual connections described further in \cref{sect-experimental-details}.}  trained for 50 epochs on the CIFAR-10 dataset, fix a \emph{pre-activation layer} \(l\), and apply a transformation \(A \) to the weights \( W_l \) and biases \( b_l \) to obtain \( AW_l\) and \(  Ab_l\) (we only act on channels, hence in practice this is implemented by an auxiliary 1-by-1 convolution layer). We consider 2 choices of \(  A\):
\begin{enumerate*}[(i)]
    \item a random element of \(G_{\relu} \), where \(P\) is a random permutation and the diagonal entries of \( D \) are sampled from a lognormal distribution, and
    \item a random orthogonal matrix, obtained as the ``\(  Q\)'' in a \(QR\)-decomposition of a random matrix \(X \) with independent standard normal entries.
\end{enumerate*}

Next, we freeze layers up to and including \(l\) and finetune the later layers for another 50 epochs. We refer to the difference between the validation accuracy before and after applying the transformation \(A\) and finetuning as a \textbf{rotation penalty}. Based on \cref{lem:comm-w-sig}, when \(   A \in G_{\relu} \) the network should be able to recover reasonable performance even with the transformed features --- for example, by updating \(W_{l+1} \) to \(W_{l+1}\phi_\sigma(A)^{-1}\). On the other hand, with probability 1 there is no matrix \( B\) such that updating \(W_{l+1} \) to \(W_{l+1}B\) counteracts the effect of an orthogonal rotation \(A\) on all possible input. We see that this is indeed the case in \cref{fig:rotation-penalties}: transforming by \(   A \in G_{\relu} \) rather a random orthogonal matrix results in significantly smaller rotation penalties.

\begin{figure}
    \centering
    \includegraphics[width=0.9\linewidth]{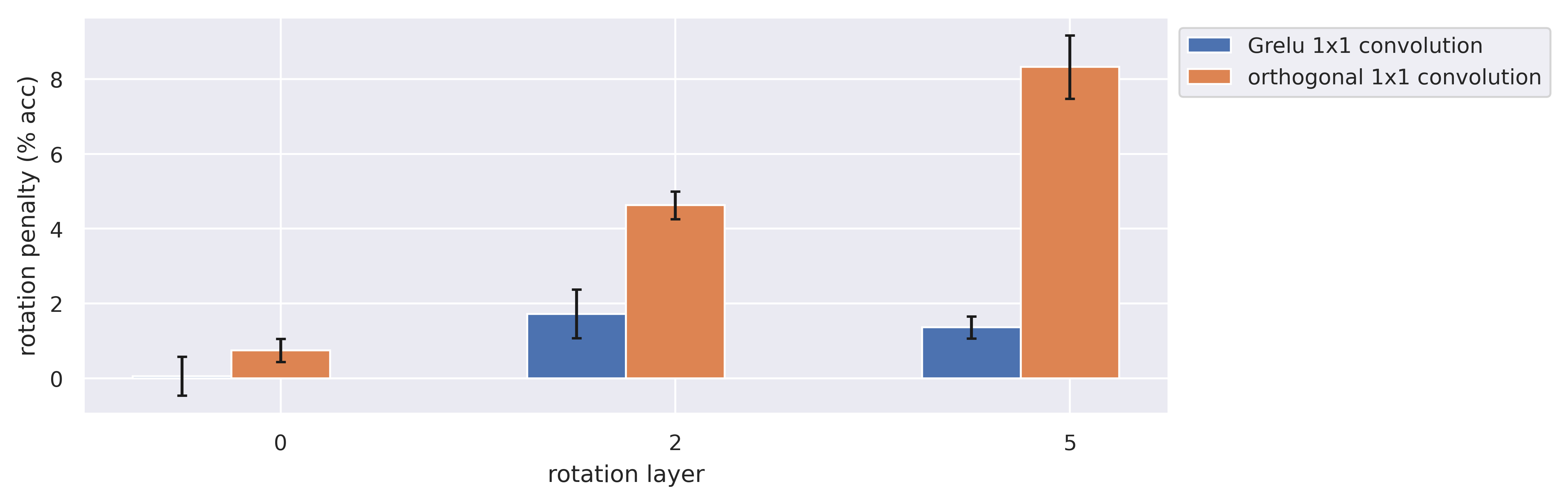}
    \caption{Rotation penalties for Myrtle CNNs on the CIFAR-10 dataset.  Confidence intervals were obtained by performing 10 independent trials of the experiment with different random seeds, and baseline accuracy was \(\approx 87\%\)..}
    \label{fig:rotation-penalties}
\end{figure}

\section{Intertwining group symmetries and model stitching}
\label{sect-stitching}

In this section we provide evidence that some of the differences between distinct model's internal representations can be explained in terms of symmetries encoded by intertwiner groups. We do this using the stitching framework from \cite{Bansal2021RevisitingMS,csiszarikSimilarityMatchingNeural2021}, and begin by reviewing the concept of network stitching. 

Suppose \(f, \tilde{f}\) are two networks as in \cref{sec:notation}, with
weights \(W \), \(\tilde{W}\) respectively. For any $1 \leq l \leq k-1$ we may form a \textbf{stitched network \(S(f, \tilde{f}, l, \varphi): \RR^{n_0} \to \RR^{n_k}\)} defined in the
notation of \cref{sec:notation} by $S(f, \tilde{f}, l, \varphi) = \tilde{f}_{>l}\circ\varphi \circ f_{\leq l}$ -- here \(\varphi: \RR^{n_l} \to \RR^{n_l}\) is a \textbf{stitching layer}.
In a typical stitching experiment one trains networks $f$ and $\tilde{f}$ from different initializations and freezes their weights, constrains
\(\varphi\) to some simple function class \(\mathcal{S}\) (e.g., affine maps in
\cite{Bansal2021RevisitingMS}), and trains \(S(f,
\tilde{f}, l, \varphi)\) by optimizing \(\varphi\) alone. The final validation
accuracy \(\Acc S(f, \tilde{f}, l, \varphi)\) of \(S(f, \tilde{f}, l, \varphi)\)
is then considered a measure of similarity (or lack therof) of the internal representations of
\(f\) and \(\tilde{f}\) in \(\RR^{n_l}\) --- in this framework the situation 
\begin{equation}
  \label{eq:happy-stitch}
  \Acc S(f, \tilde{f}, l, \varphi) \approx \Acc f, \Acc \tilde{f}
\end{equation}
corresponds to high similarity since the hidden representations of model $f$ and $\tilde{f}$ could be related by a transformation $\mathcal{S}$. 

Recall that even though the networks \(f(W)\) and \(f(W')\) may be \emph{equal as functions}, their hidden representations need not be the same (an example of this is given in \cref{appendix-examples}). Our next result shows that in the case where $f$ and $\tilde{f}$ do only differ up to an element of $G_{\sigma_{n_l}}$,  \cref{eq:happy-stitch} is achievable
even when the stitching function class \(\mathcal{S}\) is restricted down to elements of
$\phi_{\sigma}(G_{\sigma_{n_l}})$ (see \cref{def:intertwiner}). 

\begin{theorem}
  \label{thm:min-stitch}
  Suppose \(\tilde{W} = (A_1 W_1, A_1 b_1, A_2 W_2 \phi_{\sigma}(A_1^{-1}), A_2 b_2 , \dots,
  W_{k}\phi_{\sigma}(A_k^{-1}), b_{k}) \) where \(A_i \in
  G_{\sigma_{n_i}} \) for all \(i\).  Then \cref{eq:happy-stitch} is achievable
  with equality \emph{if} the stitching function class \(\mathcal{S}\) containing
  \(\varphi\) contains  \(\phi_{\sigma}(G_{\sigma_{n_l}})\).
\end{theorem}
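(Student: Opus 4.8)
The plan is to show that the specific choice $\varphi = \phi_\sigma(A_l)$ makes the stitched network $S(f,\tilde f, l, \varphi)$ equal, as a function, to $\tilde f$ (equivalently to $f$, which coincides with $\tilde f$ as a function by \cref{lem:comm-w-sig}), so that its accuracy is exactly $\Acc \tilde f = \Acc f$. Since $\phi_\sigma(A_l) \in \phi_\sigma(G_{\sigma_{n_l}}) \subseteq \mathcal{S}$ by hypothesis, this witnesses achievability of \cref{eq:happy-stitch} with equality.

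\begin{proof}
Write $W = (W_i, b_i \mid i = 1, \dots, k)$ and recall that $\tilde W$ is obtained from $W$ by the intertwiner action of $(A_1, \dots, A_{k-1})$ as in \cref{lem:comm-w-sig} (the $A_k$ appearing in the statement acts on the final, purely affine layer and can be absorbed or taken trivial; what matters for stitching at layer $l$ is $A_l$ with $1 \le l \le k-1$). Set $\varphi := \phi_\sigma(A_l) \in \phi_\sigma(G_{\sigma_{n_l}})$, which lies in $\mathcal{S}$ by assumption. By the two identities of \cref{lem:comm-w-sig},
\begin{equation*}
f_{\le l}(x, \tilde W) = \phi_\sigma(A_l) \circ f_{\le l}(x, W) \quad\text{and}\quad f_{>l}(x, \tilde W) = f_{>l}(x, W) \circ \phi_\sigma(A_l)^{-1}.
\end{equation*}
In the stitching construction we take the early part $f_{\le l}$ from the first network and the late part $\tilde f_{>l} = f_{>l}(-, \tilde W)$ from the second. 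Composing,
\begin{equation*}
S(f, \tilde f, l, \varphi)(x) = \tilde f_{>l}\bigl(\varphi(f_{\le l}(x, W))\bigr) = f_{>l}(\,\cdot\,, \tilde W)\bigl(\phi_\sigma(A_l)\, f_{\le l}(x, W)\bigr) = f_{>l}(\,\cdot\,, \tilde W)\bigl(f_{\le l}(x, \tilde W)\bigr),
\end{equation*}
where the last equality is the first displayed identity. The right-hand side is exactly $f(x, \tilde W) = \tilde f(x)$. Hence $S(f, \tilde f, l, \varphi) = \tilde f$ as functions on $\RR^{n_0}$, so in particular on the validation set, giving $\Acc S(f, \tilde f, l, \varphi) = \Acc \tilde f$. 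Combined with $\Acc \tilde f = \Acc f$ (both networks realize the same function by \cref{lem:comm-w-sig}), this is \cref{eq:happy-stitch} with equality.
\end{proof}

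There is essentially no obstacle here: the content is entirely bookkeeping with the intertwiner identity, and the only thing to be careful about is matching the indexing conventions --- making sure the $\phi_\sigma(A_l)$ that appears when we strip the early layers of the first network is the same group element that appears (inverted) when we strip the late layers of the second, which is precisely what \cref{lem:comm-w-sig} guarantees. One should also note explicitly that the theorem is an achievability (existence of a good $\varphi \in \mathcal{S}$) statement, not a claim that the trained optimum equals this value; and that if $\mathcal{S}$ is taken to be $G_{\sigma_{n_l}}$ itself rather than $\phi_\sigma(G_{\sigma_{n_l}})$, the same argument applies verbatim for activations (such as $\relu$) where $\phi_\sigma = \mathrm{id}$, which is the case relevant to the experiments.
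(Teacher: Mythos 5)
Your proposal is correct and follows essentially the same route as the paper: both invoke \cref{lem:comm-w-sig} and choose $\varphi = \phi_\sigma(A_l)$ so that the stitched network coincides, as a function, with $f = \tilde f$, giving \cref{eq:happy-stitch} with equality. The only cosmetic difference is that you rewrite the composite via $f_{\leq l}(x,\tilde W)$ to land on $\tilde f$, whereas the paper cancels $\phi_\sigma(A_l^{-1})\circ\phi_\sigma(A_l)$ to land on $f$ (and also notes the symmetric choice $\varphi = \phi_\sigma(A_l^{-1})$); these are equivalent since $f$ and $\tilde f$ are equal as functions.
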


Motivated by \cref{thm:min-stitch}, we attempt to stitch various networks at
\(\relu\) activation layers using the group \(G_{\relu}\) described in
Figure \ref{fig-intertwiner-groups}. Every matrix \(A \in G_{\relu}\) can be written as
\(PD\), where \(P\) is a permutation matrix and \(D\) is diagonal with positive
diagonal entries --- hence optimization over \(G_{\relu}\) requires optimizing
over permutation matrices. We use the well-known convex
relaxation of permutation matrices to doubly stochastic matrices and describe our  optimization procedure in greater detail in \ref{appendix-optimization-perm}.


\Cref{fig:stitch-penalties} gives the difference between the average test error of Myrtle CNN networks $f$ and $\tilde{f}$ and the network $S(f,\tilde{f},l, \varphi)$, which we call the \textbf{stitching penalty}:
\begin{equation}
\label{eq-stitching-penalty}
\frac{\Acc(f) + \Acc(\tilde{f})}{2} - \Acc(S(f,\tilde{f},l,\varphi)).
\end{equation}
In our experiments $S(f,\tilde{f}, l, \varphi)$ was stitched together at layer \(l\) via a stitching transformation \( \varphi \)  that was either optimized over all affine transformations, reduced rank affine transformations as in \cite{csiszarikSimilarityMatchingNeural2021} or transformations restricted to $G_{\relu}$. We consider only the \(\relu\) activation layers, as these are the only layers where the theory of \cref{sec:notation} applies, and we only act on the channel tensor dimension --- in practice, this is accomplished by means of 1-by-1 convolution operations. In particular, with $G_{\relu}$  we are \emph{only permuting and scaling channels}. Lower values indicate that the stitching layer was sufficient to translate between the internal representation of $f$ at layer $l$ and the internal representation of $\tilde{f}$. 

We find that when we learn a stitching layer over arbitrary affine transformations of channels, we can nearly achieve the accuracy of the original models. When we only optimize over $G_{\relu}$ there is an appreciable increase in test error difference. This is consistent with findings in
\cite{liConvergentLearningDifferent2015,wangUnderstandingLearningRepresentations2018,csiszarikSimilarityMatchingNeural2021}
discussed in \cref{sec:related}, and also consistent with observations that
hidden features of neural networks exhibit distributed representations and
polysemanticism \cite{olah2020zoom}. Nonetheless, that $S(f,\tilde{f},l, \varphi)$ is able get within less than $10\%$ of the accuracy of $f$ and $\tilde{f}$ in all but one layer suggests that elements of $G_{\relu}$ can account for a substantial amount of the variation in the internal representations of independently trained networks. We include the reduced rank transformations as the dimension of their parameter spaces is greater than that of \( G_{\relu} \), and yet they incur significantly higher stitching penalties. If \(  n_l \) is the number of channels, we have \( \dim G_{\relu_{n_l}} = n_l \) whereas the dimension of rank \( r \) transformations is \( 2n_l \cdot r -r^2 \) (hence greater than \( \dim G_{\relu_{n_l}}\) even for \( r=1\)).\footnote{A valid concern is that the preceding analysis underestimates the size of \( G_{\relu} \) by ignoring a large discrete factor: \(G_{\relu_{n_l}} \) has \(n_l !\) connected components. In \cref{sec:epsnets} we carry out a comparison of the sizes of the parameter spaces of \(G_{\relu_{n_l}} \) and reduced rank transformations inspired by the machinery of \(\epsilon\)-nets, obtaining the same conclusion that even the space of rank 1 transformations is larger than \( G_{\relu} \).} 
Finally, in the specific case of the Myrtle CNNs the stitching penalties incurred when using any layer other than 1-by-1 convolution with a rank 1 matrix all follow similar trends: they increase up to the third activation layer, then decrease at the final activation layer. 

Further stitching results on the ResNet20 architecture can be found in \cref{sec:stitching-resnets},
including an experiment where we modify the architecture to have \(\mathrm{LeakyReLU}\) activation functions, vary the negative slope of the \(\mathrm{LeakyReLU}\), and find similar stitching penalties up to but not including a slope of 1.  This result is consistent with our calculations in \cref{fig-intertwiner-groups}, where we find that for any \(\mathrm{LeakyReLU}\) negative slope \(\neq 1\) the intertwiner is the same as \(G_{\relu} \) (when the negative slope is \(1\), \(\mathrm{LeakyReLU}(x) = x\) and so the intertwiner is all of \(\mathrm{GL}_n\)).

\omitt{The fact that we are unable
to recover full accuracy in our stitched networks is consistent with findings in
\cite{liConvergentLearningDifferent2015,wangUnderstandingLearningRepresentations2018,csiszarikSimilarityMatchingNeural2021}
discussed in \cref{sec:related}, and is also consistent with observations that
hidden features of neural networks exhibit distributed representations and
polysemanticism \cite{olah2020zoom}. }



\section{Dissimilarity measures for the intertwiner group of \(\relu\)}
\label{sec:shapemetrics}

Stitching penalties can be viewed as task oriented measures of hidden feature dissimilarity. From a different perspective, we can consider raw statistical measures of hidden feature dissimilarity. In the design of measures of dissimilarity, a crucial choice is the group of transformations under which the dissimilarity measure is invariant. For example, Centered Kernel Alignment (\(\CKA\)) \cite{kornblithSimilarityNeuralNetwork2019} with the dot product kernel
is invariant with respect to orthogonal
transformations and isotropic scaling. We ask
for a statistical dissimilarity metric \(\mu\) on datasets \(X, Y \in \RR^{N
\times d}\) with the properties that (0) \( 0 \leq \mu(X, Y) \leq 1  \),  (i) (\emph{\(G_{\relu}\)-Invariance}) If \(A, B \in G_{\relu_d}\) and \(v, w \in \RR^d\) then
  \(\mu(XA +\mathbf{1}v^T, YB + \mathbf{1}w^T)= \mu(X, Y)\), and
(ii) (\emph{Alignment Property}) \(\mu(X, Y) = 1\)  if \((\ast)\) \(Y = XA + \mathbf{1}v^T \text{  for some  } A \in
G_{\relu_d}, v\in \RR^d \). To motivate this question, we note that given such a metric \(\mu\), one can detect if \( X\) and \(Y\) do \emph{not} differ by an element of \(G_{\relu}\) by checking if \(\mu(X, Y) < 1\). 
Our basic tool for ensuring (i) is the next lemma.
\begin{lemma}
  \label{lem:iterated-quotient}
  Suppose \(\mu(XA, YB)= \mu(X, Y)\) if \(A, B\) are \emph{either} positive
  diagonal matrices or permutation matrices. Then, (i) holds.
\end{lemma}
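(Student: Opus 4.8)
The plan is to deduce invariance of $\mu$ under all of $G_{\relu_d}$ (together with the affine translations) by factoring an arbitrary element of $G_{\relu_d}$ into the two families of transformations that the hypothesis already controls. The key structural fact, read off from \cref{fig-intertwiner-groups}, is that every $A\in G_{\relu_d}$ can be written as $A=PD$ with $P$ a permutation matrix and $D$ a diagonal matrix with strictly positive diagonal entries; equivalently $G_{\relu_d}$ is the internal semidirect product of its subgroup of positive diagonal matrices by its subgroup of permutation matrices, which is the group-theoretic picture behind the name ``iterated quotient.''

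First I would reduce to the translation-free statement. For any invertible $A$ one has the identity
\begin{equation*}
  XA+\mathbf{1}v^T=\bigl(X+\mathbf{1}((A^{-1})^Tv)^T\bigr)A,
\end{equation*}
since $((A^{-1})^Tv)^TA=v^TA^{-1}A=v^T$, and similarly $YB+\mathbf{1}w^T=(Y+\mathbf{1}((B^{-1})^Tw)^T)B$. Hence it is enough to establish (i) in the case $v=w=0$ together with ordinary translation invariance $\mu(X+\mathbf{1}v^T,Y+\mathbf{1}w^T)=\mu(X,Y)$; the latter is a standing property of the dissimilarity measures of this section, exactly as for $\CKA$ (both are computed from mean-centered features), so I would simply record it alongside the hypotheses of the lemma.

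It then remains to show $\mu(XA,YB)=\mu(X,Y)$ for all $A,B\in G_{\relu_d}$. Writing $A=P_AD_A$ and $B=P_BD_B$, I would apply the hypothesis twice. Since $\mu(X'A',Y'B')=\mu(X',Y')$ holds for \emph{all} data matrices $X',Y'$ whenever $A',B'$ are positive diagonal, taking $X'=XP_A$ and $Y'=YP_B$ gives $\mu(XA,YB)=\mu((XP_A)D_A,(YP_B)D_B)=\mu(XP_A,YP_B)$; a second application, now with the permutation matrices $P_A,P_B$ and base points $X,Y$, gives $\mu(XP_A,YP_B)=\mu(X,Y)$, completing the argument.

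I do not anticipate a genuine obstacle. The only points that need care are the bookkeeping showing the translation term $\mathbf{1}v^T$ can be pushed through the linear factor and then absorbed by translation invariance, and the (routine but essential) observation that invariance under each of the two generating families of $G_{\relu_d}$ — which is all the hypothesis supplies — suffices precisely because those families generate the whole group via the $PD$ factorization.
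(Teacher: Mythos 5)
Your argument is essentially the paper's own proof: both rest on the factorization $A=PD$ (permutation times positive diagonal matrix, read off from \cref{fig-intertwiner-groups}) and then apply the hypothesis once for the diagonal factors and once for the permutation factors, via the same chain $\mu(XA,YB)=\mu(XP_AD_A,YP_BD_B)=\mu(XP_A,YP_B)=\mu(X,Y)$. The only difference is that the paper's proof silently drops the translation terms of property (i) (implicitly relying on the mean-centering/normalization built into the metrics), whereas your identity $XA+\mathbf{1}v^T=\bigl(X+\mathbf{1}((A^{-1})^Tv)^T\bigr)A$ together with an explicitly recorded translation-invariance assumption handles that point more carefully.
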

In effect, this allows us to divide the columns of \(X\) and \(Y\) by their norms to achieve
invariance to the action of positive diagonal matrices and then apply dissimilarity measures
for the permutation group such as those presented in \cite{williamsGeneralizedShapeMetrics2022}. Ensuring (ii) seems to require
case-by-case analysis to determine an appropriate normalization constant.
\begin{definition}[\(G_{\relu}\)-Procrustes]
  Let \(D_X = \diag (\nrm{ X_{[:, i]}})\) and \(D_Y = \diag (\nrm{Y_{[:,
  i]}})\). Assuming these are invertible, let  $\tilde{X} = X D_X^{-1}$ and $\tilde{Y} = Y D_Y^{-1}$.
  Let \(\delta\) be the permutation Procrustes distance between \(\tilde{X},
  \tilde{Y}\), defined by 
  \(\delta := \min_{P\in \Sigma_d} \nrm{\tilde{X}- \tilde{Y}P}\)
  (as pointed out in \cite{williamsGeneralizedShapeMetrics2022} this can be
  computed via the linear sum assignment problem). Then the \textbf{$G_{\relu}$-Procrustes measure} is
  \[ \mu_{\text{Procrustes}}(X, Y) := 1 - \frac{\delta}{2\sqrt{d}}.\] 
\end{definition}
The factor
of \(2\sqrt{d}\) ensures this lies in \([0,1]\), and equals \(1\) if (and only if)
the condition \(\ast\) of (ii) holds. 

\begin{table}[h] 
    \centering
    \begin{tabular}{lllll}
\toprule
{} &          layer 3 &          layer 6 &         layer 10 &         layer 14 \\
\midrule
$G_{\relu}$       &  0.6208 \(\pm\) 0.008 &  0.5106 \(\pm\) 0.005 &  0.4432 \(\pm\) 0.004 &  0.4899 \(\pm\) 0.002 \\
Orthogonal &  0.7724 \(\pm\) 0.028 &  0.5743 \(\pm\) 0.040 &  0.5087 \(\pm\) 0.016 &  0.5825 \(\pm\) 0.019 \\
\bottomrule
\end{tabular}
    \caption[]{\(G_{\relu}\) and orthogonal Procrustes similarities for Myrtle CNNs trained on CIFAR-10. Confidence intervals were obtained by evaluating similarities for 32 pairs models trained with different random seeds. 
    }\label{fig:wreath-procrustes}
\end{table}

We apply $G_{\relu}$-Procrustes and orthogonal Procrustes similarities to $4$ different hidden representations from Myrtle CNNs  in \cref{fig:wreath-procrustes} and many more layers of ResNet20s in \cref{fig:rn-pro}, all trained on CIFAR-10 \cite{Krizhevsky09learningmultiple}.  In keeping with the discussion of \cref{sect-stitching}, we only consider permutations or orthogonal transformations of channels (for details on how this is implemented we refer to \cref{sec:dissimilarity-details}). We see that distinct representations register less similarity in terms of $G_{\relu}$-Procrustes than they do in terms of orthogonal Procrustes. This makes sense as similarity up to $G_{\relu}$-transformation requires a greater degree of absolute similarity between representations than is required of similarity up to orthogonal transformation (the latter being a higher-dimensional group containing all of the permutations in $G_{\relu}$). Otherwise patterns in $G_{\relu}$-Procrustes similaritiy largely follow those of orthogonal Procrustes, with similarity between representations decreasing as one progresses through the network, only to increase again in the last layer. This correlates with the stitching penalties of \cref{fig:stitch-penalties}, which increase with depth only to decrease in the last layer.

\begin{definition}[\(G_{\relu}\)-CKA]
  Assume that \(X\) and \(Y\) are data matrices that have been centered by subtracting means of rows: $X \gets X - \frac{1}{d}\mathbf{1}\mathbf{1}^T X$ and $Y \gets Y - \frac{1}{d}\mathbf{1}\mathbf{1}^T Y$. Let \( \tilde{X} = X D_X^{-1} \text{ and } \tilde{Y} = Y D_Y^{-1}
  \). Let \(\tilde{x}_1, \dots, \tilde{x}_N\) be the rows of \(\tilde{X}\), and
  similarly for \(\tilde{Y}\). Form the matrices \(K, L \in \RR_{\geq 0}^{N
  \times N}\) defined by $K_{ij} = \max(\tilde{x}_i \odot \tilde{x}_j)$
  and $L_{ij} = \max(\tilde{y}_i \odot \tilde{y}_j)$
  where $\odot$ is the Hadamard product. Then the $G_{\relu}$-CKA for $X$ and $Y$ is defined as: 
  \begin{equation}
  \label{eq:muCKA}
  \mu_{\text{CKA}}(X, Y) := \frac{\mathrm{HSIC}_1(K, L)}{\sqrt{\mathrm{HSIC}_1(K, K)}\sqrt{\mathrm{HSIC}_1(L, L)}}.
  \end{equation}
  where \( \mathrm{HSIC}_1 \) is the unbiased form of the Hilbert-Schmidt independence criterion of \cite[eq. 3]{nguyenWideDeepNetworks2021}.
  \end{definition}
  
  Symmetry of the \(\max\) function ensures (i), the
  Cauchy-Schwarz inequality ensures \(\mu_{\text{CKA}}(X, Y) \in [0,1]\), and we
  claim that \(\mu_{\text{CKA}}(X, Y) = 1\) if the condition \(\ast\) of
  (ii) is met. We do not claim `if and only if', however we point
  out the following in \cref{lem:max-symmetries}: if \(A\) is
  a matrix such that \(\max(Ax_1 \odot Ax_2) = \max(x_1 \odot x_2)\) for all \(x_1, x_2
  \in \RR^d\), then \(A \) is of the form \(PD\) where \(P\) is a permutation
  matrix and \(D\) is diagonal with diagonal entries in \(\{\pm 1\}\). In fact, \(\mu_{\text{CKA}}\) is simply an instance of CKA for a the ``\( \max \) kernel.''

  \begin{remark}
    \label{rmk:oops}
    In a previous version of this paper, it was incorrectly claimed that ``the
    function \(\kappa : \RR^d \times \RR^d \to \RR\) defined by \(\kappa(x, y) =
    \max(x\odot y)\) is a positive semi-definite (PSD) kernel.'' In fact it is not
    positive semi definite; a small counterexample courtesy of Derek Lim can be
    found in \cref{ex:oops}. While it is still possible for us to experimentally
    use the framework of CKA with the (not PSD) kernel
    \(\kappa(x, y) = \max(x\odot y)\), the theoretical underpinnings of CKA
    (e.g. \cite{Gretton2005MeasuringSD,cortes2012algorithms}) have been
    developed for PSD kernels, and in this sense our \(G_{\relu}\)-CKA dissimilarity
    measure is a somewhat non-standard and rule-bending instance of CKA. 

    It is worth noting that there are plenty of PSD kernel functions with the
    same symmetry properties as the \(\max\) kernel used in this paper. One
    example is described as follows:
    denoting the \(\ell^1\)-norm by \(\nrm{(x_1,\dots, x_d)}_1 =
    \sum_i \nrm{x_i} \), define 
    \begin{equation}
      \kappa(x, y) = \exp(-c \nrm{x-y}_1).
    \end{equation}
    The symmetry properties of \(\kappa\) are inherited from those of
    \(\nrm{x}_1\), and the PSD-ness of \(\kappa\) is
    well-known.\footnote{A neat proof is sketched
    \href{https://math.stackexchange.com/questions/2412421/is-the-matrix-of-exponential-kernel-with-l1-norm-positive-definite}{here}
   (see also \cite{kmml,micchelli06a} for further details). The function \(\kappa\) is often referred to as the ``Laplace
    kernel'' in the machine learning literature.} We suspect that replacing the
    \(\ell^1\)-norm with the \(\ell^p\)-norm \(\nrm{(x_1,\dots, x_d)}_p =
    (\sum_i \nrm{x_i}^p)^{\frac{1}{p}} \) for any \(p\neq 2\) provides a larger
    family of examples, and these do indeed have the same symmetry properties as
    the \(\max\) kernel and the \(p=1\) case above, but we were unable to locate
    a proof that these kernels are PSD, and we don't attempt a proof here.
  \end{remark}
  
  As with CKA \cite{kornblithSimilarityNeuralNetwork2019}, this metric makes
  sense even if \(X, Y\) are datasets in \(\RR^d, \RR^{d'}\) respectively with
  \(d \neq d'\). Results for a pair of Myrtle CNNs trained on CIFAR-10 with different random seeds, as well as standard orthogonal CKA for comparison, are shown in \cref{fig:wreath-cka}. Analogous results for ResNet20s  are shown in \cref{fig:wreath-cka-resnet}. We find that $G_{\relu}$-CKA respects basic trends found in their orthogonal counterparts: model layers at the same depth are more similar, early layers are highly similar, and the metric surfaces the block structure of the ResNet in \cref{fig:wreath-cka-resnet} (layers inside residual blocks are less similar than those at residual connections). One notable difference for $G_{\relu}$-CKA in \cref{fig:wreath-cka-resnet,fig:wreath-cka} is that the similarity difference between early and later layers in the orthogonal CKA (discussed for ResNets in \cite{raghu2021vision}) shown in (b) is less pronounced in (a), and in fact later layers are found to be less similar between runs. We found similar results for stitching in \cref{fig:stitch-penalties,fig:stitch-penalties-resnet}.
  

\begin{figure}
    \begin{subfigure}[h]{0.5\linewidth}
        \centering
        \includegraphics[width=\linewidth]{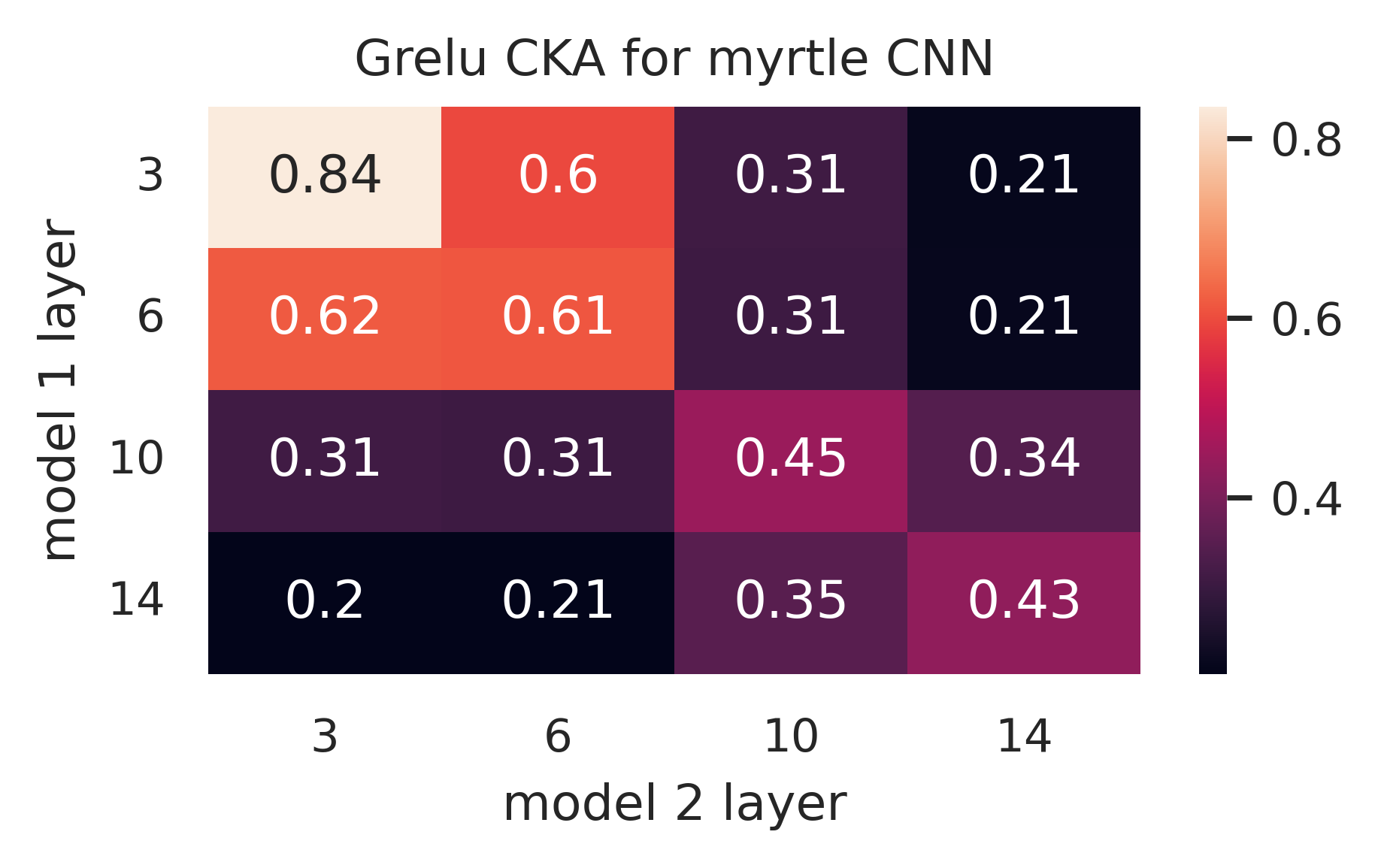}
    \end{subfigure}
    \begin{subfigure}[h]{0.5\linewidth}
        \centering
        \includegraphics[width=\linewidth]{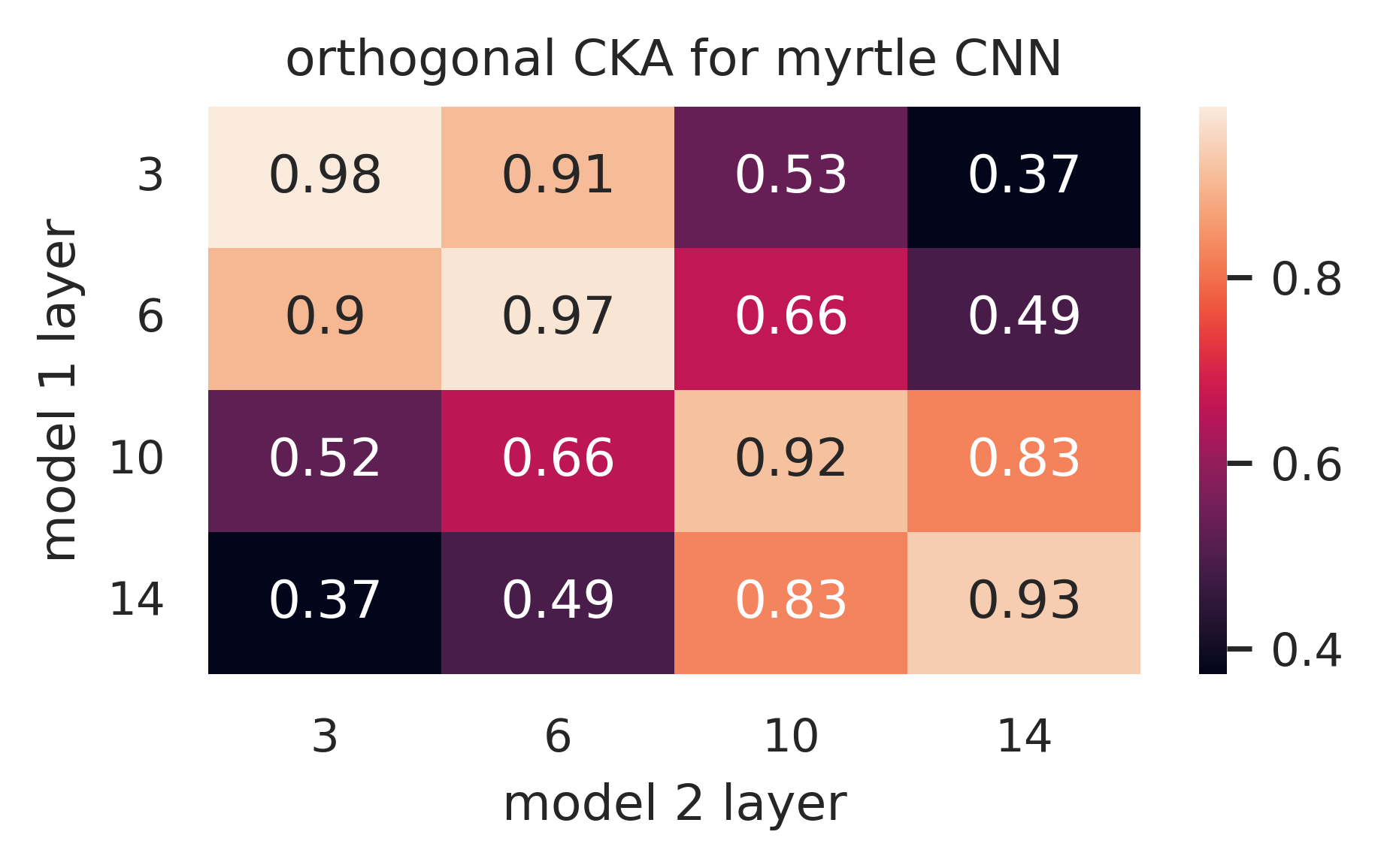}
    \end{subfigure}
    \caption{\(G_{\relu}\)-CKA and orthogonal CKA for two Myrtle CNNs with different random seeds trained on CIFAR-10. Results averaged over 16 such pairs of models 
    .}\label{fig:wreath-cka}
\end{figure}

\section{Interpretability of the coordinate basis}
\label{sec:net-dissect}
In this section we explore the confluence of model interpretability and intertwiner symmetries using \textit{network dissection} from
\cite{Bau2017NetworkDQ}. Network dissection measures
alignment between the individual neurons of a hidden layer and single,
pre-defined concepts (see \cref{sec:net-d-methodology} for the methodology). We adapt an experiment from \cite{Bau2017NetworkDQ} to compare the axis-aligned interpretability of hidden activation layers with and without an activation function. Bau et al. compares the interpretability of individual neurons, measured via network dissection, with that of random orthogonal rotations of neurons. We likewise rotate the hidden layer representations and then measure their interpretability. Using the methodology from \cite{diaconis2005random}, we define a random orthogonal transform $Q$ drawn uniformly from $SO(n)$ by using Gram-Schmidt to orthonormalize the normally-distributed $Q R=A
\in \RR^{n^{2}}$. Like in \cite{Bau2017NetworkDQ}, we also
consider smaller rotations $Q^{\alpha} \in S O(n)$ where $0 \leq \alpha \leq 1$,
where $\alpha$ is chosen to form a minimal geodesic rotating from \(I\) to
\(Q\). \cite{Bau2017NetworkDQ} found that the number of interpretable units decreased away from the activation basis as $\alpha$ increased for $\text{layer5}$ of an AlexNet.

We compare three models trained on ImageNet: a ResNet-50, a modified ResNet-50 where we remove the ReLU on the residual outputs (training details in \cref{sec:net-d-train}), and a ConvNeXt \cite{liu2022convnet} analog of the ResNet-50, which also does not have an activation function
before the final residual output. We give results in \cref{fig:net-d1}, and provide sample unit detection outputs and full concept labels for the figures in \cref{sec:net-d-further}. As was shown in \cite{Bau2017NetworkDQ}, interpretability decreases as we rotate away from the axis for the normal ResNet-50 in \cref{sec:net-d-train}. On the other hand, with no activation function, neuron interpretability does
not drop with rotation for the modified ResNet-50 and the ConvNeXt. 
We note that the models without residual activation functions also have far fewer concept covering units for a given basis.
Interestingly, while the number of interpretable units remains constant for the residual output of the modified ResNet-50, for the ConvNeXt model it actually \textit{increases}. We find similar results, where the number of interpretable units increase with rotation, for the convolutional layer inside the residual block for the modified ResNet-50 in \cref{fig:net-d2}. 


\section{Limitations}
\label{sect-limitations}


Our theoretical analysis in \cref{sec:notation} does not account for standard regularization techniques that are known to have symmetry-breaking effects (for example weight decay reduces scaling symmetry). More generally, we do not account for any implicit regularization of our training algorithms. As illustrated in \cref{fig:stitch-penalties,fig:stitch-penalties-resnet}, stitching  with intertwiner groups appears to have significantly more architecture-dependent behaviour than stitching with arbitrary affine transformations (however, since different architectures have different symmetries this is to be expected). Our empirical tests of the dissimilarity measures in \cref{sec:shapemetrics} are limited to what \cite{kornblithSimilarityNeuralNetwork2019} terms ``sanity tests''; in particular we did not perform the specificity, sensitivity and quality tests of \cite{dingGroundingRepresentationSimilarity2021}.

\section{Conclusion}
\label{sec:concl}

In this paper we describe groups of symmetries that arise from the nonlinear layers of a neural network, calculate these symmetry groups for a number of different types of nonlinearities, and explore their fundamental properties and connection to weight space symmetries. Next, we provide evidence that these symmetries induce symmetries in a network's internal representation of the data that it processes, showing that previous work on the internal representations of neural networks can be naturally adapted to incorporate awareness of the intertwiner groups that we identify. Finally, in the special case where the network in question has ReLU nonlinearities, we find experimental evidence that intertwiner groups justify the special place of the activation basis within interpretable AI research.

\section{Acknowledgements}
\label{sec:ack}

This research was supported by the Mathematics for Artificial Reasoning in Science (MARS) initiative at Pacific Northwest National Laboratory.
It was conducted under the Laboratory Directed Research and Development (LDRD) Program at at Pacific Northwest National Laboratory (PNNL), a multiprogram
National Laboratory operated by Battelle Memorial Institute for the U.S. Department of Energy under Contract
DE-AC05-76RL01830.

The authors would also like to thank Nikhil Vyas for useful discussions related
to this work and Derek Lim for pointing out that the \(\max \) kernel introduced
in \cref{sec:shapemetrics} is not positive definite.

\printbibliography

@book{Goodfellow-et-al-2016,
    title={Deep Learning},
    author={Ian Goodfellow and Yoshua Bengio and Aaron Courville},
    publisher={MIT Press},
    note={\url{http://www.deeplearningbook.org}},
    year={2016}
}

@article{badrinarayananUnderstandingSymmetriesDeep2015,
  title = {Understanding Symmetries in Deep Networks},
  author = {Badrinarayanan, Vijay and Mishra, Bamdev and Cipolla, R.},
  date = {2015},
  journaltitle = {ArXiv},
  abstract = {This work shows that a commonly used deep network, which uses convolution, batch normalization, reLU, max-pooling, and sub-sampling pipeline, possess more complex forms of symmetry arising from scaling-based reparameterization of the network weights. Recent works have highlighted scale invariance or symmetry present in the weight space of a typical deep network and the adverse effect it has on the Euclidean gradient based stochastic gradient descent optimization. In this work, we show that a commonly used deep network, which uses convolution, batch normalization, reLU, max-pooling, and sub-sampling pipeline, possess more complex forms of symmetry arising from scaling-based reparameterization of the network weights. We propose to tackle the issue of the weight space symmetry by constraining the filters to lie on the unit-norm manifold. Consequently, training the network boils down to using stochastic gradient descent updates on the unit-norm manifold. Our empirical evidence based on the MNIST dataset shows that the proposed updates improve the test performance beyond what is achieved with batch normalization and without sacrificing the computational efficiency of the weight updates.},
  file = {/Users/godf974/OneDrive - PNNL/Documents/Papers/DS/Badrinarayanan et al_2015_Understanding symmetries in deep networks.pdf}
}

@article{Bau2017NetworkDQ,
  title={Network Dissection: Quantifying Interpretability of Deep Visual Representations},
  author={David Bau and Bolei Zhou and Aditya Khosla and Aude Oliva and Antonio Torralba},
  journal={2017 IEEE Conference on Computer Vision and Pattern Recognition (CVPR)},
  year={2017},
  pages={3319-3327}
}

@article{breaWeightspaceSymmetryDeep2019,
  title = {Weight-Space Symmetry in Deep Networks Gives Rise to Permutation Saddles, Connected by Equal-Loss Valleys across the Loss Landscape},
  author = {Brea, Johanni and Simsek, Berfin and Illing, Bernd and Gerstner, Wulfram},
  date = {2019-07-05},
  eprint = {1907.02911},
  eprinttype = {arxiv},
  primaryclass = {cs, stat},
  url = {http://arxiv.org/abs/1907.02911},
  urldate = {2022-01-12},
  abstract = {The permutation symmetry of neurons in each layer of a deep neural network gives rise not only to multiple equivalent global minima of the loss function, but also to first-order saddle points located on the path between the global minima. In a network of \$d-1\$ hidden layers with \$n\_k\$ neurons in layers \$k = 1, \textbackslash ldots, d\$, we construct smooth paths between equivalent global minima that lead through a `permutation point' where the input and output weight vectors of two neurons in the same hidden layer \$k\$ collide and interchange. We show that such permutation points are critical points with at least \$n\_\{k+1\}\$ vanishing eigenvalues of the Hessian matrix of second derivatives indicating a local plateau of the loss function. We find that a permutation point for the exchange of neurons \$i\$ and \$j\$ transits into a flat valley (or generally, an extended plateau of \$n\_\{k+1\}\$ flat dimensions) that enables all \$n\_k!\$ permutations of neurons in a given layer \$k\$ at the same loss value. Moreover, we introduce high-order permutation points by exploiting the recursive structure in neural network functions, and find that the number of \$K\^\{\textbackslash text\{th\}\}\$-order permutation points is at least by a factor \$\textbackslash sum\_\{k=1\}\^\{d-1\}\textbackslash frac\{1\}\{2!\^K\}\{n\_k-K \textbackslash choose K\}\$ larger than the (already huge) number of equivalent global minima. In two tasks, we illustrate numerically that some of the permutation points correspond to first-order saddles (`permutation saddles'): first, in a toy network with a single hidden layer on a function approximation task and, second, in a multilayer network on the MNIST task. Our geometric approach yields a lower bound on the number of critical points generated by weight-space symmetries and provides a simple intuitive link between previous mathematical results and numerical observations.},
  archiveprefix = {arXiv},
  keywords = {Computer Science - Machine Learning,Statistics - Machine Learning},
  file = {/Users/godf974/OneDrive - PNNL/Documents/Papers/DS/Brea et al_2019_Weight-space symmetry in deep networks gives rise to permutation saddles,.pdf;/Users/godf974/Zotero/storage/N9IQUZ3X/1907.html}
}

@article{freemanTopologyGeometryHalfRectified2017,
  title={Topology and Geometry of Half-Rectified Network Optimization},
  author={C. Daniel Freeman and Joan Bruna},
  journal={ArXiv},
  year={2017},
  volume={abs/1611.01540}
}

@online{kuninNeuralMechanicsSymmetry2021,
  title = {Neural {{Mechanics}}: {{Symmetry}} and {{Broken Conservation Laws}} in {{Deep Learning Dynamics}}},
  shorttitle = {Neural {{Mechanics}}},
  author = {Kunin, Daniel and Sagastuy-Brena, Javier and Ganguli, Surya and Yamins, Daniel L. K. and Tanaka, Hidenori},
  date = {2021-03-29},
  eprint = {2012.04728},
  eprinttype = {arxiv},
  primaryclass = {cond-mat, q-bio, stat},
  url = {http://arxiv.org/abs/2012.04728},
  urldate = {2022-01-12},
  abstract = {Understanding the dynamics of neural network parameters during training is one of the key challenges in building a theoretical foundation for deep learning. A central obstacle is that the motion of a network in high-dimensional parameter space undergoes discrete finite steps along complex stochastic gradients derived from real-world datasets. We circumvent this obstacle through a unifying theoretical framework based on intrinsic symmetries embedded in a network's architecture that are present for any dataset. We show that any such symmetry imposes stringent geometric constraints on gradients and Hessians, leading to an associated conservation law in the continuous-time limit of stochastic gradient descent (SGD), akin to Noether's theorem in physics. We further show that finite learning rates used in practice can actually break these symmetry induced conservation laws. We apply tools from finite difference methods to derive modified gradient flow, a differential equation that better approximates the numerical trajectory taken by SGD at finite learning rates. We combine modified gradient flow with our framework of symmetries to derive exact integral expressions for the dynamics of certain parameter combinations. We empirically validate our analytic expressions for learning dynamics on VGG-16 trained on Tiny ImageNet. Overall, by exploiting symmetry, our work demonstrates that we can analytically describe the learning dynamics of various parameter combinations at finite learning rates and batch sizes for state of the art architectures trained on any dataset.},
  archiveprefix = {arXiv},
  keywords = {Computer Science - Machine Learning,Condensed Matter - Disordered Systems and Neural Networks,Condensed Matter - Statistical Mechanics,Quantitative Biology - Neurons and Cognition,Statistics - Machine Learning},
  file = {/Users/godf974/OneDrive - PNNL/Documents/Papers/DS/Kunin et al_2021_Neural Mechanics.pdf;/Users/godf974/Zotero/storage/34XSYNDN/2012.html}
}

@inproceedings{mengGSGDOptimizingReLU2019,
  title = {G-{{SGD}}: {{Optimizing ReLU Neural Networks}} in Its {{Positively Scale-Invariant Space}}},
  shorttitle = {G-{{SGD}}},
  booktitle = {{{ICLR}}},
  author = {Meng, Qi and Zheng, Shuxin and Zhang, Huishuai and Chen, Wei and Ye, Q. and Ma, Zhi-Ming and Yu, Nenghai and Liu, Tie-Yan},
  date = {2019},
  abstract = {A formal study on the positive scaling operators which forms a transformation group, denoted as G, and it is proved that the value of a path in the neural network is invariant to positive scaling and the value vector of all the paths is sufficient to represent the neural networks under mild conditions. It is well known that neural networks with rectified linear units (ReLU) activation functions are positively scale-invariant. Conventional algorithms like stochastic gradient descent optimize the neural networks in the vector space of weights, which is, however, not positively scale-invariant. This mismatch may lead to problems during the optimization process. Then, a natural question is: can we construct a new vector space that is positively scale-invariant and sufficient to represent ReLU neural networks so as to better facilitate the optimization process? In this paper, we provide our positive answer to this question. First, we conduct a formal study on the positive scaling operators which forms a transformation group, denoted as G. We prove that the value of a path (i.e. the product of the weights along the path) in the neural network is invariant to positive scaling and the value vector of all the paths is sufficient to represent the neural networks under mild conditions. Second, we show that one can identify some basis paths out of all the paths and prove that the linear span of their value vectors (denoted as G-space) is an invariant space with lower dimension under the positive scaling group. Finally, we design stochastic gradient descent algorithm in G-space (abbreviated as G-SGD) to optimize the value vector of the basis paths of neural networks with little extra cost by leveraging back-propagation. Our experiments show that G-SGD significantly outperforms the conventional SGD algorithm in optimizing ReLU networks on benchmark datasets.},
  file = {/Users/godf974/OneDrive - PNNL/Documents/Papers/DS/Meng et al_2019_G-SGD.pdf}
}

@inproceedings{rolnickReverseengineeringDeepReLU2020,
  title = {Reverse-Engineering Deep {{ReLU}} Networks},
  booktitle = {{{ICML}}},
  author = {Rolnick, D. and Kording, Konrad Paul},
  date = {2020},
  abstract = {It is proved that in fact it is often possible to identify the architecture, weights, and biases of an unknown deep ReLU network by observing only its output, and that it is possible to recover the weights of neurons and their arrangement within the network up to isomorphism. It has been widely assumed that a neural network cannot be recovered from its outputs, as the network depends on its parameters in a highly nonlinear way. Here, we prove that in fact it is often possible to identify the architecture, weights, and biases of an unknown deep ReLU network by observing only its output. Every ReLU network defines a piecewise linear function, where the boundaries between linear regions correspond to inputs for which some neuron in the network switches between inactive and active ReLU states. By dissecting the set of region boundaries into components associated with particular neurons, we show both theoretically and empirically that it is possible to recover the weights of neurons and their arrangement within the network, up to isomorphism.},
  file = {/Users/godf974/OneDrive - PNNL/Documents/Papers/DS/Rolnick_Kording_2020_Reverse-engineering deep ReLU networks.pdf}
}

@article{yiPositivelyScaleInvariantFlatness2019,
  title = {Positively {{Scale-Invariant Flatness}} of {{ReLU Neural Networks}}},
  author = {Yi, Mingyang and Meng, Qi and Chen, Wei and Ma, Zhi-Ming and Liu, Tie-Yan},
  date = {2019},
  journaltitle = {ArXiv},
  abstract = {The conclusion demonstrates that the existing definitions of flatness fail to account for the complex geometry of ReLU neural networks because they can't cover the Positively Scale-Invariant (PSI) property of Re LU network. It was empirically confirmed by Keskar et al.\textbackslash cite\{SharpMinima\} that flatter minima generalize better. However, for the popular ReLU network, sharp minimum can also generalize well \textbackslash cite\{SharpMinimacan\}. The conclusion demonstrates that the existing definitions of flatness fail to account for the complex geometry of ReLU neural networks because they can't cover the Positively Scale-Invariant (PSI) property of ReLU network. In this paper, we formalize the PSI causes problem of existing definitions of flatness and propose a new description of flatness - \textbackslash emph\{PSI-flatness\}. PSI-flatness is defined on the values of basis paths \textbackslash cite\{GSGD\} instead of weights. Values of basis paths have been shown to be the PSI-variables and can sufficiently represent the ReLU neural networks which ensure the PSI property of PSI-flatness. Then we study the relation between PSI-flatness and generalization theoretically and empirically. First, we formulate a generalization bound based on PSI-flatness which shows generalization error decreasing with the ratio between the largest basis path value and the smallest basis path value. That is to say, the minimum with balanced values of basis paths will more likely to be flatter and generalize better. Finally. we visualize the PSI-flatness of loss surface around two learned models which indicates the minimum with smaller PSI-flatness can indeed generalize better.},
  file = {/Users/godf974/OneDrive - PNNL/Documents/Papers/DS/Yi et al_2019_Positively Scale-Invariant Flatness of ReLU Neural Networks.pdf}
}

@inproceedings{
csiszarikSimilarityMatchingNeural2021,
title={Similarity and Matching of Neural Network Representations},
author={Adri{\'a}n Csisz{\'a}rik and P{\'e}ter K{\H{o}}r{\"o}si-Szab{\'o} and {\'A}kos K. Matszangosz and Gergely Papp and D{\'a}niel Varga},
booktitle={Advances in Neural Information Processing Systems},
editor={A. Beygelzimer and Y. Dauphin and P. Liang and J. Wortman Vaughan},
year={2021},
url={https://openreview.net/forum?id=aedFIIRRfXr}
}

@article{dingGroundingRepresentationSimilarity2021,
  title = {Grounding {{Representation Similarity}} with {{Statistical Testing}}},
  author = {Ding, Frances and Denain, Jean-Stanislas and Steinhardt, J.},
  date = {2021},
  journaltitle = {ArXiv},
  abstract = {This work quantitatively compares different networks’ learned representations using canonical correlation analysis (CCA), centered kernel alignment (CKA), and other dissimilarity measures, and finds that current metrics exhibit different weaknesses, and notes that a classical baseline performs surprisingly well. To understand neural network behavior, recent works quantitatively compare different networks’ learned representations using canonical correlation analysis (CCA), centered kernel alignment (CKA), and other dissimilarity measures. Unfortunately, these widely used measures often disagree on fundamental observations, such as whether deep networks differing only in random initialization learn similar representations. These disagreements raise the question: which, if any, of these dissimilarity measures should we believe? We provide a framework to ground this question through a concrete test: measures should have sensitivity to changes that affect functional behavior, and specificity against changes that do not. We quantify this through a variety of functional behaviors including probing accuracy and robustness to distribution shift, and examine changes such as varying random initialization and deleting principal components. We find that current metrics exhibit different weaknesses, note that a classical baseline performs surprisingly well, and highlight settings where all metrics appear to fail, thus providing a challenge set for further improvement.},
  file = {/Users/godf974/OneDrive - PNNL/Documents/Papers/Gumby/Ding et al_2021_Grounding Representation Similarity with Statistical Testing.pdf}
}

@inproceedings{williamsGeneralizedShapeMetrics2022,
  title={Generalized Shape Metrics on Neural Representations},
  author={Alex H. Williams and Erin Marie O'Mara Kunz and Simon Kornblith and Scott W. Linderman},
  booktitle={NeurIPS},
  year={2021}
}

@inproceedings{Bansal2021RevisitingMS,
  title={Revisiting Model Stitching to Compare Neural Representations},
  author={Yamini Bansal and Preetum Nakkiran and Boaz Barak},
  booktitle={NeurIPS},
  year={2021}
}

@article{kornblithSimilarityNeuralNetwork2019,
  title = {Similarity of {{Neural Network Representations Revisited}}},
  author = {Kornblith, Simon and Norouzi, Mohammad and Lee, Honglak and Hinton, Geoffrey E.},
  date = {2019},
  journaltitle = {ICML},
  abstract = {A similarity index is introduced that measures the relationship between representational similarity matrices and does not suffer from this limitation of CCA. Recent work has sought to understand the behavior of neural networks by comparing representations between layers and between different trained models. We examine methods for comparing neural network representations based on canonical correlation analysis (CCA). We show that CCA belongs to a family of statistics for measuring multivariate similarity, but that neither CCA nor any other statistic that is invariant to invertible linear transformation can measure meaningful similarities between representations of higher dimension than the number of data points. We introduce a similarity index that measures the relationship between representational similarity matrices and does not suffer from this limitation. This similarity index is equivalent to centered kernel alignment (CKA) and is also closely connected to CCA. Unlike CCA, CKA can reliably identify correspondences between representations in networks trained from different initializations.},
  keywords = {Computer Science - Machine Learning,Quantitative Biology - Neurons and Cognition,Statistics - Machine Learning},
  file = {/Users/godf974/OneDrive - PNNL/Documents/Papers/General/Kornblith et al_2019_Similarity of Neural Network Representations Revisited.pdf;/Users/godf974/Zotero/storage/W4MYRV69/1905.html}
}

@inproceedings{liConvergentLearningDifferent2015,
  title = {Convergent {{Learning}}: {{Do}} Different Neural Networks Learn the Same Representations?},
  shorttitle = {Convergent {{Learning}}},
  booktitle = {{{FE}}@{{NIPS}}},
  author = {Li, Yixuan and Yosinski, J. and Clune, J. and Lipson, Hod and Hopcroft, J.},
  date = {2015},
  abstract = {This paper investigates the extent to which neural networks exhibit convergent learning, which is when the representations learned by multiple nets converge to a set of features which are either individually similar between networks or where subsets of features span similar low-dimensional spaces. Recent success in training deep neural networks have prompted active investigation into the features learned on their intermediate layers. Such research is difficult because it requires making sense of non-linear computations performed by millions of parameters, but valuable because it increases our ability to understand current models and create improved versions of them. In this paper we investigate the extent to which neural networks exhibit what we call convergent learning, which is when the representations learned by multiple nets converge to a set of features which are either individually similar between networks or where subsets of features span similar low-dimensional spaces. We propose a specific method of probing representations: training multiple networks and then comparing and contrasting their individual, learned representations at the level of neurons or groups of neurons. We begin research into this question using three techniques to approximately align different neural networks on a feature level: a bipartite matching approach that makes one-to-one assignments between neurons, a sparse prediction approach that finds one-to-many mappings, and a spectral clustering approach that finds many-to-many mappings. This initial investigation reveals a few previously unknown properties of neural networks, and we argue that future research into the question of convergent learning will yield many more. The insights described here include (1) that some features are learned reliably in multiple networks, yet other features are not consistently learned; (2) that units learn to span low-dimensional subspaces and, while these subspaces are common to multiple networks, the specific basis vectors learned are not; (3) that the representation codes show evidence of being a mix between a local code and slightly, but not fully, distributed codes across multiple units.},
  file = {/Users/godf974/OneDrive - PNNL/Documents/Papers/General/Li et al_2015_Convergent Learning.pdf}
}

@inproceedings{wangUnderstandingLearningRepresentations2018,
  title = {Towards {{Understanding Learning Representations}}: {{To What Extent Do Different Neural Networks Learn}} the {{Same Representation}}},
  shorttitle = {Towards {{Understanding Learning Representations}}},
  booktitle = {{{NeurIPS}}},
  author = {Wang, Liwei and Hu, Lunjia and Gu, Jia-Yuan and Wu, Y. and Hu, Zhiqiang and He, Kun and Hopcroft, J.},
  date = {2018},
  abstract = {The theory gives a complete characterization of the structure of neuron activation subspace matches, where the core concepts are maximum match and simple match which describe the overall and the finest similarity between sets of neurons in two networks respectively. It is widely believed that learning good representations is one of the main reasons for the success of deep neural networks. Although highly intuitive, there is a lack of theory and systematic approach quantitatively characterizing what representations do deep neural networks learn. In this work, we move a tiny step towards a theory and better understanding of the representations. Specifically, we study a simpler problem: How similar are the representations learned by two networks with identical architecture but trained from different initializations. We develop a rigorous theory based on the neuron activation subspace match model. The theory gives a complete characterization of the structure of neuron activation subspace matches, where the core concepts are maximum match and simple match which describe the overall and the finest similarity between sets of neurons in two networks respectively. We also propose efficient algorithms to find the maximum match and simple matches. Finally, we conduct extensive experiments using our algorithms. Experimental results suggest that, surprisingly, representations learned by the same convolutional layers of networks trained from different initializations are not as similar as prevalently expected, at least in terms of subspace match.},
  file = {/Users/godf974/OneDrive - PNNL/Documents/Papers/General/Wang et al_2018_Towards Understanding Learning Representations.pdf}
}

@article{jacotNeuralTangentKernel2020,
  title={Neural tangent kernel: convergence and generalization in neural networks (invited paper)},
  author={Arthur Jacot and Franck Gabriel and Cl{\'e}ment Hongler},
  journal={Proceedings of the 53rd Annual ACM SIGACT Symposium on Theory of Computing},
  year={2018}
}

@inproceedings{prilloSoftSortContinuousRelaxation2020,
  title={SoftSort: A Continuous Relaxation for the argsort Operator},
  author={Sebastian Prillo and Julian Martin Eisenschlos},
  booktitle={ICML},
  year={2020}
}

@online{pageHowTrainYour2018,
  title = {How to {{Train Your ResNet}}},
  author = {Page, David},
  date = {2018-09-24T17:11:20+00:00},
  url = {https://myrtle.ai/learn/how-to-train-your-resnet/},
  urldate = {2022-05-09},
  abstract = {The introduction to a series of posts investigating how to train Residual networks efficiently on the CIFAR10 image classification dataset. By the fourth post, we can train to the 94\% accuracy threshold of the DAWNBench competition in 79 seconds on a single V100 GPU.},
  langid = {american},
  organization = {{Myrtle}},
  file = {/Users/godf974/Zotero/storage/M852GTIR/how-to-train-your-resnet.html}
}

@book{serre1977linear,
  title={Linear representations of finite groups},
  author={Serre, Jean-Pierre},
  volume={42},
  year={1977},
  publisher={Springer}
}

@article{yin2019fourier,
  title={A fourier perspective on model robustness in computer vision},
  author={Yin, Dong and Gontijo Lopes, Raphael and Shlens, Jon and Cubuk, Ekin Dogus and Gilmer, Justin},
  journal={Advances in Neural Information Processing Systems},
  volume={32},
  year={2019}
}

@article{jo2017measuring,
  title={Measuring the tendency of cnns to learn surface statistical regularities},
  author={Jo, Jason and Bengio, Yoshua},
  journal={arXiv preprint arXiv:1711.11561},
  year={2017}
}

@inproceedings{tsuzuku2019structural,
  title={On the structural sensitivity of deep convolutional networks to the directions of fourier basis functions},
  author={Tsuzuku, Yusuke and Sato, Issei},
  booktitle={Proceedings of the IEEE/CVF Conference on Computer Vision and Pattern Recognition},
  pages={51--60},
  year={2019}
}

@article{bronstein2017geometric,
  title={Geometric deep learning: going beyond euclidean data},
  author={Bronstein, Michael M and Bruna, Joan and LeCun, Yann and Szlam, Arthur and Vandergheynst, Pierre},
  journal={IEEE Signal Processing Magazine},
  volume={34},
  number={4},
  pages={18--42},
  year={2017},
  publisher={IEEE}
}

@article{erhan2009visualizing,
  title={Visualizing higher-layer features of a deep network},
  author={Erhan, Dumitru and Bengio, Yoshua and Courville, Aaron and Vincent, Pascal},
  journal={University of Montreal},
  volume={1341},
  number={3},
  pages={1},
  year={2009}
}

@inproceedings{zeiler2014visualizing,
  title={Visualizing and understanding convolutional networks},
  author={Zeiler, Matthew D and Fergus, Rob},
  booktitle={European conference on computer vision},
  pages={818--833},
  year={2014},
  organization={Springer}
}

@inproceedings{zhou2014object,
  author={Bolei Zhou and Aditya Khosla and Àgata Lapedriza and Aude Oliva and Antonio Torralba},
  title={Object Detectors Emerge in Deep Scene CNNs.},
  year={2015},
  cdate={1420070400000},
  url={http://arxiv.org/abs/1412.6856},
  booktitle={ICLR},
  crossref={conf/iclr/2015}
}

@inproceedings{yosinski2015understanding,
Author = {Jason Yosinski and Jeff Clune and Anh Nguyen and Thomas Fuchs and Hod Lipson},
Booktitle = {Deep Learning Workshop, International Conference on Machine Learning (ICML)},
Title = {Understanding Neural Networks Through Deep Visualization},
Year = {2015}}

@article{na2019discovery,
  title={Discovery of natural language concepts in individual units of cnns},
  author={Na, Seil and Choe, Yo Joong and Lee, Dong-Hyun and Kim, Gunhee},
  journal={arXiv preprint arXiv:1902.07249},
  year={2019}
}

@article{diaconis2005random,
  title={What is a random matrix},
  author={Diaconis, Persi},
  journal={Notices of the AMS},
  volume={52},
  number={11},
  pages={1348--1349},
  year={2005}
}

@article{liu2022convnet,
  title={A ConvNet for the 2020s},
  author={Liu, Zhuang and Mao, Hanzi and Wu, Chao-Yuan and Feichtenhofer, Christoph and Darrell, Trevor and Xie, Saining},
  journal={arXiv preprint arXiv:2201.03545},
  year={2022}
}

@inproceedings{kimInterpretabilityFeatureAttribution2018,
  title = {Interpretability {{Beyond Feature Attribution}}: {{Quantitative Testing}} with {{Concept Activation Vectors}} ({{TCAV}})},
  shorttitle = {Interpretability {{Beyond Feature Attribution}}},
  booktitle = {{{ICML}}},
  author = {Kim, Been and Wattenberg, M. and Gilmer, J. and Cai, Carrie J. and Wexler, James and Viégas, F. and Sayres, R.},
  date = {2018},
  abstract = {Concept Activation Vectors (CAVs) are introduced, which provide an interpretation of a neural net's internal state in terms of human-friendly concepts, and may be used to explore hypotheses and generate insights for a standard image classification network as well as a medical application. The interpretation of deep learning models is a challenge due to their size, complexity, and often opaque internal state. In addition, many systems, such as image classifiers, operate on low-level features rather than high-level concepts. To address these challenges, we introduce Concept Activation Vectors (CAVs), which provide an interpretation of a neural net's internal state in terms of human-friendly concepts. The key idea is to view the high-dimensional internal state of a neural net as an aid, not an obstacle. We show how to use CAVs as part of a technique, Testing with CAVs (TCAV), that uses directional derivatives to quantify the degree to which a user-defined concept is important to a classification result--for example, how sensitive a prediction of "zebra" is to the presence of stripes. Using the domain of image classification as a testing ground, we describe how CAVs may be used to explore hypotheses and generate insights for a standard image classification network as well as a medical application.},
  file = {/Users/godf974/OneDrive - PNNL/Documents/Papers/General/Kim et al_2018_Interpretability Beyond Feature Attribution.pdf}
}

@article{yinDreamingDistillDataFree2020,
  title = {Dreaming to {{Distill}}: {{Data-Free Knowledge Transfer}} via {{DeepInversion}}},
  shorttitle = {Dreaming to {{Distill}}},
  author = {Yin, Hongxu and Molchanov, Pavlo and Li, Zhizhong and Álvarez, J. and Mallya, Arun and Hoiem, Derek and Jha, N. and Kautz, J.},
  date = {2020},
  journaltitle = {2020 IEEE/CVF Conference on Computer Vision and Pattern Recognition (CVPR)},
  doi = {10.1109/cvpr42600.2020.00874},
  abstract = {DeepInversion is introduced, a new method for synthesizing images from the image distribution used to train a deep neural network, which optimizes the input while regularizing the distribution of intermediate feature maps using information stored in the batch normalization layers of the teacher. We introduce DeepInversion, a new method for synthesizing images from the image distribution used to train a deep neural network. We ``invert'' a trained network (teacher) to synthesize class-conditional input images starting from random noise, without using any additional information about the training dataset. Keeping the teacher fixed, our method optimizes the input while regularizing the distribution of intermediate feature maps using information stored in the batch normalization layers of the teacher. Further, we improve the diversity of synthesized images using Adaptive DeepInversion, which maximizes the Jensen-Shannon divergence between the teacher and student network logits. The resulting synthesized images from networks trained on the CIFAR-10 and ImageNet datasets demonstrate high fidelity and degree of realism, and help enable a new breed of data-free applications - ones that do not require any real images or labeled data. We demonstrate the applicability of our proposed method to three tasks of immense practical importance - (i) data-free network pruning, (ii) data-free knowledge transfer, and (iii) data-free continual learning.}
}

@inproceedings{nairRectifiedLinearUnits2010,
  title = {Rectified {{Linear Units Improve Restricted Boltzmann Machines}}},
  booktitle = {{{ICML}}},
  author = {Nair, Vinod and Hinton, Geoffrey E.},
  date = {2010},
  abstract = {Restricted Boltzmann machines were developed using binary stochastic hidden units that learn features that are better for object recognition on the NORB dataset and face verification on the Labeled Faces in the Wild dataset. Restricted Boltzmann machines were developed using binary stochastic hidden units. These can be generalized by replacing each binary unit by an infinite number of copies that all have the same weights but have progressively more negative biases. The learning and inference rules for these "Stepped Sigmoid Units" are unchanged. They can be approximated efficiently by noisy, rectified linear units. Compared with binary units, these units learn features that are better for object recognition on the NORB dataset and face verification on the Labeled Faces in the Wild dataset. Unlike binary units, rectified linear units preserve information about relative intensities as information travels through multiple layers of feature detectors.}
}

@article{ioffeBatchNormalizationAccelerating2015,
  title = {Batch {{Normalization}}: {{Accelerating Deep Network Training}} by {{Reducing Internal Covariate Shift}}},
  shorttitle = {Batch {{Normalization}}},
  author = {Ioffe, S. and Szegedy, Christian},
  date = {2015},
  journaltitle = {ICML},
  abstract = {Applied to a state-of-the-art image classification model, Batch Normalization achieves the same accuracy with 14 times fewer training steps, and beats the original model by a significant margin. Training Deep Neural Networks is complicated by the fact that the distribution of each layer's inputs changes during training, as the parameters of the previous layers change. This slows down the training by requiring lower learning rates and careful parameter initialization, and makes it notoriously hard to train models with saturating nonlinearities. We refer to this phenomenon as internal covariate shift, and address the problem by normalizing layer inputs. Our method draws its strength from making normalization a part of the model architecture and performing the normalization for each training mini-batch. Batch Normalization allows us to use much higher learning rates and be less careful about initialization, and in some cases eliminates the need for Dropout. Applied to a state-of-the-art image classification model, Batch Normalization achieves the same accuracy with 14 times fewer training steps, and beats the original model by a significant margin. Using an ensemble of batch-normalized networks, we improve upon the best published result on ImageNet classification: reaching 4.82\% top-5 test error, exceeding the accuracy of human raters.},
  file = {/Users/godf974/OneDrive - PNNL/Documents/Papers/Ioffe_Szegedy_2015_Batch Normalization.pdf}
}

@article{lencUnderstandingImageRepresentations2015,
  title = {Understanding Image Representations by Measuring Their Equivariance and Equivalence},
  author = {Lenc, Karel and Vedaldi, A.},
  date = {2015},
  journaltitle = {2015 IEEE Conference on Computer Vision and Pattern Recognition (CVPR)},
  doi = {10.1109/CVPR.2015.7298701},
  abstract = {Three key mathematical properties of representations: equivariance, invariance, and equivalence are investigated and applied to popular representations to reveal insightful aspects of their structure, including clarifying at which layers in a CNN certain geometric invariances are achieved. Despite the importance of image representations such as histograms of oriented gradients and deep Convolutional Neural Networks (CNN), our theoretical understanding of them remains limited. Aiming at filling this gap, we investigate three key mathematical properties of representations: equivariance, invariance, and equivalence. Equivariance studies how transformations of the input image are encoded by the representation, invariance being a special case where a transformation has no effect. Equivalence studies whether two representations, for example two different parametrisations of a CNN, capture the same visual information or not. A number of methods to establish these properties empirically are proposed, including introducing transformation and stitching layers in CNNs. These methods are then applied to popular representations to reveal insightful aspects of their structure, including clarifying at which layers in a CNN certain geometric invariances are achieved. While the focus of the paper is theoretical, direct applications to structured-output regression are demonstrated too.},
  file = {/Users/godf974/OneDrive - PNNL/Documents/Papers/Lenc_Vedaldi_2015_Understanding image representations by measuring their equivariance and.pdf}
}

@inproceedings{
ding2021grounding,
title={Grounding Representation Similarity Through Statistical Testing},
author={Frances Ding and Jean-Stanislas Denain and Jacob Steinhardt},
booktitle={Advances in Neural Information Processing Systems},
editor={A. Beygelzimer and Y. Dauphin and P. Liang and J. Wortman Vaughan},
year={2021},
url={https://openreview.net/forum?id=_kwj6V53ZqB}
}

@inproceedings{kileelExpressivePowerDeep2019,
  title = {On the {{Expressive Power}} of {{Deep Polynomial Neural Networks}}},
  booktitle = {{{NeurIPS}}},
  author = {Kileel, J. and Trager, Matthew and Bruna, Joan},
  date = {2019},
  abstract = {The dimension of this variety is proposed as a precise measure of the expressive power of polynomial neural networks, including an exact formula for high activation degrees, as well as upper and lower bounds on layer widths in order for deep polynomials networks to fill the ambient functional space. We study deep neural networks with polynomial activations, particularly their expressive power. For a fixed architecture and activation degree, a polynomial neural network defines an algebraic map from weights to polynomials. The image of this map is the functional space associated to the network, and it is an irreducible algebraic variety upon taking closure. This paper proposes the dimension of this variety as a precise measure of the expressive power of polynomial neural networks. We obtain several theoretical results regarding this dimension as a function of architecture, including an exact formula for high activation degrees, as well as upper and lower bounds on layer widths in order for deep polynomials networks to fill the ambient functional space. We also present computational evidence that it is profitable in terms of expressiveness for layer widths to increase monotonically and then decrease monotonically. Finally, we link our study to favorable optimization properties when training weights, and we draw intriguing connections with tensor and polynomial decompositions.}
}

@ARTICLE{2020SciPy-NMeth,
  author  = {Virtanen, Pauli and Gommers, Ralf and Oliphant, Travis E. and
            Haberland, Matt and Reddy, Tyler and Cournapeau, David and
            Burovski, Evgeni and Peterson, Pearu and Weckesser, Warren and
            Bright, Jonathan and {van der Walt}, St{\'e}fan J. and
            Brett, Matthew and Wilson, Joshua and Millman, K. Jarrod and
            Mayorov, Nikolay and Nelson, Andrew R. J. and Jones, Eric and
            Kern, Robert and Larson, Eric and Carey, C J and
            Polat, {\.I}lhan and Feng, Yu and Moore, Eric W. and
            {VanderPlas}, Jake and Laxalde, Denis and Perktold, Josef and
            Cimrman, Robert and Henriksen, Ian and Quintero, E. A. and
            Harris, Charles R. and Archibald, Anne M. and
            Ribeiro, Ant{\^o}nio H. and Pedregosa, Fabian and
            {van Mulbregt}, Paul and {SciPy 1.0 Contributors}},
  title   = {{{SciPy} 1.0: Fundamental Algorithms for Scientific
            Computing in Python}},
  journal = {Nature Methods},
  year    = {2020},
  volume  = {17},
  pages   = {261--272},
  adsurl  = {https://rdcu.be/b08Wh},
  doi     = {10.1038/s41592-019-0686-2},
}

@article{tatroOptimizingModeConnectivity2020,
  title = {Optimizing {{Mode Connectivity}} via {{Neuron Alignment}}},
  author = {Tatro, N. Joseph and Chen, Pin-Yu and Das, Payel and Melnyk, Igor and Sattigeri, Prasanna and Lai, Rongjie},
  date = {2020-11-02},
  eprint = {2009.02439},
  eprinttype = {arxiv},
  primaryclass = {cs, math, stat},
  url = {http://arxiv.org/abs/2009.02439},
  urldate = {2022-05-13},
  abstract = {The loss landscapes of deep neural networks are not well understood due to their high nonconvexity. Empirically, the local minima of these loss functions can be connected by a learned curve in model space, along which the loss remains nearly constant; a feature known as mode connectivity. Yet, current curve finding algorithms do not consider the influence of symmetry in the loss surface created by model weight permutations. We propose a more general framework to investigate the effect of symmetry on landscape connectivity by accounting for the weight permutations of the networks being connected. To approximate the optimal permutation, we introduce an inexpensive heuristic referred to as neuron alignment. Neuron alignment promotes similarity between the distribution of intermediate activations of models along the curve. We provide theoretical analysis establishing the benefit of alignment to mode connectivity based on this simple heuristic. We empirically verify that the permutation given by alignment is locally optimal via a proximal alternating minimization scheme. Empirically, optimizing the weight permutation is critical for efficiently learning a simple, planar, low-loss curve between networks that successfully generalizes. Our alignment method can significantly alleviate the recently identified robust loss barrier on the path connecting two adversarial robust models and find more robust and accurate models on the path.},
  archiveprefix = {arXiv},
  keywords = {Computer Science - Machine Learning,Mathematics - Optimization and Control,Statistics - Machine Learning}
}

@article{olah2020zoom,
  title={Zoom in: An introduction to circuits},
  author={Olah, Chris and Cammarata, Nick and Schubert, Ludwig and Goh, Gabriel and Petrov, Michael and Carter, Shan},
  journal={Distill},
  volume={5},
  number={3},
  pages={e00024--001},
  year={2020}
}

@misc{elhage2021mathematical,
  title={A mathematical framework for transformer circuits},
  author={Elhage, N and Nanda, N and Olsson, C and Henighan, T and Joseph, N and Mann, B and Askell, A and Bai, Y and Chen, A and Conerly, T and others},
  year={2021}
}

@inproceedings{noether,
  title={Invariante Variationsprobleme},
  author={Noether, E},
  journal={Nachrichten von der Gesellschaft der Wissenschaften zu Göttingen},
  volume={},
  number={},
  pages={235--257},
  year={1918}
}

@misc{leclerc2022ffcv,
    author = {Guillaume Leclerc and Andrew Ilyas and Logan Engstrom and Sung Min Park and Hadi Salman and Aleksander Madry},
    title = {ffcv},
    year = {2022},
    howpublished = {\url{https://github.com/libffcv/ffcv/}},
    note = {commit 849}
}

@TECHREPORT{Krizhevsky09learningmultiple,
    author = {Alex Krizhevsky},
    title = {Learning multiple layers of features from tiny images},
    institution = {},
    year = {2009}
}

@article{raghu2021vision,
  title={Do vision transformers see like convolutional neural networks?},
  author={Raghu, Maithra and Unterthiner, Thomas and Kornblith, Simon and Zhang, Chiyuan and Dosovitskiy, Alexey},
  journal={Advances in Neural Information Processing Systems},
  volume={34},
  year={2021}
}

@inproceedings{
nguyenWideDeepNetworks2021,
title={Do Wide and Deep Networks Learn the Same Things? Uncovering How Neural Network Representations Vary with Width and Depth},
author={Thao Nguyen and Maithra Raghu and Simon Kornblith},
booktitle={International Conference on Learning Representations},
year={2021},
url={https://openreview.net/forum?id=KJNcAkY8tY4}
}

@book{rudinPrinciplesMathematicalAnalysis1976,
  title = {Principles of {{Mathematical Analysis}}},
  author = {Rudin, Walter},
  date = {1976},
  eprint = {kwqzPAAACAAJ},
  eprinttype = {googlebooks},
  publisher = {{McGraw-Hill}},
  abstract = {The third edition of this well known text continues to provide a solid foundation in mathematical analysis for undergraduate and first-year graduate students. The text begins with a discussion of the real number system as a complete ordered field. (Dedekind's construction is now treated in an appendix to Chapter I.) The topological background needed for the development of convergence, continuity, differentiation and integration is provided in Chapter 2. There is a new section on the gamma function, and many new and interesting exercises are included. This text is part of the Walter Rudin Student Series in Advanced Mathematics.},
  isbn = {978-0-07-085613-4},
  langid = {english},
  pagetotal = {342},
  keywords = {Mathematics / Calculus}
}

@inproceedings{
mena2018learning,
title={Learning Latent Permutations with Gumbel-Sinkhorn Networks},
author={Gonzalo Mena and David Belanger and Scott Linderman and Jasper Snoek},
booktitle={International Conference on Learning Representations},
year={2018},
url={https://openreview.net/forum?id=Byt3oJ-0W},
}

@article{szegedy2013intriguing,
  title={Intriguing properties of neural networks},
  author={Christian Szegedy and Wojciech Zaremba and Ilya Sutskever and Joan Bruna and D. Erhan and Ian J. Goodfellow and Rob Fergus},
  journal={CoRR},
  year={2014},
  volume={abs/1312.6199}
}

@inproceedings{deng2009imagenet,
  title={Imagenet: A large-scale hierarchical image database},
  author={Deng, Jia and Dong, Wei and Socher, Richard and Li, Li-Jia and Li, Kai and Fei-Fei, Li},
  booktitle={2009 IEEE conference on computer vision and pattern recognition},
  pages={248--255},
  year={2009},
  organization={Ieee}
}

@misc{rw2019timm,
  author = {Ross Wightman},
  title = {PyTorch Image Models},
  year = {2019},
  publisher = {GitHub},
  journal = {GitHub repository},
  doi = {10.5281/zenodo.4414861},
  howpublished = {\url{https://github.com/rwightman/pytorch-image-models}}
}

@inproceedings{marcel2010torchvision,
  title={Torchvision the machine-vision package of torch},
  author={Marcel, S{\'e}bastien and Rodriguez, Yann},
  booktitle={Proceedings of the 18th ACM international conference on Multimedia},
  pages={1485--1488},
  year={2010}
}

@incollection{torch,
title = {PyTorch: An Imperative Style, High-Performance Deep Learning Library},
author = {Paszke, Adam and Gross, Sam and Massa, Francisco and Lerer, Adam and Bradbury, James and Chanan, Gregory and Killeen, Trevor and Lin, Zeming and Gimelshein, Natalia and Antiga, Luca and Desmaison, Alban and Kopf, Andreas and Yang, Edward and DeVito, Zachary and Raison, Martin and Tejani, Alykhan and Chilamkurthy, Sasank and Steiner, Benoit and Fang, Lu and Bai, Junjie and Chintala, Soumith},
booktitle = {Advances in Neural Information Processing Systems 32},
editor = {H. Wallach and H. Larochelle and A. Beygelzimer and F. d\textquotesingle Alch\'{e}-Buc and E. Fox and R. Garnett},
pages = {8024--8035},
year = {2019},
publisher = {Curran Associates, Inc.},
url = {http://papers.neurips.cc/paper/9015-pytorch-an-imperative-style-high-performance-deep-learning-library.pdf}
}

@article{sink,
author = {Richard Sinkhorn},
title = {{A Relationship Between Arbitrary Positive Matrices and Doubly Stochastic Matrices}},
volume = {35},
journal = {The Annals of Mathematical Statistics},
number = {2},
publisher = {Institute of Mathematical Statistics},
pages = {876 -- 879},
year = {1964},
doi = {10.1214/aoms/1177703591},
URL = {https://doi.org/10.1214/aoms/1177703591}
}

@inproceedings{Fogel2013ConvexRF,
  title={Convex Relaxations for Permutation Problems},
  author={Fajwel Fogel and Rodolphe Jenatton and Francis R. Bach and Alexandre d'Aspremont},
  booktitle={SIAM J. Matrix Anal. Appl.},
  year={2013}
}

@inproceedings{Lim2014BeyondTB,
  title={Beyond the Birkhoff Polytope: Convex Relaxations for Vector Permutation Problems},
  author={Cong Han Lim and Stephen J. Wright},
  booktitle={NIPS},
  year={2014}
}

@article{elhage2022solu,
  title = {Softmax Linear Units},
  author = {{Elhage}, {Nelson} and Hume, Tristan and Olsson, Catherine and Nanda, Neel and Henighan, Tom and Johnston, Scott and ElShowk, Sheer and Joseph, Nicholas and DasSarma, Nova and Mann, Ben and Hernandez, Danny and Askell, Amanda and Ndousse, Kamal and {Jones} and {and Drain}, Dawn and Chen, Anna and Bai, Yuntao and Ganguli, Deep and Lovitt, Liane and {Hatfield-Dodds}, Zac and Kernion, Jackson and Conerly, Tom and Kravec, Shauna and Fort, Stanislav and Kadavath, Saurav and Jacobson, Josh and {Tran-Johnson}, Eli and Kaplan, Jared and Clark, Jack and Brown, Tom and McCandlish, Sam and Amodei, Dario and Olah, Christopher},
  year = {2022},
  journal = {Transformer Circuits Thread}
}

@misc{hendrycksGeLU,
  doi = {10.48550/ARXIV.1606.08415},
  
  url = {https://arxiv.org/abs/1606.08415},
  
  author = {Hendrycks, Dan and Gimpel, Kevin},
  
  keywords = {Machine Learning (cs.LG), FOS: Computer and information sciences, FOS: Computer and information sciences},
  
  title = {Gaussian Error Linear Units (GELUs)},
  
  publisher = {arXiv},
  
  year = {2016},
  
  copyright = {arXiv.org perpetual, non-exclusive license}
}

@article{hassani2021escaping,
  title={Escaping the big data paradigm with compact transformers},
  author={Hassani, Ali and Walton, Steven and Shah, Nikhil and Abuduweili, Abulikemu and Li, Jiachen and Shi, Humphrey},
  journal={arXiv preprint arXiv:2104.05704},
  year={2021}
}

@inproceedings{yun2019cutmix,
  title={Cutmix: Regularization strategy to train strong classifiers with localizable features},
  author={Yun, Sangdoo and Han, Dongyoon and Oh, Seong Joon and Chun, Sanghyuk and Choe, Junsuk and Yoo, Youngjoon},
  booktitle={Proceedings of the IEEE/CVF international conference on computer vision},
  pages={6023--6032},
  year={2019}
}

@inproceedings{
zhang2018mixup,
title={mixup: Beyond Empirical Risk Minimization},
author={Hongyi Zhang and Moustapha Cisse and Yann N. Dauphin and David Lopez-Paz},
booktitle={International Conference on Learning Representations},
year={2018},
url={https://openreview.net/forum?id=r1Ddp1-Rb},
}

@inproceedings{
entezari2022the,
title={The Role of Permutation Invariance in Linear Mode Connectivity of Neural Networks},
author={Rahim Entezari and Hanie Sedghi and Olga Saukh and Behnam Neyshabur},
booktitle={International Conference on Learning Representations},
year={2022},
url={https://openreview.net/forum?id=dNigytemkL}
}

@article{git-rebasin,
  doi = {10.48550/ARXIV.2209.04836},
  url = {https://arxiv.org/abs/2209.04836},
  author = {Ainsworth, Samuel K. and Hayase, Jonathan and Srinivasa, Siddhartha},
  keywords = {Machine Learning (cs.LG), Artificial Intelligence (cs.AI), FOS: Computer and information sciences, FOS: Computer and information sciences},
  title = {Git Re-Basin: Merging Models modulo Permutation Symmetries},
  publisher = {arXiv},
  year = {2022},
  copyright = {Creative Commons Attribution 4.0 International}
}

@inproceedings{Gretton2005MeasuringSD,
  title={Measuring Statistical Dependence with Hilbert-Schmidt Norms},
  author={Arthur Gretton and Olivier Bousquet and Alex Smola and Bernhard Sch{\"o}lkopf},
  booktitle={International Conference on Algorithmic Learning Theory},
  year={2005}
}

@article{cortes2012algorithms,
  title={Algorithms for learning kernels based on centered alignment},
  author={Cortes, Corinna and Mohri, Mehryar and Rostamizadeh, Afshin},
  journal={The Journal of Machine Learning Research},
  volume={13},
  number={1},
  pages={795--828},
  year={2012},
  publisher={JMLR. org}
}

@article{kmml,
author = {Thomas Hofmann and Bernhard Sch{\"o}lkopf and Alexander J. Smola},
title = {{Kernel methods in machine learning}},
volume = {36},
journal = {The Annals of Statistics},
number = {3},
publisher = {Institute of Mathematical Statistics},
pages = {1171 -- 1220},
keywords = {graphical models, machine learning, reproducing kernels, Support vector machines},
year = {2008},
doi = {10.1214/009053607000000677},
URL = {https://doi.org/10.1214/009053607000000677}
}

@article{micchelli06a,
  author  = {Charles A. Micchelli and Yuesheng Xu and Haizhang Zhang},
  title   = {Universal Kernels},
  journal = {Journal of Machine Learning Research},
  year    = {2006},
  volume  = {7},
  number  = {95},
  pages   = {2651--2667},
  url     = {http://jmlr.org/papers/v7/micchelli06a.html}
}

\newpage

\section*{Checklist}

\begin{enumerate}

  \item For all authors...
  \begin{enumerate}
    \item Do the main claims made in the abstract and introduction accurately reflect the paper's contributions and scope?
    \answerYes{}
    \item Did you describe the limitations of your work?
    \answerYes{See Section \ref{sect-limitations}.}
    \item Did you discuss any potential negative societal impacts of your work?
    \answerNA{This paper is largely focused on the mathematical aspects of deep learning so we do not think there are any immediate negative societal impact to the methods described. From a broader perspective though, we see this work helping to create a more principled groundwork for many interpretable AI techniques. We explain why this could have positive societal impacts in Section \ref{sect-societal-impacts}.}
    \item Have you read the ethics review guidelines and ensured that your paper conforms to them?
    \answerYes{}
  \end{enumerate}

  \item If you are including theoretical results...
  \begin{enumerate}
    \item Did you state the full set of assumptions of all theoretical results?
    \answerYes{}
    \item Did you include complete proofs of all theoretical results?
    \answerYes{All proofs can be found in \cref{appendix-proofs}.}
  \end{enumerate}

  \item If you ran experiments...
  \begin{enumerate}
    \item Did you include the code, data, and instructions needed to reproduce the main experimental results (either in the supplemental material or as a URL)?
    \answerTODO{} We are in the process of making code publicly available.
    \item Did you specify all the training details (e.g., data splits, hyperparameters, how they were chosen)?
    \answerYes{} See \cref{sect-experimental-details}.
    \item Did you report error bars (e.g., with respect to the random seed after running experiments multiple times)?
    \answerYes{} 
    \item Did you include the total amount of compute and the type of resources used (e.g., type of GPUs, internal cluster, or cloud provider)?
    \answerYes{} See \cref{sect-experimental-details} --- while we did not keep precise track of CPU/GPU hours, we do specify the hardware used.
  \end{enumerate}

  \item If you are using existing assets (e.g., code, data, models) or curating/releasing new assets...
  \begin{enumerate}
    \item If your work uses existing assets, did you cite the creators?
    \answerYes{}
    \item Did you mention the license of the assets?
    \answerYes{See Section \ref{sect-dataset-details}. }
    \item Did you include any new assets either in the supplemental material or as a URL?
    \answerNo{}
    \item Did you discuss whether and how consent was obtained from people whose data you're using/curating?
    \answerNA{}
    \item Did you discuss whether the data you are using/curating contains personally identifiable information or offensive content?
    \answerNA{}
  \end{enumerate}

  \item If you used crowdsourcing or conducted research with human subjects...
  \begin{enumerate}
    \item Did you include the full text of instructions given to participants and screenshots, if applicable?
    \answerNA{}
    \item Did you describe any potential participant risks, with links to Institutional Review Board (IRB) approvals, if applicable?
    \answerNA{}
    \item Did you include the estimated hourly wage paid to participants and the total amount spent on participant compensation?
    \answerNA{}
  \end{enumerate}

\end{enumerate}

\appendix

\section{Societal Impact}
\label{sect-societal-impacts}

Though deep learning models are in the process of being deployed for safety critical applications, we still have very little understanding of the structure and evolution of their internal representations. In this paper we discuss one aspect of these representations. We hope that by better illuminating the inner workings of these networks, we will be a small part of the larger effort to make deep learning more understandable, reliable, and fair.

\section{Code availability} 

Our code can be found at \href{https://github.com/pnnl/modelsym}{https://github.com/pnnl/modelsym}. 


\section{Examples}
\label{appendix-examples}

We first give an example of two networks with distinct weights which are functionally equivalent. Let \(f\) be a 2 layer network with \(  \relu \) activations and weight matrices 
\begin{equation*}
W_1 = \begin{bmatrix}
1 & 0 \\
0 & 2 
\end{bmatrix}
\quad \text{and} \quad 
W_2 = \begin{bmatrix}
3 & 0 \\
0 & 1 
\end{bmatrix}
\end{equation*}
(and biases = 0). Let \(\tilde{f}\) be a network with the same architecture, but with weights
\begin{equation*}
W_1 = \begin{bmatrix}
0 & 2 \\
1 & 0 
\end{bmatrix}
\quad \text{and} \quad 
W_2 = \begin{bmatrix}
0 & 3 \\
1 & 0 
\end{bmatrix}.
\end{equation*}
Then one can verify that \(\tilde{f}(x) = f(x) \) for all \(x \in \RR \), but that the weights of $f$ and $\tilde{f}$ differ.

We also work through a small example of $\phi_{\sigma_n}$ where $n = 2$.  Assume that $\sigma$ is the ReLU nonlinearity. Then,  
\begin{equation*}
A = \begin{bmatrix}
0 & 1 \\
2 & 0 
\end{bmatrix}
\end{equation*}
belongs to $G_{\sigma_2}$, and we can compute directly that
\begin{equation*}
    \relu \circ \begin{bmatrix}
    0 & 1 \\
    2 & 0
    \end{bmatrix}
    \begin{bmatrix}
    x_1 \\ 
    x_2
    \end{bmatrix} =\begin{bmatrix}
     \relu(x_2) \\
     \relu(2 x_1)
    \end{bmatrix}
    = \begin{bmatrix}
     \relu(x_2) \\
     2\relu(x_1)
    \end{bmatrix},
\end{equation*}
where in the last equality we used the fact that \(\relu(a x) = a \relu(x)\) when \(a\) is positive.
On the other hand, 
\begin{equation*}
    \begin{bmatrix}
    0 & 1 \\
    2 & 0
    \end{bmatrix} \circ\relu ( 
    \begin{bmatrix}
    x_1 \\ 
    x_2
    \end{bmatrix}
    )
    = 
    \begin{bmatrix}
    0 & 1 \\
    2 & 0
    \end{bmatrix}
    \begin{bmatrix}
    \relu (x_1) \\ 
    \relu (x_2)
    \end{bmatrix}
    = \begin{bmatrix}
     \relu(x_2) \\
     2\relu(x_1)
    \end{bmatrix}.
\end{equation*}

\section{Experimental Details}
\label{sect-experimental-details}

In this section we provide additional experimental results, as well as implementation details for the purposes of
reproducibility. All experiments were run on Nvidia GPUs using PyTorch \cite{torch}.

\subsection{Sampling pairs of models trained with different random seeds}
\label{sec:sampling-models}

We began by training 100 models with different random seeds (i.e. with independent initializations and different random batches) for each of the following architectures:
\begin{enumerate}[(i)]
  \item Myrtle CNN: a simple 5-layer feed-forward CNN with batch normalization.\footnote{With the exception of the rotation penalties experiment in \cref{fig:rotation-penalties}, where we omitted batch normalization to adhere closely to the theoretical framework of \cref{sec:notation}}
  \item ResNet20: a ResNet tailored to the CIFAR-10 dataset (numbers of channels
  are \(16, 32, 64\) respectively in the 3 residual blocks).
  \item ResNet18: an ImageNet-style ResNet adapted to the input size of CIFAR-10
  --- much wider than the above (numbers of channels are \(64, 128, 256\)
  respectively in the 3 residual blocks).
\end{enumerate}
More detailed architecture schematics are included in \cref{fig:arch-mCNN,fig:arch-rn20,fig:arch-rn18}.

All models were trained for 50 epochs using the Adam optimizer with PyTorch's
default settings. We use a batch size of 32, initial learning rate \(0.001\) and
4 evenly spaced learning rate drops with factor \(0.5\). We augment data with
translations of up to 2 pixels (padded as necessary with the mean RGB value for CIFAR-10) and left-right flips, and we save the weights with best
validation accuracy. In the rotation penalties experiment of \cref{fig:rotation-penalties} the fine-tuning stage uses the same hyperparameters as the initial training phase (though of course only a subset of parameters recieve gradient updaates during fine-tuning). Training this many CIFAR-10 models on a reasonable budget of time and computing resources was greatly aided by the excellent FFCV library \cite{leclerc2022ffcv}.

In the later stitching and dissimilarity measure experiments, we sample pairs of models from these ``zoos'' uniformly with replacement (but of course making sure that the two models in the pair are distinct). Thus the cost of training hundreds of models is amortized across many runs of stitching and dissimilarity measurement; this can be also viewed as bootstrap estimation of our experimental quantities of interest using empirical samples from certain distributions of CIFAR-10 models. 

\subsection{Stitching Experiments}
\label{appendix-optimization-perm}

For stitching layers, we train for 20 epochs with batch size 32 and learning
rate \(0.001\) (with no drops), however we use vanilla SGD with no momentum (we found the approximate second-order and/or momentum aspects of 
Adam interacted in complicated ways with the PGD algorithm described in
\cref{sec:approxoptperm} below, even after following some
\href{https://datascience.stackexchange.com/questions/31709/adam-optimizer-for-projected-gradient-descent}{helpful
advice from the Internet}\footnote{https://datascience.stackexchange.com/questions/31709/adam-optimizer-for-projected-gradient-descent}). Augmentation is described in the previous paragraph.

We parameterize reduced rank 1-by-1 convolutions as a composition of 2 1-by-1
convolutions, with \(\mathtt{in\_channels, out\_channels} = \mathtt{in\_channels,
rank}\) and \(\mathtt{rank, in\_channels}\) respectively. In contrast to
\cite{Bansal2021RevisitingMS} we omit both batch norm and bias from stitching
layers (to stick closely to the statement of \cref{thm:min-stitch}). 

\subsubsection{Approximate Optimization over Permutation Matrices}
\label{sec:approxoptperm}

By far the most complicated stitching layer is the one using \(G_{\relu}\), which
we describe here. Recall that $G_{\relu}$ is equal to the $n \times
n$ matrices of the form \(P D\), where \(P \in \Sigma_n\) is a permutation
matrix and \(D\) is a diagonal matrix with positive entries We parameterize
\(D\) simply as \(D = \diag(\lambda_i)\)  where \(\lambda_1, \dots,
\lambda_{n_l}\in \RR_{\geq 0}\) --- we preserve non-negativity during training
by a projected gradient descent step \(D \gets \relu(D)\).  During stitching
layer training, we parameterize \(P\) as a doubly stochastic matrix, that is, an
element of the Birkhoff polytope
\[\mathcal{B} = \{A = (a_{ij}) \in \Mat_{n_l, n_l}(\RR) \, | \, a_{ij}\geq 0
\text{ for all } i, j, \mathbf{1}^T A = \mathbf{1}^T \text{ and } A \mathbf{1} =
\mathbf{1}\} \] --- after each gradient descent step we project \(P\) back onto
\(\mathcal{B}\) by the operation \(P \gets \relu(P)\) followed by \(P \gets
\mathrm{sink}(P)\), where ``\(\mathrm{sink}\)'' denotes Sinkhorn iterations.
These consist of \(T\) iterations of \[ A \gets A \diag(\mathbf{1}^T A )^{-1} \text{
followed by } A \gets \diag(A \mathbf{1})^{-1}A  \] (it is a theorem of Sinkhorn
that this sequence converges to a doubly stochastic matrix of the form \( D A E
\)  with \(D, E\) positive diagonal matrices \cite{sink}). We use \(T = 16 \) in all
experiments (this choice drew on the work of \cite{mena2018learning}). In addition, we add a regularization term \( - \alpha \nrm{P}_2 \) to the stitching
objective, where \(\alpha >0\) is a hyperparameter (the motivation here is that
permutation matrices are precisely the elements of \(\mathcal{B}\) with maximal
\(\ell_2\)-norm). Unless stated otherwise in our experiments \(\alpha = 0.1\). We did experiment with choosing \(\alpha\) by cross validation and found the particular choice of \(\alpha\) was not crucial; see \cref{sec:crossval} for further details.

At evaluation time, we threshold \(P\) to an actual permutation matrix via the
Hungarian algorithm (specifically its implementation in
\lstinline{scipy.optimize.linear_sum_assignment} \cite{2020SciPy-NMeth}). This
amounts to 
\[ P_{\mathrm{eval}} = \mathrm{arg}\max_{Q \in \Sigma_{n_l}} \tr(P_{\mathrm{train}}Q^T) \] 

As stated above, we train for 20 epochs with batch size 32 and learning
rate \(0.001\) (with no drops), using SGD with momentum 0.9. However, we allow
the permutation factor to get a  ``head start'' by keeping
\(D\) fixed at the identity \(I\) for the first \(10\) epochs. This is probably not
essential, as shown in \cref{sec:crossval}.

Finally, before evaluating the stitched model on the CIFAR-10 validation set, we
perform a no-gradient epoch \emph{on the training data} with stitching layer \(
P_{\mathrm{eval}} \). This is critical as it allows the batch normalization
running means and variances in later layers to adapt to the thresholded
permutation matrix \( P_{\mathrm{eval}} \); observe that if we omitted this
step, during evaluation 
the ``batch normalization layers'' would not even be performing batch
normalization per se, since their running statistics would be computed from
features produced by a layer \(P_{\mathrm{train}}\) no longer in use.

As an aside, we also experimented with the differeniable relaxation of permutation
matrices SoftSort \cite{prilloSoftSortContinuousRelaxation2020}. Our final
results were comparable, however this method took far longer (\(>
10 \times\)) to optimize than the Birkhoff polytope method. It is
perhaps of interest that we used SoftSort on permutations far larger than
those of \cite{prilloSoftSortContinuousRelaxation2020} (e.g., the 512
channels of late layers of our Myrtle CNN). The next section (\cref{sec:sss})
contains some of our technical findings. 

We wish to aknowledge a couple articles, \cite{Fogel2013ConvexRF} and \cite{Lim2014BeyondTB}, that provided us with useful backround on optimization over doubly stochastic matrices.

\subsubsection{Stitching with SoftSort}
\label{sec:sss}

We parameterized \(D\) simply as \(D =  \diag(e^{\lambda_i})\)
where \(\lambda_1, \dots, \lambda_{n_l}\in \RR\). During stitching layer
training, we parameterized \(P\) using SoftSort
\cite{prilloSoftSortContinuousRelaxation2020}, a continuous relaxation of
permutation matrices given by the formula
\[ P = \SoftSort(s, \tau) := \softmax 
\big(-\frac{1}{\tau}(\sort(s)\mathbf{1}^T - \mathbf{1}s^T) \big), \text{ where }
s \in \RR^{n_l},\] \(\sort(s)\) denotes \(s\) sorted in descending order, and
\(\softmax\) is applied over rows. The parameter \(\tau > 0\) controls
\(\softmax\) temperature, and we were only able to obtain reasonable results
when tuning it according to \(\tau
\approx 1/n_l\) .
At validation time, we threshold \(P\) to an actual permutation matrix by
applying \(\arg \max\) over rows as in
\cite{prilloSoftSortContinuousRelaxation2020}. 

\subsection{Stitching and \(G_{\relu}\)-dissimilarity measures for ResNets}
\label{sec:stitching-resnets}

Here we include further results for ResNet20 and ResNet18 architectures. \Cref{fig:stitch-penalties-resnet} and \cref{fig:stitch-penalties-resnet18} include results for full 1-by-1 convolution, reduced randk 1-by-1 convolution and \(G_{\relu}\) 1-by-1 convolutions stitching in the ResNet20 and ResNet18 architectures respectively. Note that in general, layers inside residual blocks incur higher penalties, consisent with \cref{rmk:rn-failure}. This holds even in the full 1-by-1 convolution case, a finding that to the best of our knowledge is new. 

In the case of ResNet20 we also observe that the relative ranking of the different stitching constraints tends to change inside of residual blocks: whereas \(G_{\relu}\) stitching consistently outperforms rank 1 (and sometimes rank 2) stitching outside residual blocks, it consistently underperforms all strategies inside residual blocks. Lastly, we remark that the ResNet20 is significantly narrower than the Myrtle CNN (channels are 16, 32, 64 vs. 64, 128, 256, see \cref{fig:arch-rn20,fig:arch-rn18}), and hence the low-rank transformations account for a larger \emph{proportion} of the available total rank (for example, in early layers of the ResNet20 rank 4 is \(0.25\cdot \mathrm{full rank}\) whereas in the early layers of the Myrtle CNN rank 4 is \(0.0625\cdot \mathrm{full rank}\)). Heuristically, in the narrower network low-rank transformations may suffice to align for a larger fraction of the principal components of hidden features.

We also observe generally lower stitching penalties in the ResNet18 with the exception of the penultimate inside-a-residual-block layer --- we do not have a satisfactory explanation for random chance performance at that layer. We also remark that while the penalties in \cref{fig:stitch-penalties-resnet} are significantly higher than those in \cref{fig:stitch-penalties}, especially in later layers, we also saw significant dissimilarity in \cref{fig:wreath-cka-resnet} (a), especially in later layers. 

We also modify the ResNet20 to use the $\leakyrelu$ activation function and train models with different negative slopes $s$. The accuracy for two models trained with different random seeds at different $\leakyrelu$ is given in \cref{table:leaky-acc}. We perform $G_{\relu}$ stitching in \cref{fig:stitch-penalties-leaky}. Note that for a negative slope $s=1$, the activation function is the identity. We find the results difficult to interpret due to the significant decrease in CIFAR-10 accuracy for larger $s$. With this being said, unlike for $s<<1$, we note that the stitching penalties for $s=1$ (and to a lesser extent, $s=0.9$) are mostly constant throughout the layers of the network. This is most prominent for the final two ResNet20 layers ($72$ and $75$), where the stitching penalty for models with small $\leakyrelu$ slopes is the lowest.

\begin{figure}
    \centering
    \includegraphics[width=0.9\linewidth]{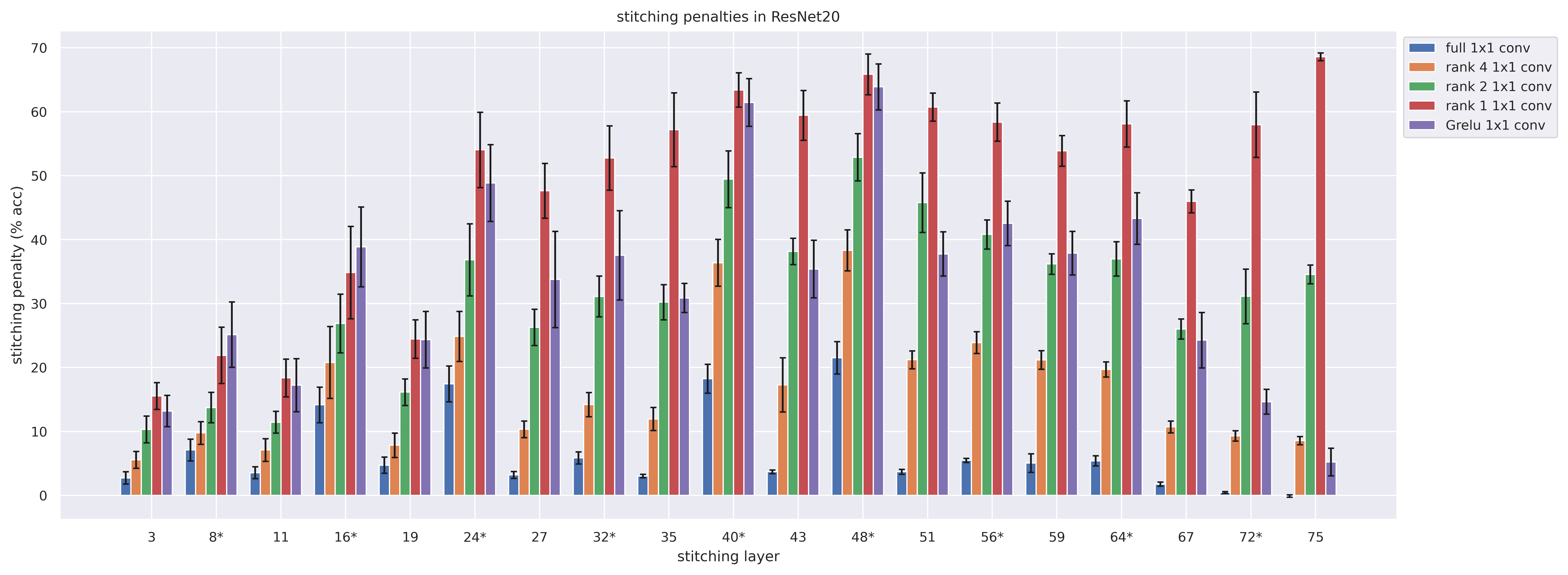}
    \caption{Full/reduced rank and \(G_{\relu}\) 1-by-1 convolution stitching penalties \eqref{eq-stitching-penalty} for ResNet20s on CIFAR-10. Confidence intervals were obtained by evaluating stitching penalties for 16 pairs of models trained with different random seeds. Accuracy of the models was 89.9 \( \pm \) 0.2 \%. Layers marked with `*' occur inside residual blocks  (\cref{rmk:rn-failure}).}
    \label{fig:stitch-penalties-resnet}
\end{figure}

\begin{figure}
    \centering
    \includegraphics[width=0.9\linewidth]{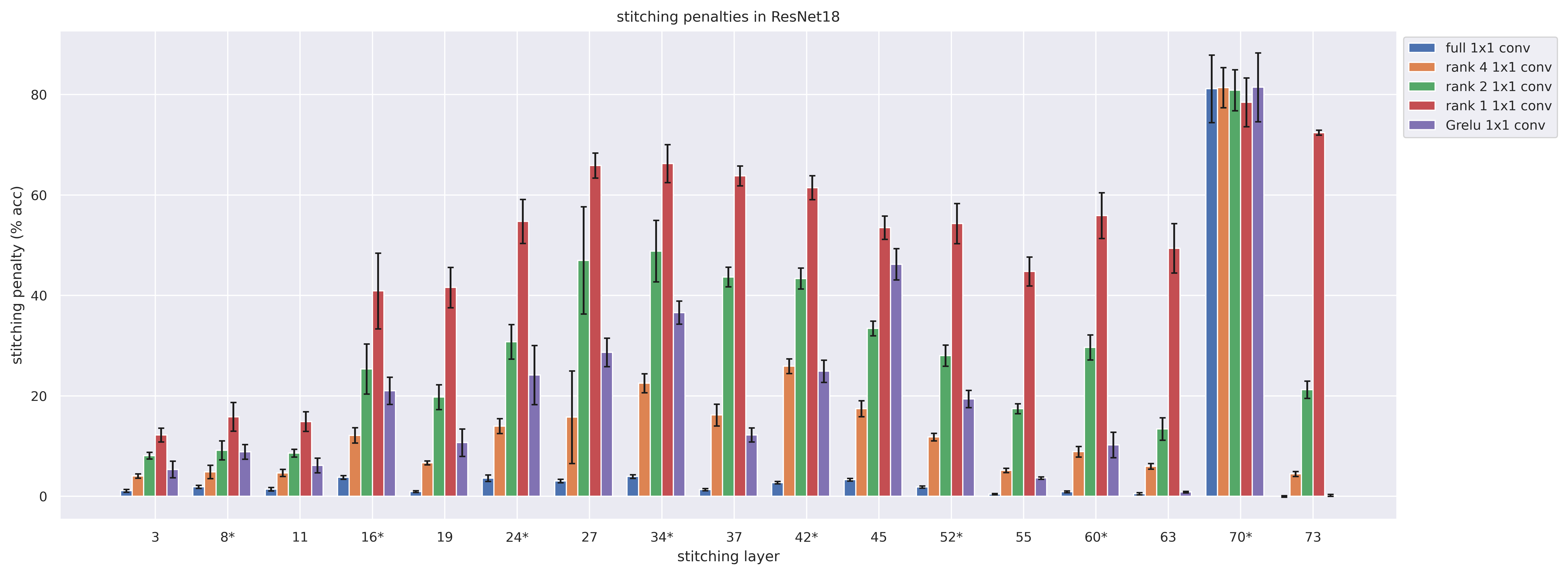}
    \caption{Full/reduced rank and \(G_{\relu}\) 1-by-1 convolution stitching penalties \eqref{eq-stitching-penalty} for ResNet18s on CIFAR-10. Confidence intervals were obtained by evaluating stitching penalties for 16 pairs of models trained with different random seeds. Accuracy of the models was 92.9 \( \pm \) 0.2 \%. Layers marked with `*' occur inside residual blocks  (\cref{rmk:rn-failure}).}
    \label{fig:stitch-penalties-resnet18}
\end{figure}

\begin{table}[th]
\caption{ResNet20 with $\leakyrelu$ CIFAR-10 accuracy}
\label{table:leaky-acc}
\begin{center}
\begin{tabular}{r|rrrrrrr}
\multicolumn{8}{c}{\textsc{$\leakyrelu$ slope}}\\\hline\hline
 & $1\mathrm{e}{-4}$ & $1\mathrm{e}{-3}$ & $1\mathrm{e}{-2}$ & 0.1 & 0.5 & 0.9 & 1.0
 \\ \hline 
 \% acc. & $89.3 \pm 0.2$ & $89.4 \pm 0.2$ & $89.2 \pm 0.2$ & $89.4 \pm 0.1$ & $86.6 \pm 0.1$ & $73.0 \pm 0.2$ & $41.8 \pm 0.1$\\
\end{tabular}
\end{center}
\end{table}

\begin{figure}
    \centering
    \includegraphics[width=0.95\linewidth]{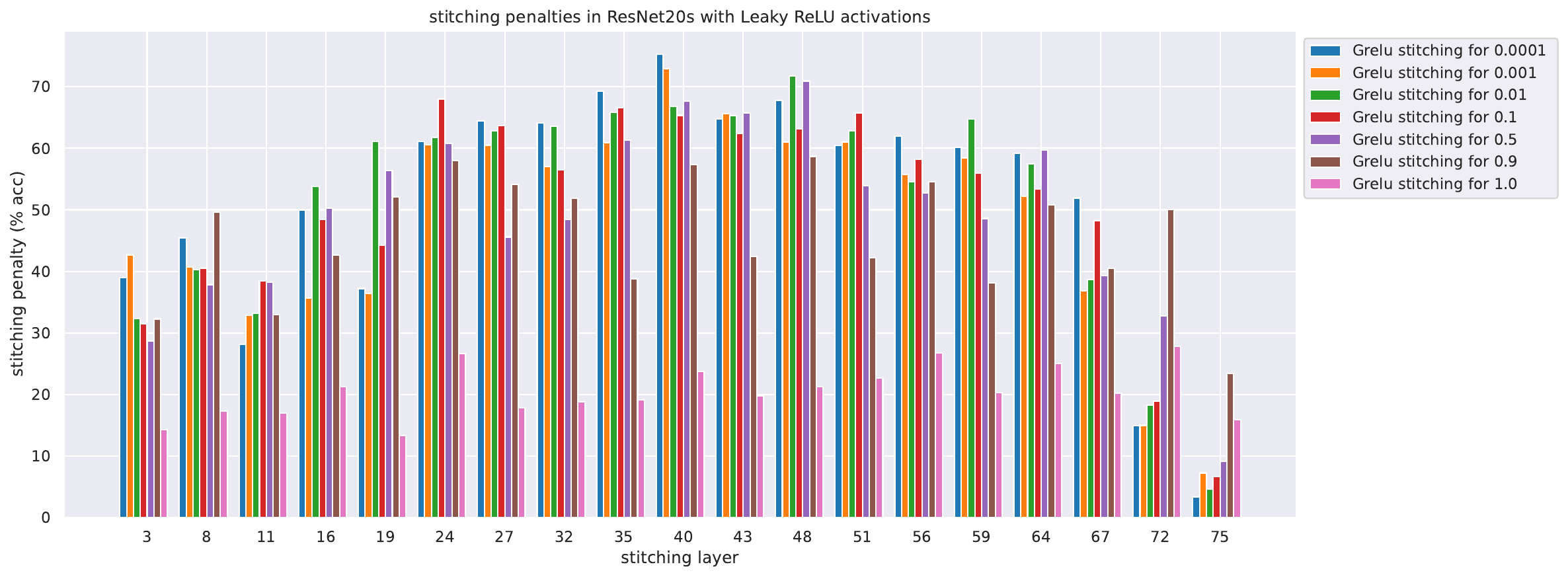}
    \caption{Stitching penalties \eqref{eq-stitching-penalty} for ResNet20s trained with different random seeds on CIFAR-10, where respective ResNet20 models are trained with $\leakyrelu$ activation functions with different slopes. Accuracy of the models with different $\leakyrelu$ slopes is given in \cref{table:leaky-acc}.}
    \label{fig:stitch-penalties-leaky}
\end{figure}

\Cref{fig:rn-pro} contains \(G_{\relu}\) and orthogonal Procrustes dissimilarities for the ResNet20. The 2 measures seem qualitatively quite similar in this case. For the most part the same applies to the ResNet18 in \cref{fig:rn-pro18}, with the exception of layer 70 (penultimate inside-a-residual-block layer), where we see high \(G_{\relu}\) \emph{similarity}, in conflict with both \cref{fig:stitch-penalties-resnet18} and \cref{fig:wreath-cka-resnet18} below. 

\begin{figure}[h]
    \centering
    \includegraphics[width=0.9\linewidth]{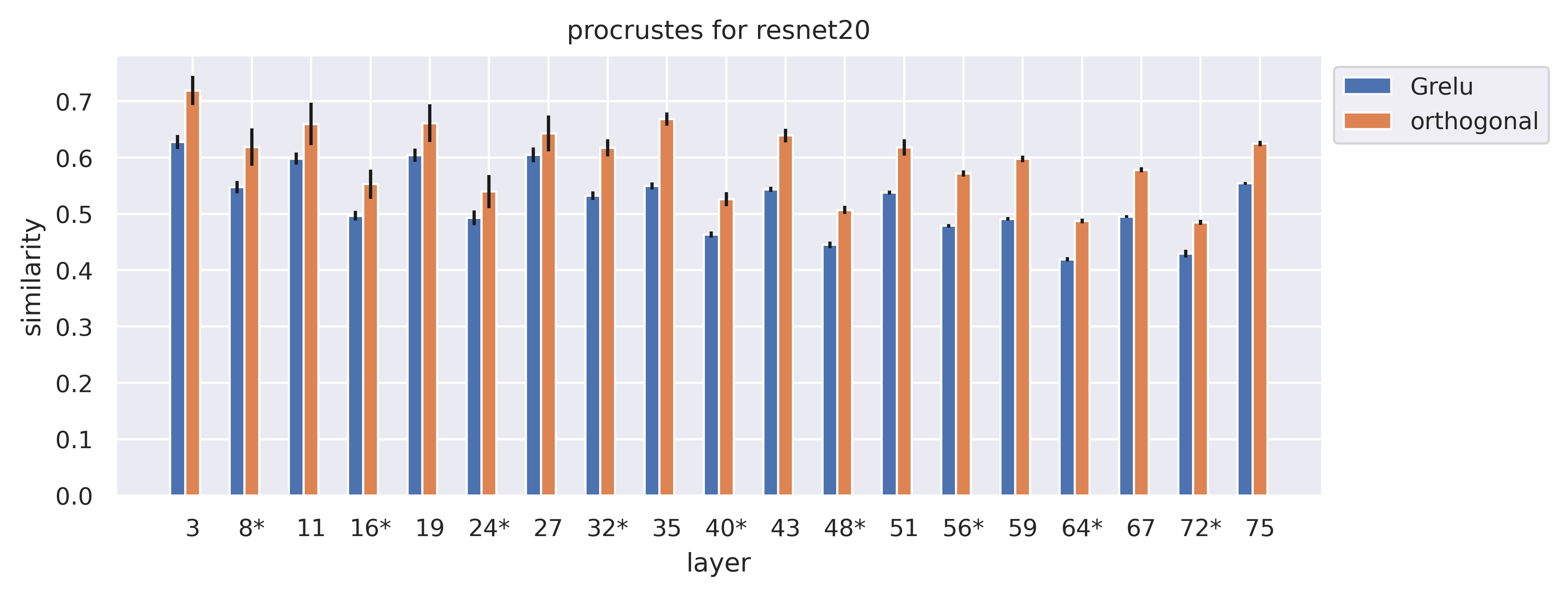}
    \caption{\(G_{\relu}\) and orthogonal Procrustes dissimilarities for two ResNet20s trained on CIFAR-10 with different random seeds. Layers marked with `*' occur inside residual blocks  (\cref{rmk:rn-failure}). Confidence intervals were obtained by evaluating similarities for 32 pairs of models trained with different random seeds.}
    \label{fig:rn-pro}
\end{figure}

\begin{figure}[h]
    \centering
    \includegraphics[width=0.9\linewidth]{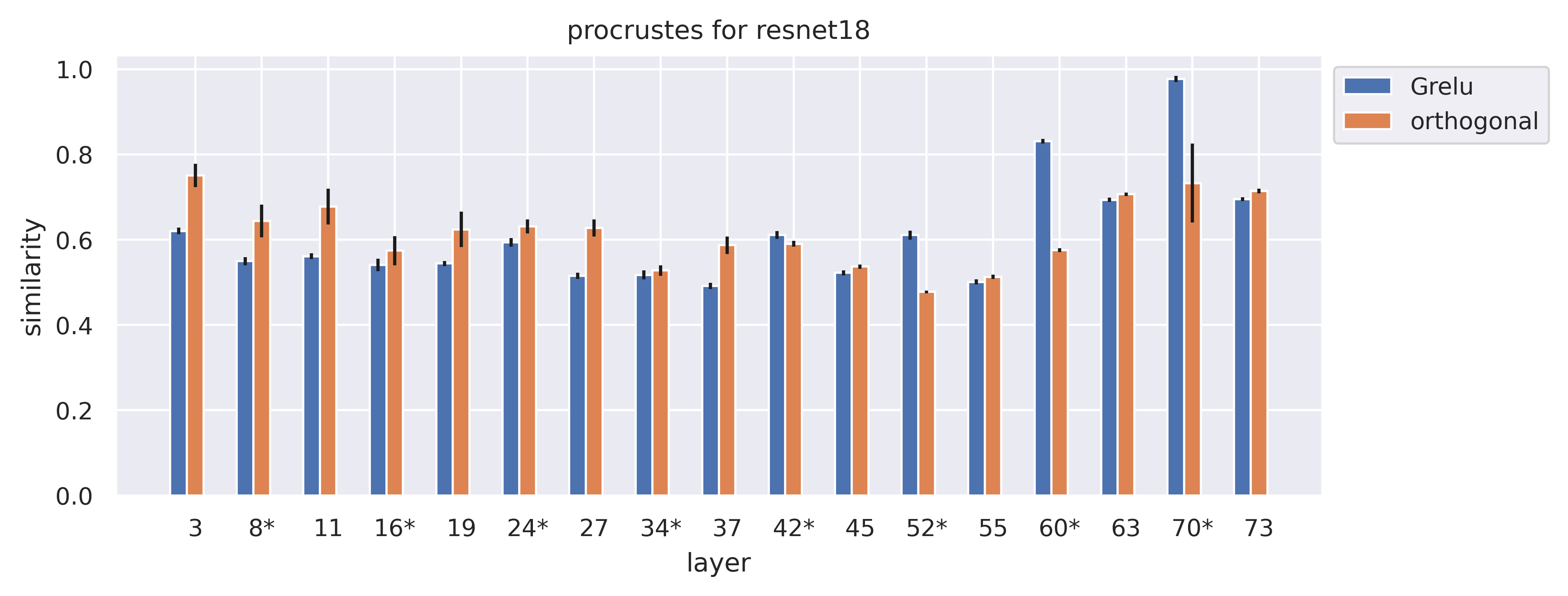}
    \caption{\(G_{\relu}\) and orthogonal Procrustes dissimilarities for two ResNet18s trained on CIFAR-10 with different random seeds. Layers marked with `*' occur inside residual blocks  (\cref{rmk:rn-failure}). Confidence intervals were obtained by evaluating similarities for 32 pairs of models trained with different random seeds.}
    \label{fig:rn-pro18}
\end{figure}

We include \(G_{\relu}\) and orthogonal CKA dissimilarities for the wider ResNet18 in \cref{fig:wreath-cka-resnet18}. For the most part the qualitative remarks on \cref{fig:wreath-cka-resnet} apply here as well --- note also the extreme dissimilarity in layer 70 (in both  \(G_{\relu}\) and orthogonal cases) consistent with \cref{fig:stitch-penalties-resnet18}.

\begin{figure}
    \begin{subfigure}[h]{0.5\linewidth}
        \centering
        \includegraphics[width=\linewidth]{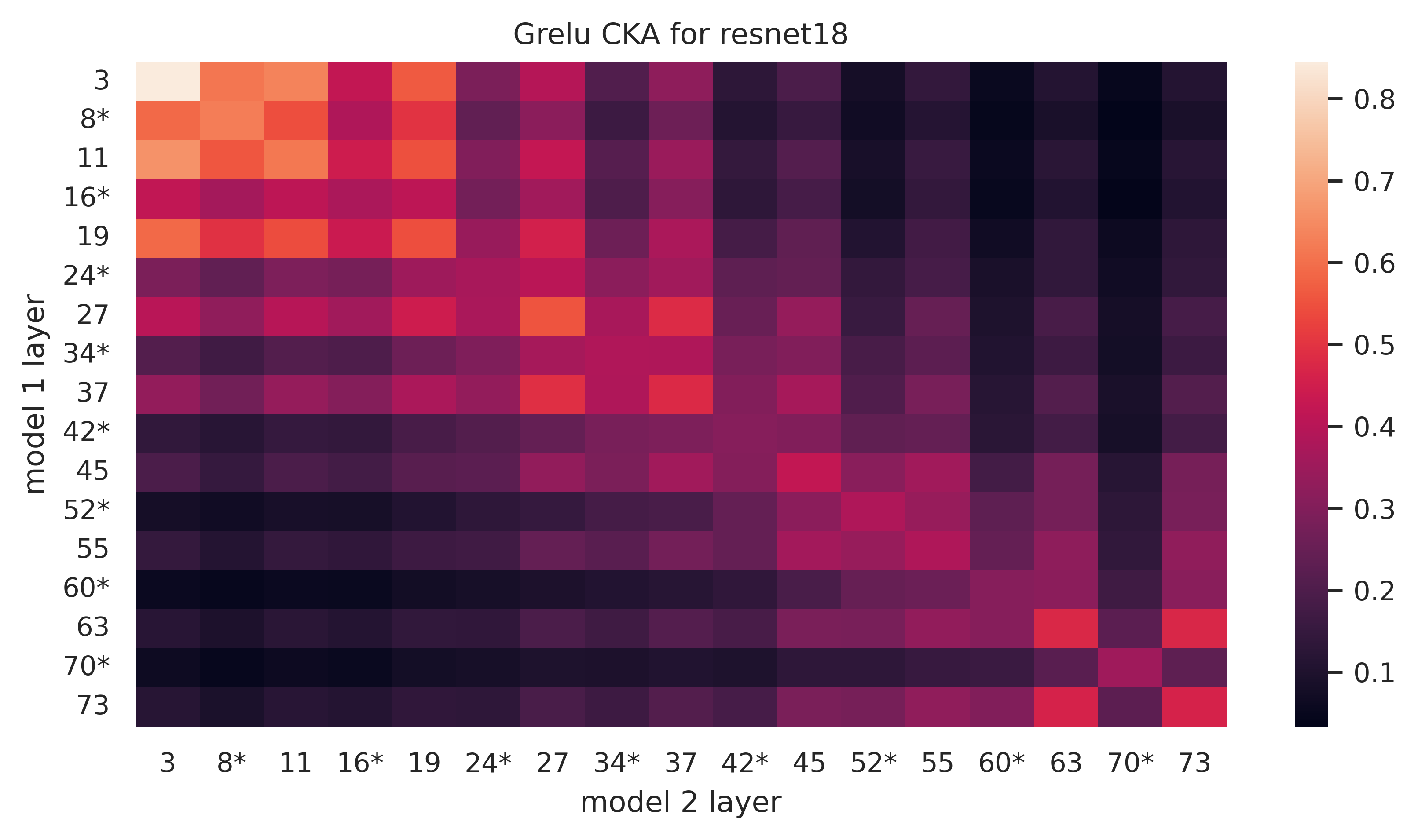}
        \subcaption[]{}
    \end{subfigure}
    \begin{subfigure}[h]{0.5\linewidth}
        \centering
        \includegraphics[width=\linewidth]{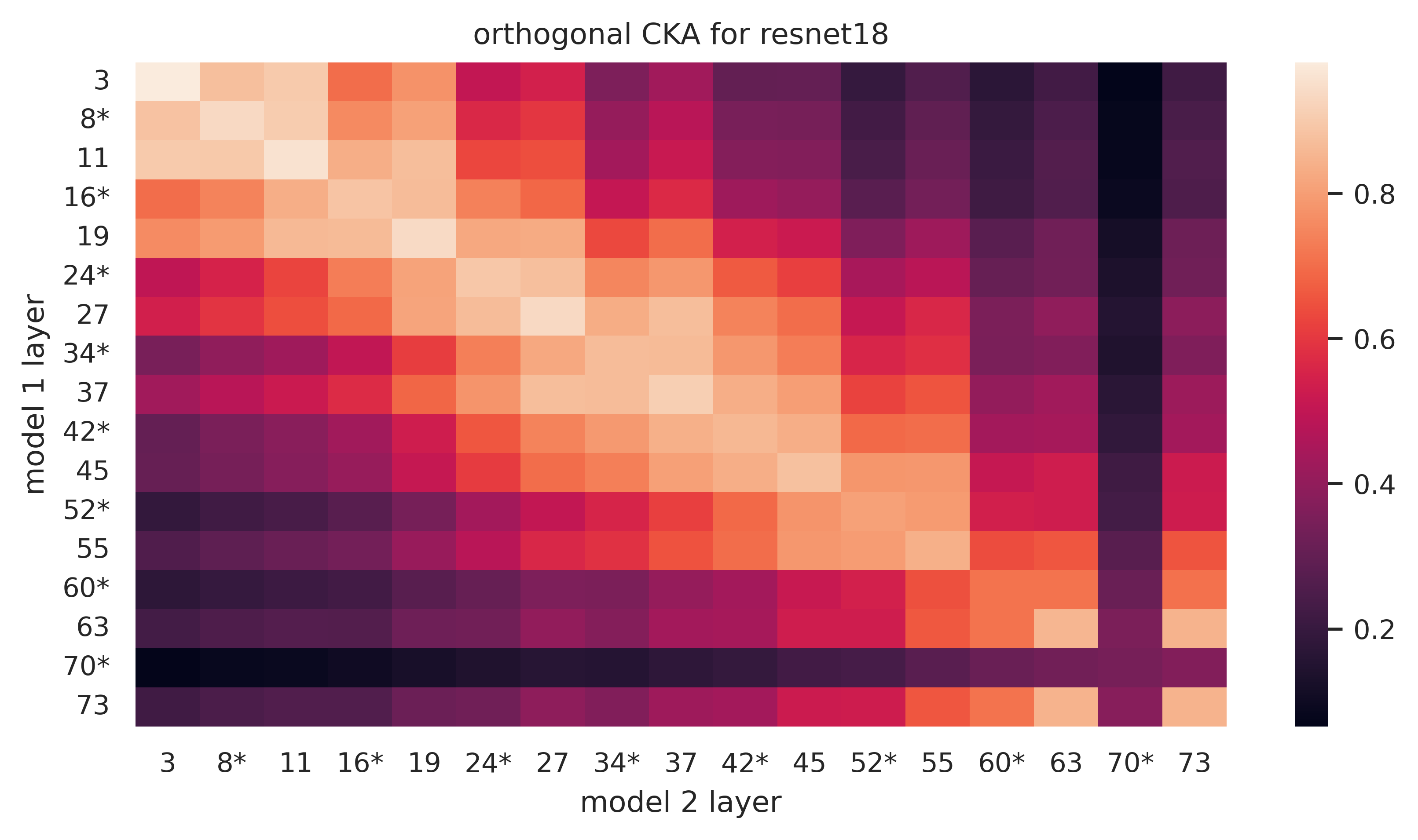}
        \subcaption[]{}
    \end{subfigure}
    \caption{\(G_{\relu}\)-CKA and orthogonal CKA for two ResNet18s trained on CIFAR-10 with different random seeds. Layers marked with `*' occur inside residual blocks  (\cref{rmk:rn-failure}). 
    Results averaged over 16 such pairs of models.
    }\label{fig:wreath-cka-resnet18}
\end{figure}

\subsection{Stitching for a Vision Transformer}
\label{sec:stitch-vit}

\begin{figure}
    \centering
    \includegraphics[width=0.95\linewidth]{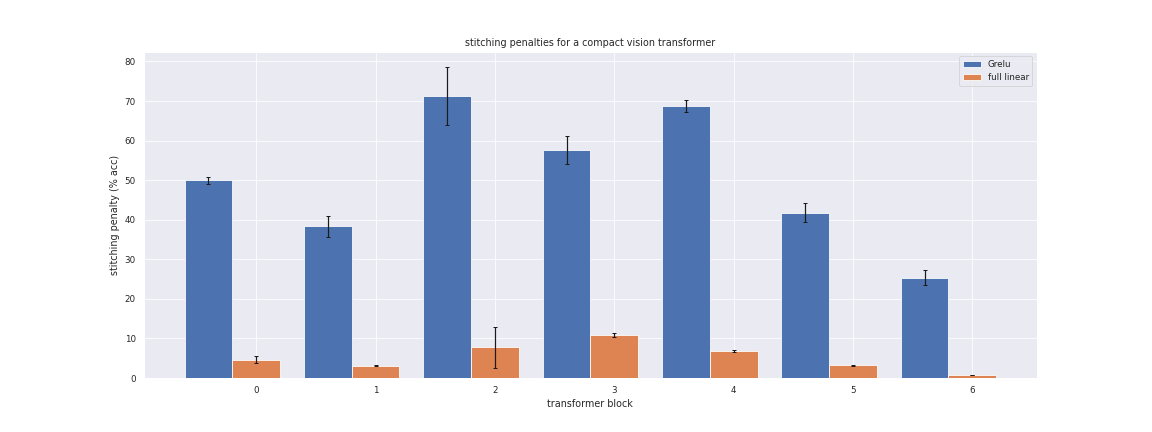}
    \caption{Linear and \(G_{\relu}\) stitching penalties \eqref{eq-stitching-penalty} for 5 pairs of vision transformers \cite{hassani2021escaping} trained on CIFAR-10 with different random seeds. Stitching was performed after every transformer block, and notably these blocks do not end in activation functions.}
    \label{fig:stitch-penalties-cvt}
\end{figure}

Here we include an additional stitching experiments with vision transformers from \cite{hassani2021escaping} trained on CIFAR-10. \Cref{fig:stitch-penalties-cvt} include results for linear stitching and \(G_{\relu}\) stitching after each transformer encoder layer. The large stitching penalties for \(G_{\relu}\) are expected due to the lack of activation functions after the linear (feedforward) layers for each encoder layer. 

We train 10 Compact Convolutional Transformers with sinusoidal positional encodings and six transformer blocks. The average model accuracy was $98\%$ using the distributed training-from-scratch recipe from \cite{hassani2021escaping}, which includes $6\mathrm{e}{-2}$ weight decay, augmentations (namely mixup \cite{zhang2018mixup} and CutMix \cite{yun2019cutmix}), label smoothing, and AdamW with a learning rate of $55\mathrm{e}{-5}$ with cosine scheduling.

\subsection{Choosing the negative-\(\ell_2\) regularization multiplier \(\alpha\) with cross validation}
\label{sec:crossval}

Here we briefly describe an experiment in which the multiplier \(\alpha\) of \cref{sec:approxoptperm} is chosen by cross validation. Most of the details are as in \cref{appendix-optimization-perm}. However, we create a random  80-20 split of the CIFAR10 training set into a smaller training and cross-validation set. We then learn \(G_{\relu}\) stitching layers  for each \(\alpha \in \{10^k \, | \, k = -3, -2, \dots, 1\}\), as in \cref{sec:approxoptperm}, with the exception that we only optimize over our training split for 5 epochs and do not give the permutations a head start. Then, the \(\alpha\) corresponding to highest accuracy on our cross validation set is selected, the corresponding model weights are loaded and we report accuracy on the regular CIFAR10 validation set. In \cref{fig:stitch-penalties-crossval} we obtain very similar results to those in \cref{fig:stitch-penalties}. Perhaps more interestingly, in \cref{fig:mCNN_best_alphas} we see that there is substantial variance in the \(\alpha\) selected by cross-validation, at all layers of our Myrtle CNN network --- for reference, \(\alpha=0.1\) is used in the rest of this paper. This suggests that the particular choice of \(\alpha\) is not essential to our method. Results for ResNet architectures are qualitatively similar and omitted for brevity.

\begin{figure}[h]
\centering
  \includegraphics[width=\linewidth]{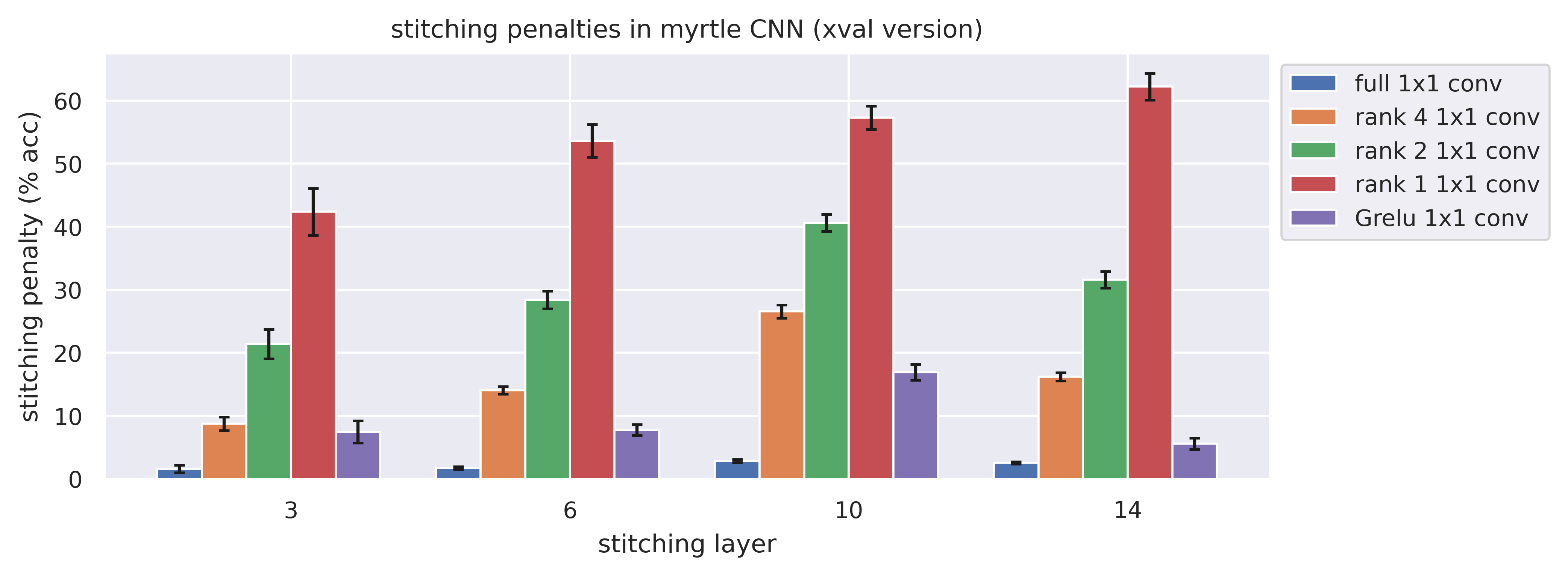}
  \caption[]{Full, reduced rank, and \(G_{\relu}\)  1-by-1 convolution stitching penalties \eqref{eq-stitching-penalty} for Myrtle CNNs \cite{pageHowTrainYour2018} on CIFAR-10, in which \(\alpha\) is chosen by cross-validation. Confidence intervals were obtained by evaluating stitching penalties for 32 pairs of models trained with different random seeds. The accuracy of the models was 91.3 \( \pm \) 0.2 \%. }\label{fig:stitch-penalties-crossval}
\end{figure}

\begin{figure}
    \centering
    \includegraphics[width=\linewidth]{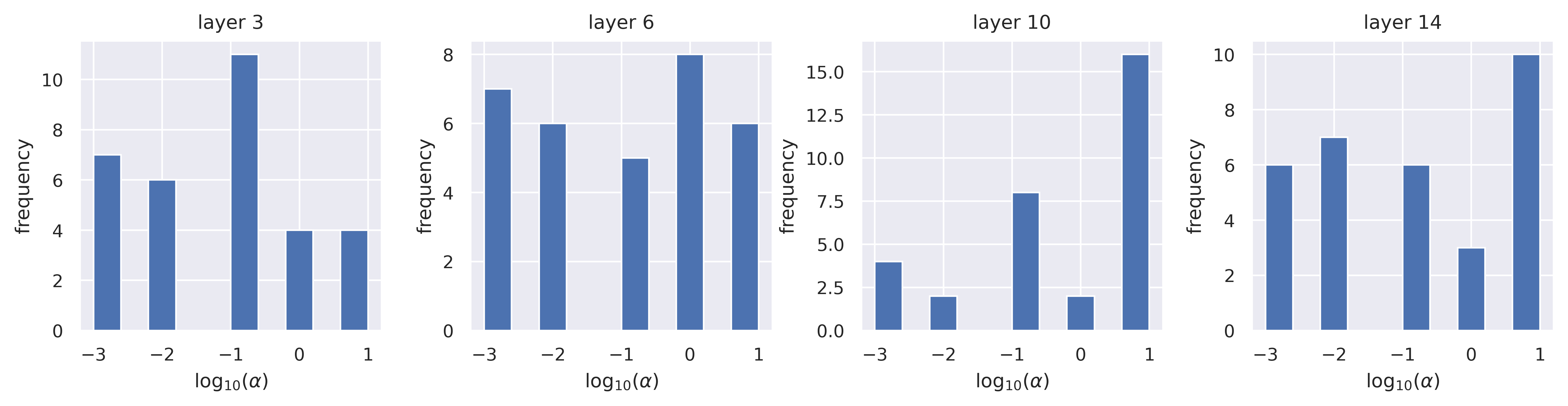}
    \caption{The histograms of \(\alpha\) selected by cross validation in the experiment of \cref{fig:stitch-penalties-crossval}}
    \label{fig:mCNN_best_alphas}
\end{figure}

\subsection{Stitching with \(\ell_1\)-regularized (a.k.a. LASSO) fully-connected layers}
\label{sec:lasso}

In this section we present results of a small experiment stitching with full 1-by-1 convolutional layers with \(\ell_1\) penalty \(\lambda \nrm{W}_1\), where \(\nrm{W}_1 = \sum_{ij} \nrm{W_{ij}}\), as in \cite{csiszarikSimilarityMatchingNeural2021}. We vary \(\lambda \in \{0.001, 0.01, 0.1\} \) and also tried \(\lambda = 1\) but found the stitching optimization to be unstable due the magnitude of the \(\ell_1\) penalty (possible this could have been counteracted by decreasing the learning rate). We also record the \emph{sparsity} of the stitching weights --- if \(n_l \) is the relevant channel dimension, and hence also the number of rows/columns in the square stitching matrix \(W\), we measure this as
\begin{equation}
    \label{eq:sparsity}
    \frac{\lvert \{ (i, j) \in \{1, \dots, n_l\}^2 \, | \, \nrm{W_{ij}} \leq \tau \} \rvert}{n_l^2}
\end{equation}
where \(\tau\) is a threshold, in our experiments chosen to be \(0.001\). Note that the sparsity of a  \(G_{\relu}\) is equal to \(\frac{n_l^2-n_l}{n_l^2} = 1-\frac{1}{n_l}\). \Cref{fig:mCNN-lasso} illustrates the results of these experiments, and seems to show that \(G_{\relu}\) layers achieve low stitching penalties for their sparsity levels. Also note that in the final layer the scatter points corresponding to \(G_{\relu}\) and \(\lambda = 0.01\) nearly overlap.

\begin{figure}
    \centering
    \includegraphics[width=\linewidth]{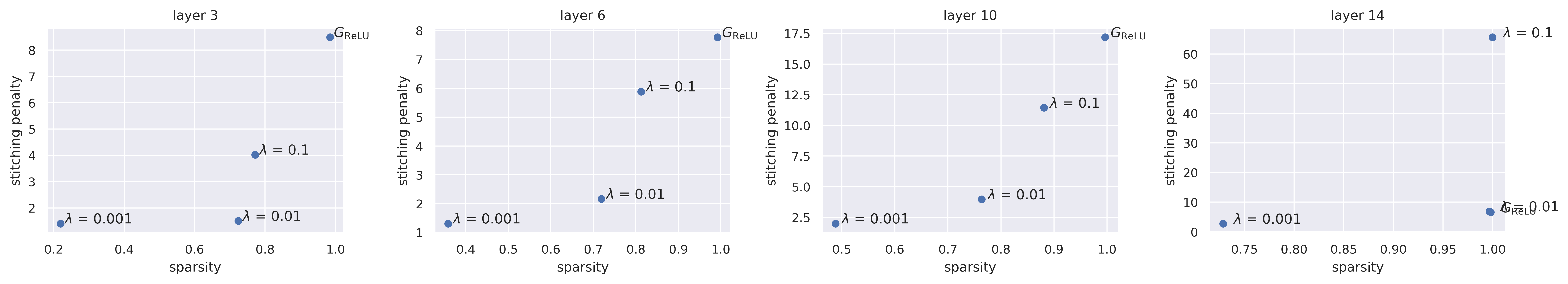}
    \caption{\(\ell_1\)-regularized stitching penalties versus sparsity for Myrtle CNNs, with \(G_{\relu}\) stitching penalties included for comparison. Penalties and sparsities are averaged over evaluations on 32 pairs models trained with different random seeds.}
    \label{fig:mCNN-lasso}
\end{figure}

\subsection{Implementing dissimilarity measures}
\label{sec:dissimilarity-details}

As mentioned in \cref{sec:shapemetrics}, we aim to capture invariants to
permuting and scaling channels, but not spatial coordinates. This requires some
care; practically speaking it means we cannot simply flatten feature vectors. 

In all cases we compute our measures over the entire CIFAR-10 validation set. In
particular, we do not require batched computations as in \cite{nguyenWideDeepNetworks2021}.

\subsubsection{Procrustes}

As in \cite{williamsGeneralizedShapeMetrics2022,ding2021grounding}
\[ \min_{P\in \Sigma_d} \nrm{\tilde{X}- \tilde{Y}P} = \sqrt{\min_{P\in \Sigma_d}
\nrm{\tilde{X}- \tilde{Y}P}^2} \] so it suffices to consider minimizing the
Frobenius norm-squared, and expanding as
\[\nrm{\tilde{X}- \tilde{Y}P}^2 = \nrm{\tilde{X}}^2 + \nrm{\tilde{Y}}^2 - 2
\tr(\tilde{X}^T \tilde{Y}P)\] we see that this is equivalent to
\emph{maximizing} \(\tr(\tilde{X}^T \tilde{Y}P)\). In our case \(X, Y\) have
shape \((N, C, H, W)\) where \(N\) is the size of the entire CIFAR-10 validation
set and \(C, H, W\) are the channels, height, and width at the given hidden
layer respectively. We want \(P\) to be a \(C\times C\) permutation matrix. Hence for
\(\tilde{X}^T \tilde{Y}\) we compute the tensor dot product
\begin{equation}
  \label{eq:tensordot}
  (\tilde{X}^T \tilde{Y})_{c, c'} = \sum_{n, h, w} \tilde{X}_{n, c, h, w} \tilde{Y}_{n,
c', h, w}
\end{equation}

The same method is used for orthogonal Procrustes, where instead of
\lstinline{scipy.optimize.linear_sum_assignment}  we use the nuclear norm of
\cref{eq:tensordot} as in \cite{dingGroundingRepresentationSimilarity2021}.

\subsubsection{CKA}

In this case for a set of hidden features \(X\) of shape \((N, C, H, W)\) as
above, we first subtract the mean over all but the channel dimension:
\[ X_{n, c, h, w} \gets X_{n, c, h, w} - \frac{1}{NHW} \sum_{n', h', w'} X_{n',
c, h', w'} \]
and divide by the norms over all but the channel dimension:\footnote{In
retrospect, it would arguably make more sense to use standard deviation rather than
\(\ell_2\) norm; however, for us the choice is irrelevant in the end since the 2
choices differ by a factor of \(\sqrt{NHW}\) which gets cancelled in \cref{eq:muCKA}.}
\[X_{n, c, h, w} \gets \frac{X_{n, c, h, w}}{\sqrt{\sum_{n', h', w'} X_{n',
c, h', w'}^2}}. \]
Next, we compute a tensor dot product of \(X\) with itself \emph{over spatial
dimensions}, to obtain the shape \((N, N , C)\) tensor
\[ J_{m, n, c} := \sum_{h, w}  X_{n, c, h, w} X_{n, c, h, w} \]
and finally we apply \(\max\) over the channel dimension to get 
\[ K_{m, n} =  \max_c J_{m, n, c}.  \]

\begin{remark}
  It could be interesting to refrain from applying a dot product over spatial
  dimensions, and thus measure not only similarity between hidden features of
  different images, but similarity between hidden features of
  different images \emph{at certain locations}. However, the memory requirements would have been far beyond our computational limits.
\end{remark}

\subsection{Dissimilarity measures for network with constant channel width}

A notable feature of our plots in \cref{fig:wreath-cka,fig:wreath-cka-resnet,fig:wreath-cka-resnet18} is that the \(G_{\relu}\)-CKA exhibits a much more significant decay with network depth than its orthogonal counterpart. From a skeptical perspective, we thought this could have something to do with dimensionality. All the networks we looked at up to this point had the feature that their channel dimension grows exponentially with depth (as seen in the last 3 figures of the appendix). When we compute the kernels $\mathrm{max} (\tilde{x}\_i \odot \tilde{x}\_j)$, we encounter maxima of larger and larger sets of random variables as the channel dimension increases. \emph{If} the products inside these maxima were independent normal random variables (we are not claiming this is a reasonable heuristic), we'd expect the max to grow like $\Phi^{-1}(1-\frac{1}{n_l})$ where $n_l$ is the channel dimension. It seemed possible that something along these lines could cause $G_{\mathrm{ReLU}}$-CKA to drift as depth (in our experiments correlated with channel dimension) increases. Note that the dot product kernel seems comparatively immune, since (with the same heuristics of normal distribution) the expected value of $\langle \tilde{x}, \tilde{y} \rangle $ is 0 regardless of dimension.

Motivated by this train of thought, we evaluated all 4 dissimilarity measures of \cref{sec:shapemetrics} on a variant of our Myrtle CNN with \emph{constant channel dimension}. The architecture of this network is identical to the one shown in \cref{fig:arch-mCNN} with the exception that all channel dimensions are 512. In \cref{fig:cw-wreath-procrustes,fig:cw-wreath-cka} we see that these constant width CNNs exhibit qualitatively very similar dissimilarity measures as their non-constant width counterparts. This suggests that the \(G_{\relu}\)-CKA  decay with network depth is \emph{not} an artifact of increasing channel dimension.

We speculate that it's possible that the decay of  $G_{\mathrm{ReLU}}$-CKA is due to something like the \emph{superposition hypothesis} for hidden layer features \cite{olah2020zoom,elhage2022solu}. Roughly, in overcomplete cases where the model can use more features than basis directions in a hidden layer, it may be encoding $m$ nearly orthogonal features across $n<m$ basis directions. If this encoding is not consistent across random seeds, we expect $G_{\mathrm{ReLU}}$-CKA to be smaller. Finally, polysemanticism may increase with depth. In a simple thought experiment, if each basis direction in layer $l$ has $a$ features encoded, layer $l+1$ will have $2a$ features per direction if it each neuron in $l+1$ simply sums over two neurons in $l$. Again assuming the combinations of features occuring in this polysemanticism vary accross random seeds, we would expect $G_{\mathrm{ReLU}}$-CKA to be smaller. Simply put, superposition and polysemanticism would seem to preclude alignment of the hidden features of different networks with permutations and scaling alone.

\begin{table}[]
    \begin{tabular}{lllll}
\toprule
{} &          layer 3 &          layer 6 &         layer 10 &         layer 14 \\
\midrule
\(G_{\relu}\)    &  0.8176 \(\pm\) 0.007 &  0.7602 \(\pm\) 0.005 &  0.5691 \(\pm\) 0.005 &  0.4971 \(\pm\) 0.003 \\
orthogonal &  0.8460 \(\pm\) 0.008 &  0.6735 \(\pm\) 0.005 &  0.5409 \(\pm\) 0.003 &  0.6050 \(\pm\) 0.002 \\
\bottomrule
\end{tabular}
\caption{\(G_{\relu}\) and orthogonal Procrustes similarities for \emph{constant channel width} Myrtle CNNs trained on CIFAR-10. Confidence intervals were obtained by evaluating similarities for 4 pairs of models trained with different random seeds. 
}\label{fig:cw-wreath-procrustes}
\end{table}

\begin{figure}
    \begin{subfigure}[h]{0.5\linewidth}
        \centering
        \includegraphics[width=\linewidth]{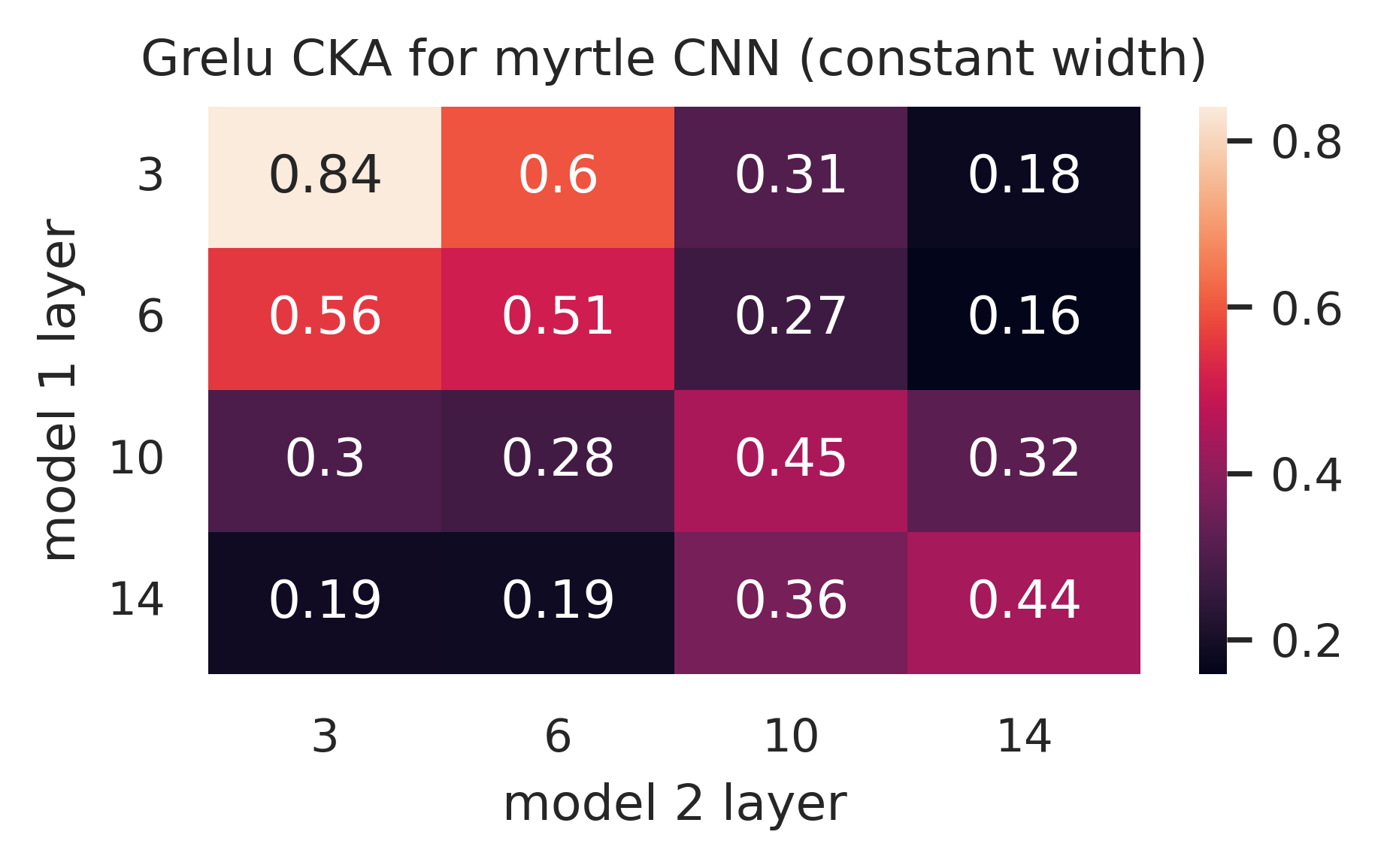}
        \subcaption[]{}
    \end{subfigure}
    \begin{subfigure}[h]{0.5\linewidth}
        \centering
        \includegraphics[width=\linewidth]{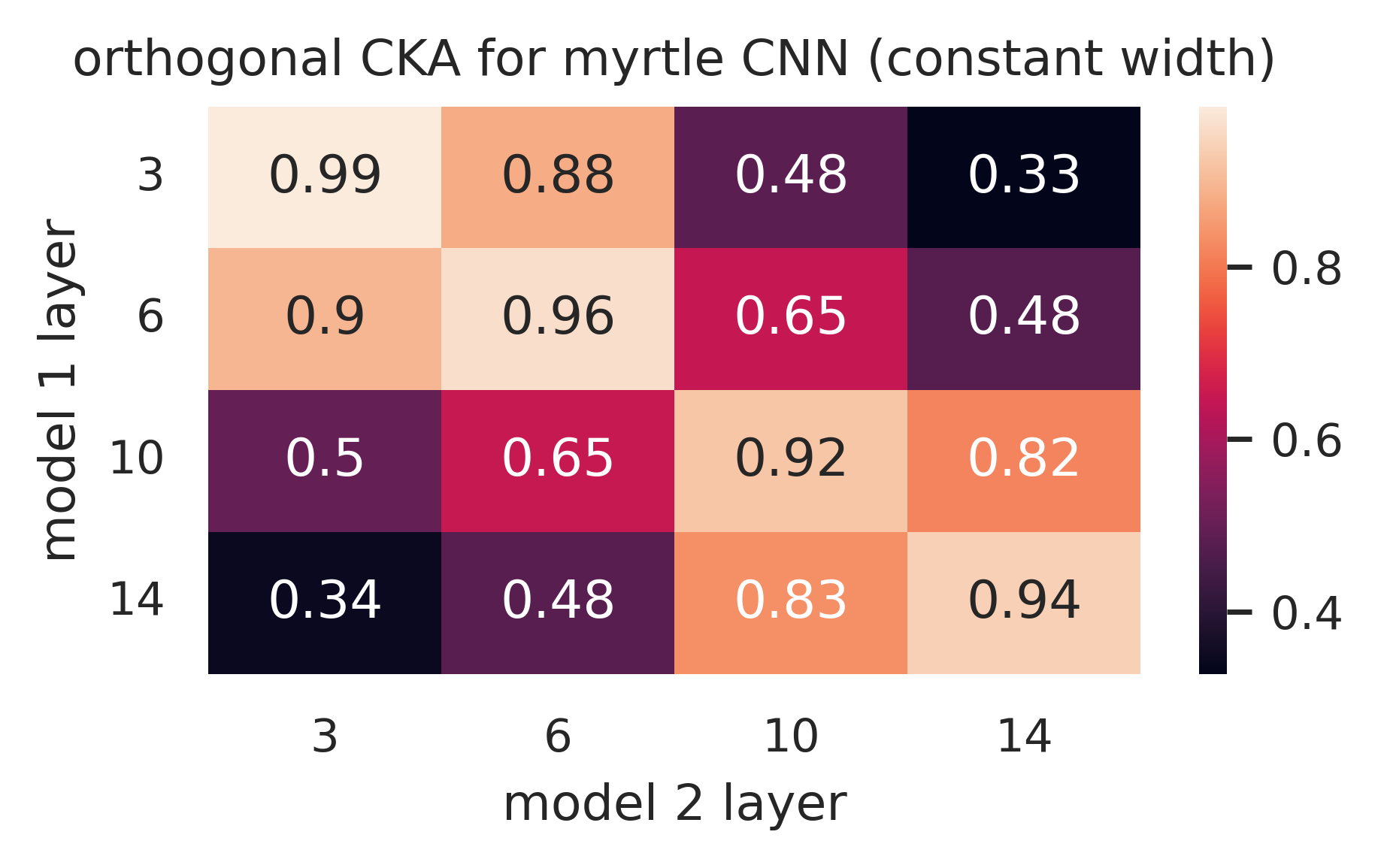}
        \subcaption[]{}
    \end{subfigure}
    \caption{\(G_{\relu}\)-CKA and orthogonal CKA for two \emph{constant channel width} Myrtle CNNs trained on CIFAR-10 with different random seeds. Results averaged over 4 such pairs of models 
    .}\label{fig:cw-wreath-cka}
\end{figure}

\section{Proofs}
\label{appendix-proofs}

\subsection{A proof of \cref{lem:intertwiner}, plus some abstractions thereof}

\begin{proof}[Proof of  \cref{lem:intertwiner}]
  Since by definition \(G_{\sigma_n} \subseteq GL_n(\RR)\), to prove
  \(G_{\sigma_n}\) is a subgroup it suffices to show that if \(A_1, A_2 \in
  G_{\sigma_n}\) then \(A_1 A_2^{-1} \in G_{\sigma_n}\). By hypotheses, there
  are matrices \(B_1, B_2 \in GL_n(\RR)\) so that 
  \begin{align}
    \sigma_{n} \circ A_1 &= B_1 \circ \sigma_{n} \label{eq:li1} \\
     \text{  and      }\quad  \sigma_{n}  \circ A_2 &= B_2 \circ \sigma_{n}. \label{eq:li2}
  \end{align}
  Applying \(A_2^{-1}\) on the right hand side of \cref{eq:li1} gives 
  \begin{equation}
    \label{eq:li1mod}
    \sigma_n  \circ (A_1 A_2^{-1}) = B_1 \circ \sigma_n \circ (A_2^{-1}).
  \end{equation}
  On the other hand, applying
  \(A_2^{-1}\) on the right hand side of \cref{eq:li2} gives \(\sigma_n  =
  B_2 \circ \sigma_n \circ (A_2^{-1})\) and hence  
  \begin{equation}
    \label{eq:li2mod}
    \sigma_n \circ  (A_2^{-1}) = B_2^{-1} \circ \sigma_n .
  \end{equation}
  Combining \cref{eq:li1mod,eq:li2mod} we obtain 
  \begin{equation}
    \label{eq:li3}
    \sigma_n \circ (A_1 A_2^{-1}) = B_1 B_2^{-1} \circ \sigma_n 
  \end{equation} 
  and hence \(G_{\sigma_n}\) is a subgroup. Next, we solve 
  \[ \sigma_{n} \circ A = B \circ \sigma_{n} \] for \(B\) in terms of \(A\) by
  evaluating both sides at \(e_1,\dots, e_n \in \RR^n\) (standard basis
  vectors). Letting \(A[:, j]\) denote the \(j\)-th column of \(A\) we obtain 
  \[ \sigma_n(A[:, j]) = B \sigma_n(e_j), \text{ for } j=1, \dots, n \] and
  stacking these columns to obtain the full \(n\times n\) matrix yields
  \[ \sigma(A) = B \sigma(I) \] where \(I \in GL_n(\RR)\) is the identity matrix
  and  \(\sigma(A)\) denotes \(\sigma\) applied to the coordinates of \(A\)
  (similarly for \(\sigma(I)\)). As \(\sigma(I)\) is invertible by hypotheses,
  this implies \( B = \sigma(A)  \sigma(I)^{-1} =: \phi_\sigma(A)\), and finally
  substituting \(B_i  = \phi_\sigma(A_i) \) for \(i =1, 2\) in \cref{eq:li3}
  shows that 
  \[\sigma_n \circ (A_1 A_2^{-1}) = \phi_\sigma(A_1) \phi_\sigma(A_2)^{-1} \circ
  \sigma_n \]
  while at the same time 
  \[ \sigma_n \circ (A_1 A_2^{-1}) = \phi_\sigma(A_1 A_2^{-1})  \circ \sigma_n
  \] so that \(\phi_\sigma(A_1 A_2^{-1})  \circ \sigma_n  = \phi_\sigma(A_1)
  \phi_\sigma(A_2)^{-1} \circ \sigma_n\). Using the invertibility of
  \(\sigma(I)\) one more time, we conclude 
  \[ \phi_\sigma(A_1 A_2^{-1})   = \phi_\sigma(A_1) \phi_\sigma(A_2)^{-1},   \]
  which implies \(\phi_\sigma\) is a homomorphism.
\end{proof}

\begin{remark}[for the mathematically inclined reader]
  Here is a more abstract definition of \(G_{\sigma_n}\) that makes
\cref{lem:intertwiner} appear more natural: let \(X\) be a topological space with a
continuous (left) action of a topological group \(G\). There
is a natural (right) action of \(G\) on \(C(X, \RR) \) by precomposition (\((f,
g) \mapsto f \circ g\)). For any subspace \(V \subseteq
C(X, \RR)\)  define
\[ G_V := \{g \in G \, | \, V \cdot g \subseteq V\} \] (that is, the elements of \(G\)
stabilize \(V\) \emph{as a subspace}, but not necessarily pointwise --- one can show this is always a subgroup of \(G\)). Then, for every such subspace \(V\), the group
\(G_V\) acts linearly on \(V\), and if we have a basis \(f_1, \dots, f_n \in V\), we
can obtain a matrix representation of \(G\) in \(GL_n(\RR)\). To obtain the special case in \cref{lem:intertwiner}, we take \(X = \RR^n\), \(G
= GL_n(\RR)\) with the usual action, and \(V \) to be the subspace spanned by
the functions \(f_i(x_1, \dots, x_n) = \sigma(x_i) \). The condition that
\(\sigma(I)\) is invertible is equivalent to the condition that the column space of the matrix  \((f_i(e_j)\) is \(n\)-dimensional, which in turn implies  \(V\) is \(n\)-dimensional.
\end{remark}

We end this section with a lemma that allows for easy verification
that \(\sigma(I)\) is invertible. We used this on all the activation functions considered
in \cref{fig-intertwiner-groups}.

\begin{lemma}
  \label{lem:inv-of-sigI}
  Let \(\sigma: \RR \to \RR\) be any function and let \(I \in GL_n(\RR)\) be the
  identity matrix. Then \(\sigma(I)\) is invertible provided 
  \begin{equation}
    \label{eq:inv-condition}
    \sigma(1) \neq \sigma(0) \text{  and  } \sigma(1) \neq -(n-1)\sigma(0).
  \end{equation}
\end{lemma}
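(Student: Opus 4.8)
The plan is to exploit the very rigid structure of the matrix $\sigma(I)$. By the definition of $\sigma(A)$ recorded just before \cref{lem:intertwiner}, and since $I$ has ones on the diagonal and zeros off it, applying $\sigma$ entrywise produces a matrix with $\sigma(1)$ on the diagonal and $\sigma(0)$ everywhere else. The first step is therefore to record the identity
\[
  \sigma(I) \;=\; \big(\sigma(1) - \sigma(0)\big)\, I_n \;+\; \sigma(0)\, \mathbf{1}\mathbf{1}^T,
\]
where $\mathbf{1} \in \RR^n$ is the all-ones vector, expressing $\sigma(I)$ as a rank-one update of a scalar multiple of the identity.

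The second step is to compute $\det \sigma(I)$ by diagonalizing. The rank-one matrix $\mathbf{1}\mathbf{1}^T$ acts as multiplication by $n$ on the line $\RR\mathbf{1}$ and by $0$ on its $(n-1)$-dimensional orthogonal complement, so $\sigma(I)$ has eigenvalue $\sigma(1) + (n-1)\sigma(0)$ with multiplicity one and eigenvalue $\sigma(1) - \sigma(0)$ with multiplicity $n-1$. Hence
\[
  \det \sigma(I) \;=\; \big(\sigma(1) + (n-1)\sigma(0)\big)\,\big(\sigma(1) - \sigma(0)\big)^{\,n-1}.
\]
(One could equally well invoke the matrix determinant lemma, or simply do row reduction, but the eigenvalue bookkeeping is the cleanest.)

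The third and final step is to read off invertibility: the determinant above is nonzero exactly when each factor is nonzero, i.e. when $\sigma(1) \neq \sigma(0)$ and $\sigma(1) + (n-1)\sigma(0) \neq 0$, the latter being precisely $\sigma(1) \neq -(n-1)\sigma(0)$. Since \cref{eq:inv-condition} assumes both, $\sigma(I)$ is invertible, proving the lemma. I do not anticipate any genuine obstacle here — the argument is a short linear-algebra computation; the only point worth a remark is the degenerate case $n=1$, where the second hypothesis already reads $\sigma(1) \neq 0$ and the first is redundant, which I would note in a sentence rather than treat separately.
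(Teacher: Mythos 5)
Your proof is correct and follows essentially the same route as the paper: both decompose $\sigma(I)$ as a scalar multiple of the identity plus a multiple of $\mathbf{1}\mathbf{1}^T$, read off the eigenvalues $\sigma(1)+(n-1)\sigma(0)$ (once) and $\sigma(1)-\sigma(0)$ (with multiplicity $n-1$), and conclude invertibility from the hypotheses. The only cosmetic difference is that the paper writes the rank-one part via $N=\mathbf{1}\mathbf{1}^T-I$ rather than $\mathbf{1}\mathbf{1}^T$ itself.
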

\begin{proof}
  Let \(N = \mathbf{1}\mathbf{1}^T - I \). Then 
  \[ \sigma(I) = \sigma(1)I + \sigma(0) N.  \]
  Note that the eigenvalues of \(\mathbf{1}\mathbf{1}^T\) are \(n\)
  (corresponding to eigenvector \(\mathbf{1}\)) and 0's (corresponding to the
  orthogonal complement of \(\mathbf{1}\)). For any linear operator $A \in \Mat_n(\mathbb{R})$ with eigenvector/eigenvalue pair $(v,\lambda)$, $v$ is easily seen to be an eigenvector of $A - I$ with eigenvalue $\lambda - 1$. Hence the eigenvalues of \(N\) are
  \(n-1\)  and \(-1\), and it follows that the eigenvalues of
  \(\sigma(I)\) are 
  \[ \sigma(1) + \sigma(0)(n-1), \sigma(1)-\sigma(0), \dots,
  \sigma(1)-\sigma(0)\]
  which are all non-zero if and only if \cref{eq:inv-condition} holds.
\end{proof}

\begin{remark}
  \label{rmk:commuting}
  In particular \cref{eq:inv-condition} holds when \(\sigma(1)= 1, \sigma(0) =
  0\) (which holds for example when \(\sigma(x) = \relu(x)\) or \( \sigma(x) =
  x^d\)). In this situation, \(\sigma(I) = I\) and \(\phi_\sigma(A) =
  \sigma(A)\) (coordinatewise application of \(\sigma\)). For example, if
  \(\sigma \geq 0\) is non-negative, then \(\phi_\sigma(A) =
  \sigma(A)\) has non-negative entries.
\end{remark}

\subsection{Calculating intertwiner groups (for \cref{fig-intertwiner-groups})}
\label{sec:calc-intertwiner-grps}

We begin with two lemmas: the first puts a ``lower bound'' on \(G_{\sigma_n}\) 
and the second is a ``differential form'' of the definition of the
intertwiner group from \cref{sec:notation}. Together, these two results
effectively allow us to reduce calculation of intertwiner groups to the \(n=1\) case.

\begin{lemma}[{cf. \cite[\S 8.2.2]{Goodfellow-et-al-2016}, \cite[\S 3]{breaWeightspaceSymmetryDeep2019}}]
    \label{calc:perms}
    \(G_{\sigma_n}\) always contains the permutation matrices \(\Sigma_n\), and
    \(\phi_\sigma\) restricts to the identity on \(\Sigma_n\).
\end{lemma}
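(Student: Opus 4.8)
The plan is to verify directly that any permutation matrix satisfies the defining commutation relation of the intertwiner group, with the permutation matrix itself serving as the ``output'' matrix $B$. Concretely, fix a permutation $\pi \in S_n$ and let $P_\pi$ be the associated permutation matrix, so that $(P_\pi x)_i = x_{\pi^{-1}(i)}$ for all $x \in \RR^n$. The key observation is that coordinatewise application of $\sigma$ commutes with any coordinate reshuffling: since $\sigma_n$ acts independently on each coordinate by the same scalar function, permuting coordinates and then applying $\sigma$ to each gives the same result as applying $\sigma$ to each and then permuting. In symbols, $\sigma_n(P_\pi x) = \sigma_n\big((x_{\pi^{-1}(i)})_{i=1}^n\big) = (\sigma(x_{\pi^{-1}(i)}))_{i=1}^n = P_\pi\,\sigma_n(x)$, so $\sigma_n \circ P_\pi = P_\pi \circ \sigma_n$. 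Since $P_\pi \in GL_n(\RR)$, this exhibits $P_\pi$ as an element of $G_{\sigma_n}$ with witness $B = P_\pi$.

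Next I would identify $\phi_\sigma(P_\pi)$. By \cref{lem:intertwiner}, whenever $\sigma(I_n)$ is invertible the witness $B$ is uniquely determined and equals $\phi_\sigma(A)$; but here I must be slightly careful, because \cref{calc:perms} is stated for arbitrary continuous $\sigma$ without the invertibility hypothesis, so I cannot invoke uniqueness of $B$ unconditionally. Instead I argue directly: we showed $\sigma_n \circ P_\pi = P_\pi \circ \sigma_n$, which is precisely the statement that $B = P_\pi$ works, i.e. the action of $P_\pi$ before the nonlinearity is mirrored by the action of $P_\pi$ after it. When $\sigma(I_n)$ is invertible, so that $\phi_\sigma$ is defined, one additionally checks that $\phi_\sigma(P_\pi) = \sigma(P_\pi)\sigma(I_n)^{-1}$; since $P_\pi$ has entries only $0$ and $1$ and the pattern of $\sigma$ applied entrywise to $P_\pi$ is $\sigma(1)$ on the support of $\pi$ and $\sigma(0)$ off it, one sees $\sigma(P_\pi) = \sigma(1)P_\pi + \sigma(0)(\mathbf{1}\mathbf{1}^T - P_\pi) = P_\pi\big(\sigma(1)I + \sigma(0)(\mathbf{1}\mathbf{1}^T - I)\big) = P_\pi\,\sigma(I_n)$, using that $P_\pi \mathbf{1}\mathbf{1}^T = \mathbf{1}\mathbf{1}^T$. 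Hence $\phi_\sigma(P_\pi) = P_\pi \sigma(I_n)\sigma(I_n)^{-1} = P_\pi$, i.e. $\phi_\sigma$ restricts to the identity on $\Sigma_n$ (equivalently, to the inclusion $\Sigma_n \hookrightarrow GL_n(\RR)$).

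This is essentially all there is to it; there is no real obstacle, since the entire content is the trivial fact that a function applied coordinatewise commutes with coordinate permutations. The only point requiring a moment's thought is the bookkeeping around whether to invoke \cref{lem:intertwiner}'s uniqueness of $B$ or to compute $\sigma(P_\pi)$ by hand — I would do the latter to keep the lemma valid for all continuous $\sigma$, and just remark that the commutation relation $\sigma_n \circ P_\pi = P_\pi \circ \sigma_n$ is itself the assertion $\phi_\sigma(P_\pi) = P_\pi$ in the cases where $\phi_\sigma$ is defined. It is also worth noting explicitly that $\Sigma_n$ forms a subgroup of $G_{\sigma_n}$ automatically once we know it is contained in $G_{\sigma_n}$ and is closed under the group operations in $GL_n(\RR)$, which it manifestly is.
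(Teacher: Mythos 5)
Your proof is correct and follows essentially the same route as the paper: both arguments rest on the observation that a coordinatewise nonlinearity commutes with a permutation of coordinates, so $\sigma_n \circ P = P \circ \sigma_n$ and $P$ itself serves as the witness $B$. Your additional explicit computation $\sigma(P_\pi) = P_\pi\,\sigma(I_n)$, confirming $\phi_\sigma(P_\pi) = P_\pi$ when $\sigma(I_n)$ is invertible, is a harmless (and slightly more careful) elaboration of what the paper leaves implicit.
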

\begin{proof}
    If \(A \in \Sigma_n\) is a permutation matrix, so that \(Ae_i = e_{\pi(i)}\)
    where \(\pi\) is a permutation of \(\{1, \dots, n\}\) then for any \(x \in
    \RR^n\) we observe
    \[ A \sigma(x) = A(\sum_i \sigma(x_i) e_i) = \sum_i
    \sigma(x_i) A e_i = \sum_i \sigma(x_i) e_{\pi(i)} \] which is exactly
    \(\sigma\) applied coordinatewise to \(\sum_i x_i e_{\pi(i)} = Ax\).
\end{proof}

\begin{corollary}
  \label{cor:perms}
  If \(A \in GL_n(\RR), P \in \Sigma_n\), and \(AP \in G_{\sigma_n}\) or \(PA \in
  G_{\sigma_n}\), then \(A \in G_{\sigma_n}\).
\end{corollary}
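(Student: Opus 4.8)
The plan is to deduce this immediately from the group structure of $G_{\sigma_n}$ together with the containment $\Sigma_n \subseteq G_{\sigma_n}$. Recall that $G_{\sigma_n}$ is a subgroup of $GL_n(\RR)$ by \cref{lem:intertwiner}, and that every permutation matrix lies in $G_{\sigma_n}$ by \cref{calc:perms}. In particular, for $P \in \Sigma_n$ the inverse $P^{-1}$ is again a permutation matrix, hence also an element of $G_{\sigma_n}$.

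First I would treat the case $AP \in G_{\sigma_n}$. Since $G_{\sigma_n}$ is closed under multiplication and $P^{-1} \in \Sigma_n \subseteq G_{\sigma_n}$, the product $(AP)P^{-1} = A$ lies in $G_{\sigma_n}$. For the case $PA \in G_{\sigma_n}$, the same argument with left multiplication gives $P^{-1}(PA) = A \in G_{\sigma_n}$. This exhausts both cases and proves the corollary.

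There is essentially no obstacle here: the statement is a purely formal consequence of $G_{\sigma_n}$ being a group that contains $\Sigma_n$. If one wanted to be fully self-contained without invoking that $G_{\sigma_n}$ is a subgroup, one could instead argue directly from the definition — given $B$ with $\sigma_n \circ (AP) = B \circ \sigma_n$, precompose with $P^{-1}$ and use that $\sigma_n \circ P^{-1} = P^{-1} \circ \sigma_n$ (the defining property of permutation matrices commuting with coordinatewise $\sigma$, as in the proof of \cref{calc:perms}), yielding $\sigma_n \circ A = B \circ P^{-1} \circ \sigma_n$, so $A \in G_{\sigma_n}$ with witness $B P^{-1}$; the left-multiplication case is symmetric. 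Either route is short, so I would present the slicker group-theoretic version.
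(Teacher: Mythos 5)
Your proposal is correct and takes essentially the same route as the paper: both reduce the claim to the facts that \(G_{\sigma_n}\) is a group (\cref{lem:intertwiner}) and contains \(\Sigma_n\) (\cref{calc:perms}), then cancel \(P\) by group closure, handling \(PA\) symmetrically. The alternative direct argument you sketch is a fine optional addition but not needed.
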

\begin{proof}
  If \(B = AP \in G_{\sigma_n}\), then \(A = BP^{-1}\), where \(B \in
  G_{\sigma_n}\) by hypothesis and \(P \in G_{\sigma_n}\) by \cref{calc:perms}.
  The result follows as \(G_{\sigma_n}\) is a group (\cref{lem:intertwiner})
  and hence closed under multiplication. The other case is similar. 
\end{proof}

\begin{lemma}
  \label{lem:diff-form}
  Suppose \(A, B \in GL_n(\RR)\) and \( \sigma_{n} \circ A = B \circ
  \sigma_{n}\). Suppose \(x = (x_1,\dots,x_n)^T \in \RR^n\) and assume
  \(\sigma\) is differentiable at \(x_1, \dots, x_n\) as well as 
  \[ (Ax)_i = \sum_j a_{ij}x_j, \quad\text{ for } i=1,\dots, n. \]
  Then, 
  \[ \diag(\sigma'((Ax)_i)|i=1,\dots,n) A = B \diag(\sigma'(x_i)|i=1,\dots,n) \]
  (here \(\diag: \RR^n \to \Mat_{n\times n}(\RR)\) takes a vector to a diagonal
  matrix). Explicitly, for each \(i, j \in \{1, \dots, n\}\)
  \begin{equation}
    \label{eq:diff-form}
    \sigma'(\sum_k a_{ik}x_k) a_{ij} = b_{ij}\sigma'(x_j).
  \end{equation}
\end{lemma}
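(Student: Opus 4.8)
The plan is to differentiate the functional identity $\sigma_n \circ A = B \circ \sigma_n$ at the point $x$. Since $A \in GL_n(\RR)$ this is an equality of maps $\RR^n \to \RR^n$ holding for every input, so writing it out in coordinates, for all $y \in \RR^n$ and all $i$,
\begin{equation}
\label{eq:dfplan}
\sigma\Bigl(\sum_k a_{ik} y_k\Bigr) = \sum_j b_{ij}\, \sigma(y_j).
\end{equation}
First I would fix indices $i, j$ and differentiate both sides of \eqref{eq:dfplan} with respect to $y_j$ at $y = x$, holding the remaining coordinates fixed at $x_{j'}$. On the left, the argument $\sum_k a_{ik} y_k$ is an affine function of $y_j$ with slope $a_{ij}$ whose value at $y = x$ is $(Ax)_i$, so the one-variable chain rule --- applicable because $\sigma$ is differentiable at $(Ax)_i$ by hypothesis --- gives derivative $\sigma'((Ax)_i)\, a_{ij}$. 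On the right, only the $j$-th summand depends on $y_j$, and since $\sigma$ is differentiable at $x_j$ the derivative is $b_{ij}\, \sigma'(x_j)$. Equating the two yields precisely \eqref{eq:diff-form}, and collecting these $n^2$ scalar equations into matrix form gives $\diag(\sigma'((Ax)_i)|i=1,\dots,n)\, A = B\, \diag(\sigma'(x_i)|i=1,\dots,n)$.

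Equivalently --- and perhaps more cleanly --- one can phrase the same argument via Jacobians: the affine map $y \mapsto Ay$ has constant Jacobian $A$; the coordinatewise map $\sigma_n$ is Fr\'echet differentiable at $Ax$ with Jacobian $\diag(\sigma'((Ax)_i)|i=1,\dots,n)$ exactly because $\sigma$ is differentiable at each coordinate $(Ax)_i$; hence by the chain rule $y \mapsto \sigma_n(Ay)$ is differentiable at $x$ with Jacobian $\diag(\sigma'((Ax)_i)|i=1,\dots,n)\, A$. Likewise $y \mapsto B \sigma_n(y)$ is differentiable at $x$ with Jacobian $B\, \diag(\sigma'(x_i)|i=1,\dots,n)$. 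Since the two maps coincide identically, their Jacobians at $x$ agree, which is the claim.

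I do not expect a genuine obstacle here; the one point demanding care is that $\sigma$ is assumed differentiable only at the finitely many points $x_1, \dots, x_n$ and $(Ax)_1, \dots, (Ax)_n$ rather than on an open set. So one must invoke the chain rule only in its pointwise form --- ``$g \circ f$ is differentiable at $x$ whenever $f$ is differentiable at $x$ and $g$ is differentiable at $f(x)$'' --- which is legitimate since the outer maps here ($y \mapsto Ay$, or the scalar affine map $y_j \mapsto \sum_k a_{ik} y_k$) are differentiable everywhere, and $\sigma$ (respectively $\sigma_n$) is differentiable precisely at the points at which it is being evaluated. Everything else is bookkeeping: reading off the $(i,j)$ entry of each diagonal-times-matrix product to pass between the matrix identity and \eqref{eq:diff-form}.
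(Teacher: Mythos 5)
Your proposal is correct and follows essentially the same route as the paper, which likewise differentiates the identity $\sigma_n \circ A = B \circ \sigma_n$ via the chain rule to get $d\sigma_n|_{Ax}\,A = B\, d\sigma_n|_x$ and then reads off the diagonal Jacobian of $\sigma_n$; your second (Jacobian) formulation is literally the paper's argument, and your coordinatewise version and remark on the pointwise chain rule are just a more careful spelling-out of the same step.
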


\begin{proof}
  By the chain rule \cite[Thm. 9.15]{rudinPrinciplesMathematicalAnalysis1976}, and since the
  differential of a matrix is itself,
  \[ d\sigma_n|_{Ax} A = B d\sigma_n|_x.\]
  Finally, by the definition of \(\sigma_n\)
  \[ \frac{\partial (\sigma_n(x))_i}{\partial x_j} = \frac{\partial
  \sigma(x_i)}{\partial x_j} = \begin{cases}
    \sigma'(x_j) & \text{  if  } i=j \\
    0 & \text{  otherwise}.
  \end{cases} \]
\end{proof}

\begin{theorem}
  \label{thm:struct-intertwiners}
  Suppose \(\sigma\) is non-constant, non-linear, and differentiable on a dense
  open set with finite complement.\footnote{The differentiability assumption is
  probably not necessary, however it holds in all of the examples we consider
  and allows us to safely use \cref{lem:diff-form}.} Then,
  \begin{enumerate}[(i)]
    \item \label{item:PD} Every \(A \in G_{\sigma_n}\) is of the form \(PD\), where \(P \in
    \Sigma_n\) and \(D\) is diagonal.
    \item \label{item:res-to-diag} For a diagonal \(D = \diag(\lambda_1, \dots, \lambda_n) \in
    G_{\sigma_n}\), we have \(\lambda_i \in G_{\sigma_1}\) for \(i=1, \dots, n\) and
    \[ \phi_\sigma (\diag(\lambda_1, \dots, \lambda_n)) = \diag
    (\phi_\sigma(\lambda_1), \dots, \phi_\sigma(\lambda_1)) \] where we make a
    slight abuse of notation: on the right hand side \(\phi_\sigma\) is the
    homomorphism \(G_{\sigma_1}\to GL_1(\RR)\).
  \end{enumerate}
  In particular, \(\phi_{\sigma}\) is determined by \cref{calc:perms} and its
  behavior for \(n=1\).
\end{theorem}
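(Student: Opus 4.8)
The plan is to prove (i) with \cref{lem:diff-form}, deduce (ii) by a one-coordinate-at-a-time argument using only that $\sigma$ is non-constant, and then assemble the ``in particular'' clause from (i), (ii), \cref{calc:perms} (together with \cref{cor:perms}), and the homomorphism property of \cref{lem:intertwiner}.

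\textbf{Proof of (i).} Fix $A = (a_{ij}) \in G_{\sigma_n}$ and set $B = \phi_\sigma(A) = (b_{ij})$, so $\sigma_n \circ A = B \circ \sigma_n$. I would show each row of $A$ has at most one nonzero entry. Suppose not: some row $i$ has $a_{ij} \neq 0$ and $a_{i\ell} \neq 0$ with $j \neq \ell$. Choose a generic value for the frozen coordinates $(x_k)_{k \neq \ell}$ — generic enough that $\sigma$ is differentiable at each frozen $x_k$ and that for every index $p$ with $a_{p\ell} = 0$ the (constant) value $(Ax)_p = \sum_{k \neq \ell} a_{pk} x_k$ avoids the finite non-differentiability set of $\sigma$, which rules out only a measure-zero set of frozen choices. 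Then for all but finitely many $x_\ell \in \RR$ the hypotheses of \cref{lem:diff-form} hold, and its identity at the pair $(i,j)$ reads
\[
\sigma'\bigl((Ax)_i\bigr)\, a_{ij} \;=\; b_{ij}\, \sigma'(x_j),
\]
whose right-hand side is independent of $x_\ell$. Since $a_{i\ell} \neq 0$, as $x_\ell$ ranges over its cofinite good set the argument $(Ax)_i = a_{i\ell} x_\ell + (\text{const})$ ranges over a cofinite subset of $\RR$; hence $\sigma'$ equals a fixed constant $c$ off a finite set, and by continuity of $\sigma$ the slope-$c$ affine pieces glue into a global affine function $\sigma(x) = cx + d$, contradicting the non-linearity of $\sigma$. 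So each row of $A$ has exactly one nonzero entry (exactly one since $A \in GL_n(\RR)$), and invertibility forces these to occupy distinct columns; thus $A = PD$ with $P \in \Sigma_n$ a permutation matrix and $D$ diagonal.

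\textbf{Proof of (ii) and the final clause.} Let $D = \diag(\lambda_1, \dots, \lambda_n) \in G_{\sigma_n}$ and $B = \phi_\sigma(D) = (b_{ij})$, so that $\sigma(\lambda_i x_i) = \sum_j b_{ij}\sigma(x_j)$ for all $x$ and each $i$ (the case $n = 1$ is vacuous). Fixing $i$ and varying a single coordinate $x_j$ with $j \neq i$ while freezing the rest: the left-hand side is constant, while the right-hand side equals $b_{ij}\sigma(x_j)$ plus terms independent of $x_j$; since $\sigma$ is non-constant this forces $b_{ij} = 0$. Hence $B$ is diagonal, and the defining identity collapses to $\sigma(\lambda_i x) = b_{ii}\sigma(x)$ for all $x \in \RR$, with $\lambda_i, b_{ii} \neq 0$ because $D, B \in GL_n(\RR)$. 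This is exactly the assertion $\lambda_i \in G_{\sigma_1}$, and uniqueness of the intertwining scalar (the $n = 1$ instance of \cref{lem:intertwiner}) identifies $b_{ii} = \phi_\sigma(\lambda_i)$, giving $\phi_\sigma(D) = \diag(\phi_\sigma(\lambda_1), \dots, \phi_\sigma(\lambda_n))$. For the last assertion, write an arbitrary $A \in G_{\sigma_n}$ as $A = PD$ using (i); by \cref{cor:perms} we have $D \in G_{\sigma_n}$, so (ii) applies to $D$, and then \cref{lem:intertwiner} together with \cref{calc:perms} gives $\phi_\sigma(A) = \phi_\sigma(P)\phi_\sigma(D) = P\,\diag(\phi_\sigma(\lambda_1), \dots, \phi_\sigma(\lambda_n))$ — so $\phi_\sigma$ on $G_{\sigma_n}$ is completely determined by its restriction to $\Sigma_n$ and by the one-dimensional maps $\phi_\sigma \colon G_{\sigma_1} \to GL_1(\RR)$.

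The main obstacle is the contradiction in (i): one has to handle the finitely many points where $\sigma$ fails to be differentiable carefully — ensuring a generic choice of the frozen coordinates keeps the line $x_\ell \mapsto (Ax)_p$ inside the differentiable locus for every $p$ simultaneously — and then pass cleanly from ``$\sigma'$ constant off a finite set'' to ``$\sigma$ globally affine'' via continuity. Part (ii) and the final bookkeeping are routine once \cref{calc:perms}, \cref{cor:perms}, \cref{lem:diff-form}, and \cref{lem:intertwiner} are in hand.
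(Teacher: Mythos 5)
Your proposal is correct and follows essentially the same route as the paper: part (i) is the paper's argument via \cref{lem:diff-form} (freeze one coordinate, vary another, and use non-linearity of \(\sigma\) to force at most one nonzero entry per row of \(A\)), and part (ii) reduces to the \(n=1\) case exactly as the paper does. The remaining differences are cosmetic rather than substantive: you obtain diagonality of \(\phi_\sigma(D)\) directly from the functional equation instead of the paper's differential identity ``\(a_{ij}=0 \implies b_{ij}=0\)'', you are more explicit than the paper about choosing generic frozen coordinates and about passing from ``\(\sigma'\) constant off a finite set'' to ``\(\sigma\) affine'', and you spell out the final ``in particular'' clause via \cref{cor:perms} and the homomorphism property of \cref{lem:intertwiner}, which the paper leaves implicit.
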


\begin{proof}
  For any \(A \in G_{\sigma_n}\) we observe that the differentiability hypotheses
  of \cref{lem:diff-form} holds for \(x \in U\) where \(U\) is a dense open set with
  measure-0 complement. Indeed, if \(t_1, \dots, t_M \in \RR\) are the points
  where \(\sigma\) fails to be differentiable, we can take \(U\) to be the
  complement of the hyperplane arangement given by
  \[ (\bigcup_{ij} \{x \in \RR^n \;|\; x_i =t_j\})\cup (\bigcup_{ij} \{x \in \RR^n \;|\; (Ax)_i \in =t_j \}) \subseteq \RR^n \]
  Fix a row \(i\) --- the matrix \(A \) is
  invertible by hypotheses, and so there must be some \(j\) such that \(a_{ij}
  \neq 0\) (otherwise the \(i\)-th row of \(A\) is 0).  For any \(x \in U\), we have
  by \cref{lem:diff-form}
  \begin{equation}
    \label{eq:diff-form-revisited}
    \sigma'(\sum_k a_{ik}x_k) a_{ij} = b_{ij}\sigma'(x_j)
  \end{equation}
  and we claim that this cannot hold unless \(a_{ik} = 0 \) for \(k \neq j\).
  First, there is a \((x_1,\dots, x_n) \in U\) such that \(\sigma'(x_j) \neq 0\)
  (otherwise \(\sigma\) would be constant). Next, fixing \(x_j\) at a value with
  \(\sigma'(x_j) \neq 0\) and rearranging \cref{eq:diff-form-revisited} we have 
  \begin{equation}
    \label{eq:diff-form-revisited2}
    \frac{\sigma'(a_{ij} x_j +\sum_{k \neq j} a_{ik}x_k)}{\sigma'(x_j)} a_{ij} = b_{ij}= \text{ constant}.
  \end{equation}
  By hypothesis \(\sigma\) is non-linear and so \(\sigma'\) is non-constant ---
  hence if there were some \(a_{ik} \neq 0\) for \(k \neq j\), the left hand
  side of \cref{eq:diff-form-revisited2} would be non-constant.

  We have shown each row of \(A\) has at most one non-0 entry \(a_{ij}\) and that
  \(a_{ij} \neq 0\). For \(A\) to be invertible, it must be that these non-0 entries
  land in distinct \emph{columns}. This is exactly the form described in
  \cref{item:PD}.
  
  Next, we note that for any \(ij\) (without assuming \(a_{ij}\neq 0\))
  \cref{eq:diff-form-revisited,eq:diff-form-revisited2} tell us \[a_{ij}=0
  \implies b_{ij}=0, \] and hence if \(D = \diag(\lambda_1, \dots, \lambda_n)
  \in G_{\sigma_n}\) and \(\sigma_n \circ D = E \circ \sigma_n\) (i.e. \(E =
  \phi_\sigma(D)\)), it must be that \( E = \diag(\mu_1, \dots, \mu_n) \) for
  some \(\mu_1, \dots, \mu_n \in \RR\). Now the equation \(\sigma_n \circ D = E
  \circ \sigma_n\) is equivalent to 
  \[ \sigma(\lambda_i x_i) = \beta_i \sigma(x_i) \text{  for  } i = 1, \dots, n
  \]
  which in turn is equivalent to \(\lambda_i \in G_{\sigma_1}\) and \(\beta_i =
  \phi_\sigma(\lambda_i)\) for \(i = 1, \dots, n\), proving \cref{item:res-to-diag}.
\end{proof}

In light of \cref{thm:struct-intertwiners}, to fill in the table of
\cref{fig-intertwiner-groups} it will suffice to deal with the \(n=1\) cases,
which we do below.

\begin{proof}[Calculation of \(G_{\relu}\)]  
  We remark that this is just the ``positive homogeneous'' property of
  \(\relu\), which is quite well known (cf. \cite[\S 8.2.2]{Goodfellow-et-al-2016},
  \cite[\S 2]{freemanTopologyGeometryHalfRectified2017}, \cite[\S
  3]{kuninNeuralMechanicsSymmetry2021}, \cite[\S 3]{mengGSGDOptimizingReLU2019},
  \cite[\S 3, A]{rolnickReverseengineeringDeepReLU2020}, \cite[\S
  2-3]{yiPositivelyScaleInvariantFlatness2019}). Using \cref{rmk:commuting} if
  \(a \in G_{\sigma_1}\)
  \[ \max \{0, ax\} = \max\{0, a\} \max \{0, x\}. \]
  If \(a <0\) then setting \(x = -1\) results in \(a = 0\), a contradiction. So
  \(a >0\) and \(\max \{0, ax\} = a \max \{0, x\}\), showing \(\phi_\sigma(a) = a\).
\end{proof}

\begin{proof}[Modifications for \(\leakyrelu\)]
  By \href{https://paperswithcode.com/method/leaky-relu}{definition}, for \(0 <
  s \ll 1\).
  \[ \leakyrelu(x, a) := \begin{cases}
    sx & \text{  for  } x <0 \\
    x & \text{  for  } x \geq 0
  \end{cases} \]
  which we may simplify to \(\leakyrelu(x) = sx + (1-s)\relu(x)\). 
  Suppose now that 
  \begin{equation}
    \label{eq:leaky}
    \begin{split}
      \leakyrelu(ax) &= b \leakyrelu(x), \text{  or using our simplification  } \\
    sax + (1-s)\relu(ax) &= b(sx + (1-s)\relu(x)).
    \end{split}
  \end{equation}
  If \(a<0\), we may choose \(x = -1\) to obtain
  \[-a =  -sa -(1-s)a = -bs \text{  and \(x=1\) to obtain} \]
  \[ sa = b \]
  showing that \( a =as^2 \), which is impossible when \(0 <
  s \ll 1\). So it must be \(a >0\), and then evaluating \cref{eq:leaky} at \(x
  =1\) gives \(a = b\).
\end{proof}

\begin{proof}[The sigmoid case: \(\sigma(x) = 1/(1+e^x)\)]
  We will leverage of a useful fact about the sigmoid function: 
  \begin{equation}
    \label{item:supp-not-cpct}
    \text{\(\sigma'(x)\) is a smooth probability distribution function on
    \(\RR\), with \(\sigma'(x) > 0\) for all \(x\in \RR\).} \tag{*}
  \end{equation}
  If \(\sigma(ax) = b\sigma(x)\), differentiating with respect to \(x\) gives
  \begin{equation}
    \label{eq:df-sig3}
    \sigma'(a x) a = b \sigma'(x).
  \end{equation}
  Using \cref{item:supp-not-cpct} and the fact that to probability distribution
  functions are proportional if and only if they are equal, we get
  \(\sigma'(ax) = \sigma'(x)\). Then integrating from \(-\infty \) to
  \(x\) tells us \(\sigma(ax)/a = \sigma(x)\); setting \(x
  = 0\) we see \(\frac{1}{2a} = \frac{1}{2}\), hence \(a = 1\).

  To show \(\phi_{\sigma} = \mathrm{id}\), backtracking to \cref{eq:df-sig3}  and
  setting \(x=0\) shows \(b = a\).
\end{proof}

\begin{proof}[The Gaussian RBF case: \(\sigma(x) = \frac{1}{\sqrt{2 \pi}} e^{-\frac{x^2}{2}}\)]
  We make use of several properties of this \(\sigma(x)\):
  \begin{enumerate}[(i)]
    \item \label{item:gaussian} For any \(a>0\) the function \(\sigma(ax)\) is a
    probability distribution function with mean \(0\) and variance
    \(\frac{1}{a^2}\),and with \(\sigma(ax) >0\) for all \(x \in \RR\) and
    \item \label{item:evenness} \(\sigma\) is an even function (\(\sigma(-x) = \sigma(x)\)).
  \end{enumerate}
  Now if \(\sigma(ax) = b\sigma(x)\), then the pdfs \(\sigma(ax)\) and
  \(\sigma(x)\) are proportional hence equal by \cref{item:gaussian}. Therefore they have the same means
  and variances --- since these are \(0, \frac{1}{a^2}\) and \(0, 1\)
  respectively we conclude \(a = \pm 1\).

  Finally, we explain why \(\phi_{\sigma}(A) = \abs(A)\) (entrywise absolute
  value). Differentiating with respect to \(x\) gives
  \begin{equation}
    \label{eq:df-rbf}
    \sigma'(ax) a = b \sigma'(x).
  \end{equation}
  This implies \(b = 1\) when \(a =1\). On the other hand differentiating
  \cref{item:evenness} tells us \(-\sigma'(-x) = \sigma'(x)\), so when \(a =
  -1\)
  \[ b \sigma'(x) = -\sigma'(-x) =  \sigma'(x)\] and hence \(b =
  1\).
\end{proof}

\begin{proof}[The polynomial case: \(\sigma(x) = x^d\)]
  We remark that the description given in \cref{fig-intertwiner-groups} is
  implicit in \cite{kileelExpressivePowerDeep2019}. By
  \cref{thm:struct-intertwiners} we only need to describe \(\phi_\sigma:
  G_{\sigma_1} \to GL_1(\RR)\); for any \(a \neq 0\)
  \[ (ax)^d = a^d x^d\]  
  and this shows \(G_{\sigma_1} = \RR \setminus \{0\}\) and \(\phi_\sigma(a) = a^d\).
\end{proof}

\subsubsection{Gaussian error linear units (\(\gelu\)s)}
\label{sec:gelu}

Introduced and first studied in \cite{hendrycksGeLU}, these are defined as
\(\gelu(x) = x\Phi(x)\) where \(\Phi\) is the standard normal cummulative
distribution function: 
\[ \Phi(x) = \int_{-\infty}^x \frac{e^{-\frac{t^2}{2}}}{\sqrt{2\pi}} \, dt. \]
By inspecting plots in \cref{fig:relugelu}, we see that \(\gelu\) and \(\relu\) are   globally quite
similar (they converge as \(\nrm{x} \to \infty\)) but that they differ when
\(x\) within a few standard normal standard deviations of 0. 
One can show that \(G_{\gelu_n} = \Sigma_n\): indeed, by
\cref{thm:struct-intertwiners} it suffices to show that the only \(\lambda \in
\RR\setminus \{0\}\) such that \(\gelu(\lambda x) = \phi(\lambda) \gelu(x)\) for
all \(x\) (where \(\phi\) is some non-zero function of \(\lambda \)) is \(\lambda=1\).
Expanding, we see that 
\begin{equation}
  \lambda x \Phi(\lambda x) = \phi(\lambda) x \Phi(x),
\end{equation}
and rearranging this becomes
\begin{equation}
  \frac{\Phi(\lambda x)}{\Phi(x)} = \frac{\phi(\lambda)}{\lambda} =: c,
\end{equation}
that is, the right hand side is constant as a function of \(x\). Then
\(\Phi(\lambda x) = c\Phi(x)\), and since \(\Phi\) is positive it must be \(c\)
is too. Moreover it must be \(\lambda > 0\), as otherwise \(\Phi(\lambda x) \)
is monotonically decreasing while \(\Phi(x)\) is increasing. Finally, letting
\(x \to \infty \) we see that \(c=1 \), and from there we conclude \(\lambda = 1
\) by an argument similar to the use of \cref{item:gaussian} in the Gaussian RBF
case.

Despite the above calculation, it seems natural to ask how far \(G_{\gelu}\)
is from \(G_{\relu}\), in other words how badly \(\gelu\) fails to be positive
homogeneous. One measure of this is obtained by letting \(X \sim \mathcal{N}(0,
1)\) be a standard normal variable and computing the root-mean-square error
\begin{equation}
  \label{eq:rms-non-pos-hom}
  \xi(\lambda) := \sqrt{E[ \lvert \gelu(\lambda X) - \lambda \gelu(X) \rvert^2] }
\end{equation}
as a function of \(\lambda > 0\), where the expectation is over \(X\). Here our
choice of a standard normal \(X\) is motivated by the same reasoning as
discissed in \cite{hendrycksGeLU}, namely that activation inputs are roughly
standard normal, especially in the presence of batch normalization. Evaluating \cref{eq:rms-non-pos-hom}
doesn't seem particularly tractible analytically, but it does simplify to 
\begin{equation}
  \label{eq:rms-non-pos-hom2}
\xi(\lambda) = \lambda \sqrt{E[ \lvert x (\Phi(\lambda x) - \Phi(x)) \rvert^2] }.\end{equation}
In \cref{fig:rms-non-pos-hom} we estimate \(\xi(\lambda)\) by sampling \(X\) and replacing the expectation
with the corresponding average.
\begin{figure}
\begin{subfigure}{0.5\linewidth}
  \centering
  \includegraphics[width=\linewidth]{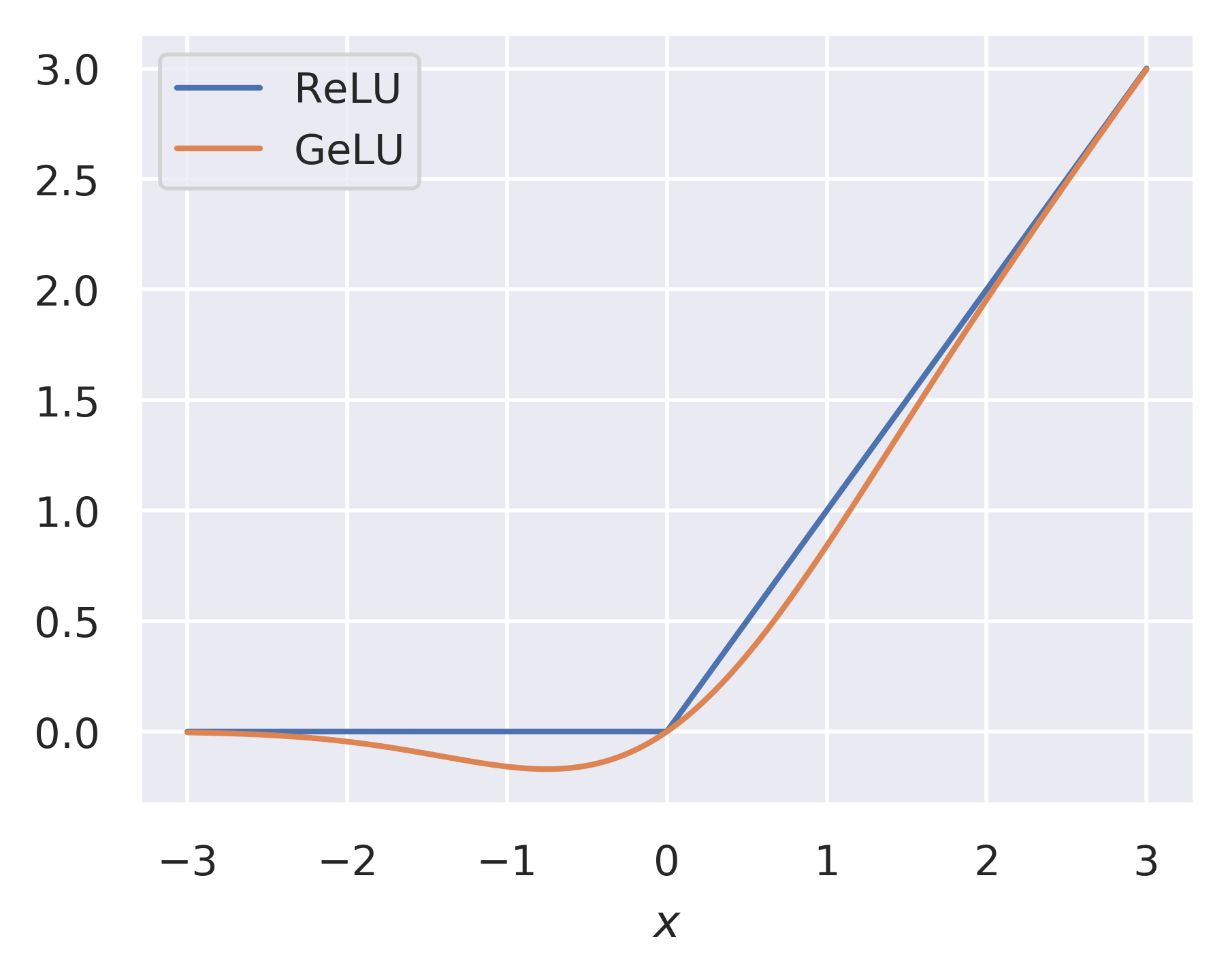}
  \caption[]{}\label{fig:relugelu}
\end{subfigure}
  \begin{subfigure}{0.5\linewidth}
  \centering
  \includegraphics[width=\linewidth]{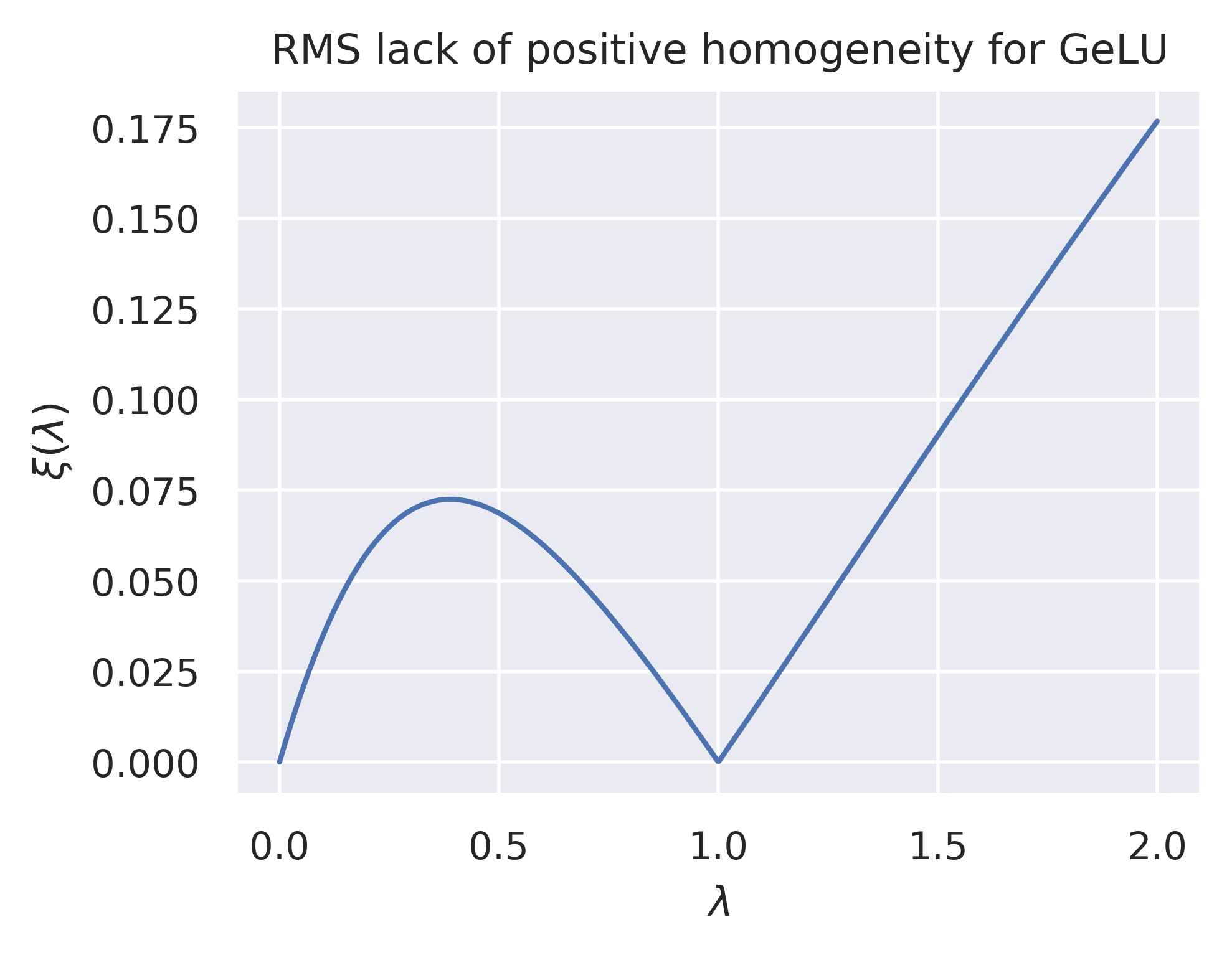}
  \caption[]{}\label{fig:rms-non-pos-hom}
  \end{subfigure}
  \caption{\textbf{(a)} The \(\relu\) and \(\gelu\) functions. \textbf{(b)} Root-mean-square lack of positive homogeneity for the \(\gelu\) function, estimated using \(10^5\) samples of \(X\).}
\end{figure}
Evidently, as \(\lambda \to \infty\) the function \(\xi(\lambda) \) becomes
linear: since \(\Phi(\lambda x) \to \mathbf{1}_{x\geq 0} \) as \(\lambda \to
\infty \) (here \(\mathbf{1}_{x\geq 0}\) is the indicator of
\(x>0\), also known as the Heaviside or unit-step function), the asymptotic
slope is \(\sqrt{E[ \lvert x (\mathbf{1}_{x\geq 0} - \Phi(x)) \rvert^2] }
\approx 0.127\).

\subsection{Proof of \cref{thm:stabilizer1}}

\begin{proof}
    The explicit description of  \(G(\relu , n  )\) in \cref{fig-intertwiner-groups} is enough
    to show \(G(\relu , n  )\) \emph{stabilizes} \(\{ \RR_{\geq 0} e_i | i = 1,
    \dots, n \}\). Indeed, any \(A \in G(\relu , n  )\) may be written as \(PD\)
    where \(D = \diag(a_i)\) for some \(a_i >0\)  and \(P\) is a permutation matrix associated
    to a permutation \(\pi\). It suffices to show that \(P\) and \(\diag(a_i)\)
    each preserves \(\{ \RR_{\geq 0} e_i | i = 1, \dots, n \}\). First, 
    \[ \diag(a_i) \{ \RR_{\geq 0} e_i | i = 1, \dots, n \} = \{ \RR_{\geq 0} a_i
    e_i | i = 1, \dots, n \}  = \{ \RR_{\geq 0} e_i | i = 1, \dots, n \}\] since
    the \(a_i >0\) so scaling by \(a_i\) preserves the ray \(\RR_{\geq 0} e_i\).
    Second,
    \[P\{ \RR_{\geq 0} e_i | i = 1, \dots, n \} = \{ \RR_{\geq 0} Pe_i | i = 1,
    \dots, n \} = \{ \RR_{\geq 0} e_{\pi(i)} | i = 1, \dots, n \}= \{ \RR_{\geq
    0} e_i | i = 1, \dots, n \}\]
    (the last equality is due to the fact that we only consider the set of rays,
    not the ordered tuple of rays).    

    Conversely, if \(A \in GL(n)\) stabilizes \(\{ \RR_{\geq 0} e_i | i = 1,
    \dots, n \}\), then in particular for each \(j\)
    \(A e_j \in \RR_{\geq 0} e_i \text{ for some } i \) and as \(A\) is
    invertible, setting \(j = \pi(i)\) yields a permutation of \(\{1,\dots,
    n\}\). Moreover,since \(A e_j \neq 0\) (\(A \) is invertible) there
    must be some \(a_{j} > 0\) so that \(Ae_j = a_j e_i\). One can now verify
    that \(A = P D\) where \(P\) is the permutation matrix associated to \(\pi\)
   and \(D = \diag(a_j)\) which matches the
    description of \(G(\relu , n  )\) from \cref{fig-intertwiner-groups}.

    For the ``moreover,'' we prove the contrapositive, namely that if \(v =
    (v_1, \dots, v_n) \in \RR^n\) has at least 2 non-0 coordinates \(v_i, v_j\),
    then  the \(G_{\relu}\)-orbit of the ray \(\RR_{\geq 0} v\) that \(v\) generates
    cannot be finite. Indeed, suppose \(t \geq 0\) and let \(D
    = \diag(1, \dots, 1, t, 1, \dots, 1 )\) (\(t\) in the \(i\)th position). The
    \(n \times 2\) matrix \( (v \vert Dv)\) has a \(2\times 2\) minor 
    \begin{equation}
        \begin{pmatrix}
            v_i & t v_i \\
            v_j & v_j
        \end{pmatrix} \text{ with deteriminant  } (1-t)v_i v_j \neq 0 \text{  as long as  } t \neq 1.
    \end{equation}
    Thus \(v\) and \(Dv\) are linearly independent, and hence define distinct
    rays, for all \(t \neq 1\). It follows that any set of rays stabilized by
    \(G(\relu , n  )\) that contains \(\RR_{\geq 0} v\) is uncountable.
\end{proof}

\subsection{Proof of \cref{lem:comm-w-sig}}

To give a rigorous proof we use induction on the depth \(k\); since this to some
extent obscures the main point, we briefly outline an informal proof: consider a
composition of 2 layers of the network \(f\) with weights \(W'\):
\begin{equation}
  \label{eq:informal}
  \sigma(A_{i+1}W_{i+1}\phi_{\sigma}(A_i)^{-1}\sigma(A_{i}W_{i}\phi_{\sigma}(A_{i-1})^{-1}
x + A_i b_i) + A_{i+1}b_{i+1}).
\end{equation}
Using the defining properties of \(G_{\sigma}\) and \(\phi_\sigma\), we can
extract \(A_i \) like 
\begin{equation}
  \label{eq:informal2}
  \sigma(A_{i}W_{i}\phi_{\sigma}(A_{i-1})^{-1}
x + A_i b_i) = \phi_\sigma(A_{i}) \sigma(W_{i}\phi_{\sigma}(A_{i-1})^{-1}
x +  b_i).
\end{equation}
The resulting copy of \( \phi_\sigma(A_{i})\) on the right hand side of
\cref{eq:informal2} is cancelled by the copy of \(\phi_{\sigma}(A_i)^{-1}\) right-multiplying
\(W_{i+1}\) in \cref{eq:informal}, so that \cref{eq:informal} reduces to 
\[\sigma(A_{i+1}W_{i+1}\sigma(W_{i}\phi_{\sigma}(A_{i-1})^{-1}
x + b_i) + A_{i+1}b_{i+1}).\]
In this way, in between any two layers  \(\ell_{i+1} \circ \ell_i\) the \(A_i\)
in \(A_{i}W_{i}\phi_{\sigma}(A_{i-1})^{-1}\) and the \(\phi_{\sigma}(A_i)^{-1}\)
in \(A_{i+1}W_{i+1}\phi_{\sigma}(A_i)^{-1}\) \emph{cancel}. However, the factors
\(\phi_\sigma(A_m)\) and \(\phi_{\sigma}(A_m)^{-1}\) appear on endpoints of the
truncated networks \(f_{\leq m}, f_{>m}\) and so they are not cancelled.

To keep track of \(W, W'\) while using the notation from \cref{sec:notation}, we
write 
\[ \ell_i(x, W) = \sigma(W_i x + b_i) \text{  and  } \ell_i(x, W') = \sigma(W'_i
x + b'_i)  \text{  for  } i < k\] and so on.

\begin{proof}
  By induction on \(k\), the depth of the network. The case \(k = 1\) is
  trivial, since there \(W' = W\) and there is nothing to prove. For \(k >1\) we
  consider 2 cases:

  \textbf{Case \(m =1\):} In this case let \(V = (W_2, b_2, \dots, W_k, b_k)\)
  and \(V' = ( A_2 W_2 , A_2 b_2 , A_3 W_3 \phi_\sigma(A_2)^{-1}, A_3 b_3, \dots,
  W_{k}\phi_{\sigma}(A_{k-1}^{-1}), b_{k})\), and let
  \[g(x, V) = \ell_{k-1}(x, V) \circ \cdots \circ \ell_1(x, V)  \text{  and  }
  g(x, V' ) = \ell_{k-1}(x, V') \circ \cdots \circ \ell_1(x, V') \] where
  \(\ell_i(x, V) = \sigma(W_{i+1}x +b_{i+1})\) and similarly for \(V'\). In
  other words, the weights \(V, V'\) and function \(g\) represent the
  architecture obtained by removing the earliest layer of \(f\). Then,
  \( f_{\leq 1}(x, W) = \sigma(W_1 x + b_1)  \)
  and on the other hand 
  \[ f_{\leq 1}(x, W') =  \sigma(A_1 W_1 x
  +A_1 b_1)  =  \sigma(A_1 (W_1 x + b_1)). \] 
  Using the identity \(\sigma(A_1 z) = \phi_\sigma(A_1) \sigma(z)\) for any \(z
  \in \RR^{n_1}\), we obtain 
  \[ f_{\leq 1}(x, W') =  \phi_\sigma(A_1) \sigma(W_1 x + b_1) =
  \phi_\sigma(A_1) \sigma(W_1 x + b_1). \] 
  This shows \(f_{\leq 1}(x, W') = \phi_\sigma(A_1) \circ f_{\leq 1}(x, W)\).
  Next, \(f_{>1}(x, W ) = g(x, V)\) but
  \emph{because} \(V'_1 = A_2 W_2\) whereas \(W'_2 = A_2 W_2
  \phi_\sigma(A_1)^{-1} \)
  \[ f_{>1}(x, W') = g(x, V') \circ \phi_\sigma(A_1)^{-1}. \]
  By induction on \(k\), we may assume \(g(x, V) = g(x, V')\) and it follows
  that \( f_{>1}(x, W') = f_{>1}(x, W )  \circ \phi_\sigma(A_1)^{-1} \)

  \textbf{Case \(m>1\):} Defining \(V, V'\) and \(g\) as above, we observe that 
  \begin{equation}
    \begin{split}
      f_{\leq i} (x, W) &= g_{\leq i-1}(\sigma(W_1x +b_1), V)  \text{  and  } \\
  f_{\leq i}(x, W' ) &= g_{\leq i-1}(\phi_\sigma(A_1)^{-1} \sigma(A_1W_1 x +A_1
  b_1) , V') = g_{\leq i-1}(\sigma(W_1 x +
  b_1) , V').
    \end{split}
  \end{equation}
  By induction on \(k\) we may assume \(g_{\leq i-1}(x, V') = \phi_\sigma(A_i)
  g_{\leq i-1}(x, V)\) and so 
  \[f_{\leq i}(x, W' ) = \phi_\sigma(A_i)g_{\leq i-1}(\sigma(W_1x +b_1), V) =
  \phi_\sigma(A_i) f_{\leq i} (x, W). \] 
  Finally, \(f_{>i}(x, W) = g_{>i-1}(x, V)\) and \(f_{>i}(x, W') = g_{>i-1}(x,
  V')\) and we may assume by induction on \(k\) that \( g_{>i-1}(x,
  V') =  g_{>i-1}(x, V) \circ \phi_\sigma(A_i)^{-1}\), hence \(f_{>i}(x, W') =
  f_{>i}(x, W) \circ \phi_\sigma(A_i)^{-1}\). 
\end{proof}

\subsection{Proof of \cref{thm:min-stitch}}

\begin{proof}
  Observe that by \cref{lem:comm-w-sig},
  \(\tilde{f}_{>l} = f_{>l} \circ \phi_{\sigma}(A_l^{-1})\). Hence
  \[ S(f, \tilde{f}, l, \varphi) = \tilde{f}_{>l}\circ\varphi \circ f_{\leq i} = f_{>l} \circ \phi_{\sigma}(A_l^{-1})
  \circ\varphi \circ f_{\leq l}. \]
  If \(\mathcal{S}\) contains \(\phi_\sigma(G_{\sigma_{n_l}})\) we
  may choose \(\varphi = \phi_{\sigma}(A_l)\) to achieve \(S(f,
  \tilde{f}, l, \varphi)= f\) as functions. Similarly, \(\tilde{f}_{\leq l} = \phi_\sigma(A_l) \circ
  f_{\leq l}\)
  so if \(\mathcal{S}\) contains \(\phi_\sigma(G_{\sigma_{n_l}})\) we may choose
  $\varphi = \phi_{\sigma}(A_l^{-1})$ to achieve \(S(f, \tilde{f}, l, \varphi) =
  \tilde{f}\) as functions. In either case \cref{eq:happy-stitch} holds.
\end{proof}

\subsection{Symmetries of the loss landscape}
\label{sec:sym-ll}
Given that the intertwiner group describes a large set of symmetries of a network, it is not surprising that it also provides a way of understanding the relationship between equivalent networks. \Cref{lem:comm-w-sig} has an interpretation in terms of the loss landscape of model architecture. For any layer $n_i$ in $f$, the action of $G_{\sigma_{n_i}}$ on the weight space $\WW$, translates to the obvious group action on the loss landscape.

\begin{corollary}
For any $1 \leq i \leq k$, the group $G_{\sigma_{n_i}}$ acts on $\WW$ and for any test set $D_t \subset X \times Y$, model loss on $D_t$ is invariant with respect to this action. More precisely,  if $\ell(\Phi(W),D)$ is the loss of $\Phi(W)$ on test set $D_t$, then for any $g \in G_{\sigma_{n_i}}$, $\ell(\Phi(W),D) = \ell(\Phi(gW),D)$.
\end{corollary}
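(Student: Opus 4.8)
The plan is to recognize this corollary as essentially a repackaging of \cref{lem:comm-w-sig}, specialized to the situation where only a single nonlinear layer is acted upon. First I would make the action explicit. Given $1 \le i \le k-1$ and $A \in G_{\sigma_{n_i}}$, I define $A \cdot W$ to be the weights $W'$ produced by \cref{lem:comm-w-sig} with the choice $A_i = A$ and $A_j = I_{n_j}$ for all $j \ne i$; concretely this replaces the triple $(W_i, b_i, W_{i+1})$ by $(A W_i,\, A b_i,\, W_{i+1}\phi_\sigma(A)^{-1})$ and leaves every other weight and bias unchanged. This is legitimate because $I_{n_j} \in G_{\sigma_{n_j}}$ (it is a permutation matrix, \cref{calc:perms}) and $\phi_\sigma(I_{n_j}) = I_{n_j}$, so the hypotheses of \cref{lem:comm-w-sig} are satisfied. (The natural range of the statement is $1 \le i \le k-1$, since the final layer $\ell_k$ is purely affine and carries no associated nonlinearity $\sigma_{n_k}$.)

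Next I would verify the two group-action axioms. That the identity $I_{n_i}$ acts trivially on $\WW$ is immediate from $\phi_\sigma(I_{n_i}) = I_{n_i}$. For compatibility, given $A, B \in G_{\sigma_{n_i}}$ one checks that applying $B$ after $A$ sends $(W_i, b_i, W_{i+1})$ to $(BA W_i,\, BA b_i,\, W_{i+1}\phi_\sigma(A)^{-1}\phi_\sigma(B)^{-1})$, whereas applying $BA$ directly sends it to $(BAW_i,\, BAb_i,\, W_{i+1}\phi_\sigma(BA)^{-1})$. These agree precisely because $\phi_\sigma$ is a group homomorphism on $G_{\sigma_{n_i}}$ (\cref{lem:intertwiner}), which gives $\phi_\sigma(BA) = \phi_\sigma(B)\phi_\sigma(A)$ and hence $\phi_\sigma(BA)^{-1} = \phi_\sigma(A)^{-1}\phi_\sigma(B)^{-1}$. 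Thus $(BA)\cdot W = B\cdot(A\cdot W)$, and $G_{\sigma_{n_i}}$ acts on $\WW$.

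Finally, invariance of the loss is a one-line consequence: \cref{lem:comm-w-sig}, applied to the chosen tuple $(A_j)$, gives $\Phi(A\cdot W) = \Phi(W)$ in $\FF$, i.e.\ $f(-, A\cdot W) = f(-, W)$ as functions $\RR^{n_0}\to\RR^{n_k}$. Since any loss $\ell(\Phi(W), D_t)$ depends on $W$ only through the realized function $\Phi(W)$, it follows that $\ell(\Phi(A\cdot W), D_t) = \ell(\Phi(W), D_t)$ for every test set $D_t$ and every $A \in G_{\sigma_{n_i}}$, which is the claim.

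I do not expect a genuine obstacle here — the mathematical content is carried entirely by \cref{lem:comm-w-sig} and the homomorphism property of $\phi_\sigma$. The only point requiring care is the bookkeeping in the compatibility check (ensuring that the $W_{i+1}$-factors multiply in the order dictated by $\phi_\sigma$ being a homomorphism rather than an anti-homomorphism), and the minor matter of stating the index range so that the acted-upon layer is actually nonlinear.
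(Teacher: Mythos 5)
Your proposal is correct and follows exactly the route the paper intends: the corollary is stated without proof as an immediate consequence of \cref{lem:comm-w-sig}, and your argument simply makes explicit the action (identity on all layers except $i$), the group-action axioms via the homomorphism property of $\phi_\sigma$, and the fact that the loss depends on $W$ only through $\Phi(W)$. Your remark that the index range should really be $1 \leq i \leq k-1$ (since $\ell_k$ carries no nonlinearity) is a fair and correct observation about the paper's statement.
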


\subsection{Comparing capacities of stitching layers via discretization}
\label{sec:epsnets}

Let \( \Mat_{n \times n}(\RR) \) denote the space of \( n \times n \) matrices. For \(  r = 1, \dots, n  \) let \( \Mat_{n \times n}^r(\RR) \subseteq \Mat_{n \times n}(\RR)\) denote the rank \( r  \) matrices, and let \(G_{\relu_{n}} \) be as described in \cref{fig-intertwiner-groups}. Suppose that each real dimension of \(\Mat_{n \times n}(\RR) \) is replaced by a discrete grid \(N(M, \epsilon) = \{-M + i \epsilon \, | \, i = 0, \dots, \lfloor \frac{2M}{\epsilon} \rfloor-1\}\) --- here \(\epsilon\) could represent the limits of numerical precision in a floating point number system, and \( M \) could represent the maximum numerical magnitude. The size of each such grid is \( \lfloor \frac{2M}{\epsilon} \rfloor \), and so the number of points in the resulting mesh grid \( N(M, \epsilon)^{n^2}  \subset \Mat_{n \times n}(\RR) \) is \( \lfloor \frac{2M}{\epsilon} \rfloor^{n^2} \). We now estimate the number of points of \(G_{\relu_{n}} \)  and \( \Mat_{n \times n}^r(\RR) \) in such a mesh grid.

\(G_{\relu_{n}} \) is a disjoint union of \(n!\) irreducible components, corresponding to the \(n!\) possible permutations \(P\) in \cref{fig-intertwiner-groups}. Each of these components is \(n\)-dimensional, corresponding to the fact that the factor \(D \) in \cref{fig-intertwiner-groups} is an arbitrary positive diagonal matrix. Hence we obtain 
\begin{equation}
    \lvert G_{\relu_{n}} \cap N(M, \epsilon)^{n^2} \rvert \approx n! \cdot (\frac{M}{\epsilon})^n.
\end{equation}
On the other hand, any matrix \(A \in \Mat_{n \times n}^r(\RR)\) can be written as \(A = UV\) where \(U \in \Mat_{n \times r}(\RR)\) and \(V \in \Mat_{r \times n}(\RR)\). These \(U \) and \(V\) are not unique: given any invertible \(r \times r\) matrix \(W \in GL_r(\RR)\), we have \(A = (UW)(W^{-1}V)\). From this we obtain the approximation\footnote{Here we ignore a significant subtlety: whether or not the multiplication map \( \Mat_{n \times r}(\RR) \times\Mat_{r \times n}(\RR) \to \Mat_{n \times n}(\RR) \) induces a map \( N(M, \epsilon)^{nr} \times N(M, \epsilon)^{rn} \to N(M, \epsilon)^{n^2} \) (with our naive setup it probably doesn't) and moreover whether the fibers of this map, which in the non-discretized case are generically isomorphic to \(GL_r(\RR)\), have intersection with \(N(M, \epsilon)^{nr} \times N(M, \epsilon)^{rn}\) of the expected size. We do not expect that these technical details will impact the takeaway of this analysis.}
\begin{align}
    \lvert \Mat_{n \times n}^r(\RR)  \cap N(M, \epsilon)^{n^2} \rvert &\approx \frac{\lvert \Mat_{n \times r}(\RR)  \cap N(M, \epsilon)^{nr} \rvert \cdot \lvert \Mat_{r \times n}(\RR)  \cap N(M, \epsilon)^{rn} \rvert}{\lvert GL_r(\RR)  \cap N(M, \epsilon)^{r^2} \rvert } \\
    &\approx \frac{(\frac{2M}{\epsilon})^{nr} (\frac{2M}{\epsilon})^{rn}}{(\frac{2M}{\epsilon})^{r^2}}\\
    & \approx (\frac{2M}{\epsilon})^{2 nr - r^2}.
\end{align}
It follows that 
\begin{align}
   &\log \lvert G_{\relu_{n}} \cap N(M, \epsilon)^{n^2} \rvert -  \log \lvert \Mat_{n \times n}^r(\RR)  \cap N(M, \epsilon)^{n^2} \rvert \\
   & = \log (n!) + n \log(\frac{M}{\epsilon}) - (2nr - r^2) \log (\frac{M}{\epsilon} + \log 2).
\end{align}
Ignoring the term \((2nr - r^2) \log 2\), which is independent of \(M, \epsilon \), we get the approximation
\begin{equation}
    \label{eq:log-card-diff}
    \log (n!) + n \log(\frac{M}{\epsilon}) - (2nr - r^2) \log (\frac{M}{\epsilon} + \log 2) \approx \log (n!) - ((2r-1)n - r^2) \log(\frac{M}{\epsilon}).
\end{equation}
Next, we make the coarse approximation 
\begin{equation}
    \log(n!) = \sum_{k=1}^n \log k \approx \int_1^n \log x \, dx = n\log n -n;
\end{equation}
with this approximation the expression of \cref{eq:log-card-diff} is approximated as 
\begin{equation}
    \log (n!) - ((2r-1)n - r^2) \log(\frac{M}{\epsilon}) \approx n\log n -n - ((2r-1)n - r^2) \log(\frac{M}{\epsilon}).
\end{equation}
From this we conclude that as long as 
\begin{enumerate}
    \item \label{item:enough-dims} \( r \geq 1\) (we actually already assumed this when defining \( \Mat_{n \times n}^r(\RR)\)) and
    \item \label{item:enough-bits} \(\frac{M}{\epsilon} \gg n\), which roughly says that the number of grid points per dimension is greater than the number of rows (equivalently columns) in \( \Mat_{n \times n}(\RR)\),
\end{enumerate}
\begin{align}
    n\log n -n - ((2r-1)n - r^2) \log(\frac{M}{\epsilon}) & \leq  n\log n -n - (n-1) \log(\frac{M}{\epsilon}) \text{ using \cref{item:enough-dims}} \\
    & = (n-1)(\log n -  \log(\frac{M}{\epsilon}) ) + \log n - n \\
    & < (n-1)(\log n -  \log(\frac{M}{\epsilon}) )  \text{ for \(n > 1\)} \\
    & < 0 \text{ using \cref{item:enough-bits}.}
\end{align}
The upshot is that our approximations and \cref{item:enough-bits,item:enough-dims} imply 
\begin{align}
    &\log \lvert G_{\relu_{n}} \cap N(M, \epsilon)^{n^2} \rvert -  \log \lvert \Mat_{n \times n}^r(\RR)  \cap N(M, \epsilon)^{n^2} \rvert  < 0, \text{  and hence }\\
    & \lvert G_{\relu_{n}} \cap N(M, \epsilon)^{n^2} \rvert < \lvert \Mat_{n \times n}^r(\RR)  \cap N(M, \epsilon)^{n^2} \rvert.
\end{align}

\subsection{Calculations related to dissimilarity measures (for
\cref{sec:shapemetrics})}

\begin{example}[courtesy of Derek Lim]
  \label{ex:oops} 
  Let \(x_1 = (1, 1), x_2 = (10, 1) \) and \(x_3 =  (0, 5) \), and let \(c =
  (10, -1, -1)\). Then letting \(\kappa(x_i, x_j) = \max(x_i \odot x_j)\), by
  direct computation
  \begin{equation}
    \sum_{i, j} c_i \kappa(x_i, x_j) c_j = -65 <0,
  \end{equation}
  and hence \(\kappa\) is not positive semi-definite.
\end{example}

\begin{proof}[Proof of \cref{lem:iterated-quotient}]
  By \cref{fig-intertwiner-groups} any \(A \in G_{\relu}\) can be factored as
  \(A=PD\) with \(P\) a permutation matrix and \(D\) a positive diagonal matrix,
  and we can obtain a similar factorization \(B = Q E\). Then 
  \[ \mu(XA, YB) = \mu(XPD, YQE) = \mu(XP, YQ)
  = \mu(X, Y) \] where the second equality uses the hypothesis
  that \(\mu\) is invariant to right multiplication by positive diagonal
  matrices, and the third uses the hypothesis that \(\mu\) is invariant to
  right multiplication by permutation matrices.
\end{proof}

\begin{lemma}
  \label{lem:max-symmetries}
  Suppose \(A\) is a matrix such that \(\max(Ax_1 \odot Ax_2) = \max(x_1 \odot x_2)\) for all \(x_1,
  x_2 \in \RR^d\). Then, \(A \) is of the form \(PD\) where \(P\) is a
  permutation matrix and \(D\) is diagonal with diagonal entries in \(\{\pm
  1\}\).
\end{lemma}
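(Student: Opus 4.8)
Under the hypothesis $\max(Ax_1\odot Ax_2)=\max(x_1\odot x_2)$ for all $x_1,x_2\in\RR^d$ (the identity appearing just before this lemma in \cref{sec:shapemetrics}), the first move is to specialize to $x_1=x_2=x$. Then $\max_i (Ax)_i^2=\max_i x_i^2$, which says exactly that $\|Ax\|_\infty=\|x\|_\infty$ for every $x$; in particular $A\in\Mat_{d,d}(\RR)$ is injective, hence invertible, and it is a linear isometry of $(\RR^d,\|\cdot\|_\infty)$. The lemma therefore reduces to the classical fact that the linear isometries of $\ell^\infty_d$ are precisely the signed permutation matrices, and I would prove this directly rather than invoke it.

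To do so I would read off two elementary constraints from the identity $\|Ax\|_\infty=\|x\|_\infty$. First, applied to the sign vectors $\varepsilon\in\{\pm1\}^d$ it gives $\|A\varepsilon\|_\infty=1$; fixing a row index $i$ and choosing $\varepsilon_j=\mathrm{sign}(a_{ij})$ turns the $i$-th coordinate of $A\varepsilon$ into $\sum_j|a_{ij}|$, so $\sum_j|a_{ij}|\le1$ --- every row of $A$ has absolute entry sum at most $1$. Second, applied to the standard basis vectors it gives $\|Ae_j\|_\infty=1$, so each column of $A$ contains an entry of absolute value exactly $1$. Combining the two: if $|a_{ij}|=1$ then the row bound forces $a_{ik}=0$ for all $k\neq j$, so that row of $A$ is $\pm e_j^\top$.

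It remains to count. For each column $j$ pick a row $\rho(j)$ with $|a_{\rho(j),j}|=1$; by the previous step row $\rho(j)$ equals $\pm e_j^\top$, and since a row of the form $\pm e_j^\top$ determines $j$, the map $j\mapsto\rho(j)$ is injective, hence a permutation of $\{1,\dots,d\}$. So every row of $A$ has the form $\pm e_j^\top$ with the $j$'s distinct, which is exactly to say $A=PD$ with $P$ the permutation matrix of $\rho$ and $D$ diagonal with entries in $\{\pm1\}$, as claimed. (One can also check the converse: any such $PD$ satisfies the original identity, since $PDx_1\odot PDx_2$ is merely a coordinate permutation of $x_1\odot x_2$, the $\pm1$'s squaring away in the product; but the lemma only asserts the forward direction.) I do not anticipate a real obstacle --- the argument is elementary throughout --- the only point needing a little care being that the row-sum bound and the column-norm equality must be combined correctly to pin down one nonzero entry per row and per column.
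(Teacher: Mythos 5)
Your proof is correct, and its first move is exactly the paper's: setting $x_1=x_2=x$ turns the hypothesis into $\max_i (Ax)_i^2=\max_i x_i^2$, i.e.\ $\lVert Ax\rVert_\infty=\lVert x\rVert_\infty$ for all $x$. Where you diverge is in how this is converted into the signed-permutation conclusion. The paper stops at the observation that $A$ preserves the unit hypercube and cites the classical description of the hypercube's symmetry group (the reference to Serre) to conclude $A=PD$. You instead prove that classical fact from scratch: testing the isometry identity on sign vectors gives the row bound $\sum_j\lvert a_{ij}\rvert\le 1$, testing it on standard basis vectors gives a unit-modulus entry in each column, and combining the two pins each such row to $\pm e_j^\top$ with the column indices forced to be distinct by invertibility (your injectivity of $j\mapsto\rho(j)$), yielding $A=PD$ with $D$ having entries in $\{\pm1\}$. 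Your route buys a self-contained, fully elementary argument with no external citation, at the cost of a few extra lines; the paper's route is shorter but outsources the key structural step. Two minor remarks: your convention $\varepsilon_j=\mathrm{sign}(a_{ij})$ should be patched to an arbitrary sign when $a_{ij}=0$ (harmless), and, like the paper, you implicitly take $A$ to be square $d\times d$ --- which is the intended reading, since the conclusion $A=PD$ only makes sense there.
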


\begin{proof}
  We only need the special case where \( x = y \): observe that 
  \[ \max(x \odot x) = \max \{x_1^2, \dots, x_d^2\} = (\max \{\lvert x_1 \rvert,
  \dots, \lvert x_d \rvert\})^2 = \lvert x \rvert_\infty^2 \]
  This means that if \(\max(Ax_1 \odot Ax_2) = \max(x_1 \odot x_2)\), then
  \(A\) preserves the \(\ell_\infty\) norm on \(\RR^d\), hence in particular
  preserves the unit hypercube in \(\RR^d\), and it is known that symmetries of
  the hypercube have the form \(PD\) where \(P\) is a permutation
  matrix and \(D\) is diagonal with diagonal entries in \(\{\pm 1\}\) (see for example \cite[\S 5.9]{serre1977linear}).
\end{proof}

\subsection{Intertwiners and more general architecture features (justification of \cref{rmk:rn-failure})}

Here we briefly discuss how ubiquitous architecture features like batch normalization and residual connections interact with intertwiner groups. For simplicity in this section we only consider \( \sigma =\relu \).

\subsubsection{Batch normalization}

A batch normalization layer that takes as input \( X \in \RR^{b\cdot n} \) (where \(b\) is the batch size and \(n\) is the dimension of the layer) and returns
\[ \tilde{X}\diag(\tilde{X}^T \tilde{X})^{-1}\diag(\gamma) +\beta \text{  where } \tilde{X} = X - \mathbf{1}\mathbf{1}^T X \]  and where \(\gamma, \beta \in \RR^n\) are the ``gain'' and ``bias'' parameters of the batch normalization layer,
is \emph{invariant} under independent scaling of coordinates, that is transformations of the form \(X \gets XD \) where \(D\) is an \(n\times n\) positive diagonal matrix (see e.g. \cite{badrinarayananUnderstandingSymmetriesDeep2015}). Hence a \(k\)-layer \(\relu\) MLP as in \cref{sec:notation} enhanced with batch normalization (\emph{pre}-activation, as
is standard) is invariant under the action of the slightly larger group \(\prod_{l=1}^{k-1} (\RR^{n_l}_{>0}\rtimes
G_{\relu_{n_l}})\), where the action is given by\footnote{As is best practice we omit the biases on the linear layers \(\ell_l\) for \( l<k\), since they would be redundant now that we have biases on batch norm layers.} 
\begin{equation}
    \label{eq:bn}
    \begin{split}
        &(c_1, A_1, \dots, c_{k-1}, A_{k-1}) \cdot (W_1, \gamma_1, \beta_1, \dots, W_{k-1},
        \gamma_{k-1}, \beta_{k-1}, W_{k}, b_{k}) \\ 
        &= (A_1 W_1, \pi(A_1)\diag c_1 \gamma_1 , \pi(A_1)\diag c_1 \beta_1, \\
        & A_2 W_2 (\pi(A_1)\diag c_1)^{-1}, \pi(A_2)\diag c_2  \gamma_2, \pi(A_2)\diag c_2 \beta_2, \\
        & \dots, A_{k-1} W_{k-1} (\pi(A_{k-2})\diag c_{k-2})^{-1}, \pi(A_{k-1})\diag c_{k-1}  \gamma_{k-1}, \pi(A_{k-1})\diag c_{k-1} \beta_{k-1}, \\
        & W_{k}(\pi(A_{k-1})\diag c_{k-1})^{-1}, b_{k}).
    \end{split}
\end{equation}
Here,
\begin{itemize}[nosep]
    \item \(c_l \in \RR^{n_l}_{>0}\) and \( A_l \in G_{\relu_{n_l}}\), for all \(  l = 1, \dots, k-1\)
    \item \(\pi: G_{\relu_{n_l}} \to \Sigma_{n_l}\) is the homomorphism setting
    the positive entries to 1.
\end{itemize}
The key point is that we get another factor of \(\RR^{n_l}_{>0}\) at each layer. We also note that the space of matrices of the form \( \pi(A_l)\diag c_l\) is, incidentally, exactly \( G_{\relu_{n_l}} \), and that using \cref{eq:bn} one can generalize \cref{lem:comm-w-sig,thm:min-stitch} to the case of networks with batch normalization. 

\subsubsection{Residual connections}

We expand on  \cref{rmk:rn-failure}  and explain what exactly transpires with residual connections below.

Suppose we have a \(k\)-layer MLP as in \cref{sec:notation} (again for simplicity with
\(\sigma = \relu\)), together with residual connections between a set of layers \(R = \{r_1, \dots, r_m\} \subseteq \{2,\cdots, k-1 \} \)\footnote{In particular we assume there is at least one linear layer before the first outgoing/after the last incoming residual connection, as occurs in e.g. ResNets.}:
\begin{equation}
    \begin{tikzcd}[column sep = small]
        \RR^{n_0} \arrow[r,"\sigma W_1"] & \cdots \arrow[r, "\sigma W_{r_{i-1}}"] &\RR^{n_{r_{i-1}}} \arrow[r, "\sigma W_{r_{i-1}+1}"] \arrow[rr, bend right, "\mathrm{id}"] & \cdots \arrow[r, "W_{r_{i}}"] & \RR^{n_{r_{i}}} \arrow[r, "\sigma"] & 
        \cdots  
        \RR^{n_{k-1}} \arrow[r, "W_{k}"] & \RR^{n_{L+1}}
    \end{tikzcd}
\end{equation}
(for legibility biases \(b_l\) are suppressed). In addition we assume that the depth of each residual block is some fixed, that is \(r_i - r_{i-1} = b = \) constant for all \(i\). 

First, we claim that a  \((A_1, \dots A_{k-1}) \in \prod_{l = 1}^L G_{\relu_{n_l}} \) stabilizes the function \(f\) \emph{if} (not claiming if and only if) \(A_{r_i} = A_{r_j}\) for all \(r_i, r_j \in R\). To see this suppose \(g_i(x, W)\), \(i = 1, \dots, m\) are the depth \(b\) feedforward networks of the residual blocks, so that the \( f_{\leq r_i} \) is given by 
\begin{equation}
    \label{eq:basic-block}
    \begin{split}
        f_{\leq r_i}(x, W) &= f_{\leq r_{i-1}}(x, W) + g_i(f_{\leq r_{i-1}}(x, W) , W) \text{  where  }\\ g_i(z, W) &= \sigma(W_{r_{i}+b} \sigma( \cdots \sigma(W_{r_{i-1}+1} z)\cdots )).
    \end{split}
\end{equation}
Assuming by induction on \(i\) that \cref{lem:comm-w-sig} applies \emph{at the residual connections in \(R\)} we note that with weights \(W' \) \cref{eq:basic-block} turns into 
\begin{equation}
    \label{eq:basic-block2}
    \begin{split}
        f_{\leq r_i}(x, W') &= f_{\leq r_{i-1}}(x, W') + g_i(f_{\leq r_{i-1}}(x, W') , W') \\
    &= A_{r_{i-1}}f_{\leq r_{i-1}}(x, W) + g_i(A_{r_{i-1}} f_{\leq r_{i-1}}(x, W), W').
    \end{split}
\end{equation}
Note that \cref{lem:comm-w-sig} applies directly to the \(g_i\), so we may compute \( g_i(z, W') = A_{r_i}g_i(A_{r_{i-1}}^{-1}z, W) \). Hence 
\begin{equation}
    \label{eq:basic-block3}
    \begin{split}
        f_{\leq r_i}(x, W') &= A_{r_{i-1}}f_{\leq r_{i-1}}(x, W) + A_{r_i}g_i(A_{r_{i-1}}^{-1}A_{r_{i-1}} f_{\leq r_{i-1}}(x, W), W) \\
        &= A_{r_{i-1}}f_{\leq r_{i-1}}(x, W) + A_{r_i}g_i( f_{\leq r_{i-1}}(x, W), W).
    \end{split}
\end{equation}
and we see that the only way \( f_{\leq r_i}(x, W') = B f_{\leq r_i}(x, W)  \) for some matrix \(B\) is if \(  A_{r_{i-1}} =  A_{r_{i}} \), proving our claim. This also shows that if \(  A_r\) denotes the common value of the \(A_{r_{i}}\) for \(r_i \in R\), we have \( f_{\leq r_i}(x, W') = A_r f_{\leq r_i}(x, W)  \) for all \(i\). It is also true that \(  f_{>r_i}(x, W') = f_{>r_i}(A_r^{-1}x, W) \): observe that 
\begin{equation}
    \label{eq:bb4}
    f_{>r_i}(x, W) = f_{>r_{i+1}}(  x + g_{i+1}( x, W) , W).
\end{equation}
By \emph{descending} induction on \(k\), we may assume \(f_{r_{i+1}}(x, W') = f_{r_{i+1}}(A_{r}^{-1}x, W) \), and as above \(g_{i+1}(z, W') = A_{r}g_{i+1}(A_r^{-1}z, W)\), so that with weights \(W'\) \cref{eq:bb4} becomes 
\begin{equation}
    \label{eq:bb5}
    \begin{split}
        f_{>r_i}(x, W') &= f_{>r_{i+1}}(  x + g_{i+1}( x, W'), W' )\\
        &= f_{>r_{i+1}}( A_{r}^{-1} (x + A_r g_{i+1}( A_r^{-1}x, W)), W) \\
        &= f_{>r_{i+1}}( A_{r}^{-1} x + g_{i+1}( A_r^{-1}x, W), W) = f_{>r_i}(A_{r}^{-1} x, W).
    \end{split} 
\end{equation}
as claimed. 

Finally, we describe how stitching fails inside a residual block. Suppose we use weights \(W_l\) for \(l \leq r_{i} +j \) where \(0< j < b\)) (recall  \(b = \) depth of our basic block) and weights \(W'_l\) for \(l > r_{i} +j \). The resulting stitched network is (attempting to use indentation to increase legibility)
\begin{equation}
    \label{eq:stitch-in-block}
     \begin{split}
         f_{r_{i+1}}(& \\
         &f_{\leq r_i}(x, W) + g_{>j}( \\
         &\quad \quad \quad \quad \quad \quad \quad \quad \varphi  g_{\leq j}(f_{\leq r_i}(x, W), W), \\
         &\quad \quad \quad \quad \quad \quad \quad \quad W'), \\
         & W') .
     \end{split}
\end{equation}
By \cref{lem:comm-w-sig} \(  g_{>j}(z, W' ) = A_r g_{>j}(A_{r_{i} +j}^{-1}z, W)\), and we have shown \(f_{>r_{i+1}}(z, W') = f_{>r_{i+1}}(A_r^{-1}z, W)\). Combining these facts \cref{eq:stitch-in-block} becomes
\begin{equation}
    \label{eq:stitch-in-block2}
     \begin{split}
         f_{r_{i+1}}(& \\
         &A_r^{-1} f_{\leq r_i}(x, W) + A_r^{-1} A_r g_{>j}( \\
         &\quad \quad \quad \quad \quad \quad \quad \quad A_{r_{i} +j}^{-1} \varphi  g_{\leq j}(f_{\leq r_i}(x, W), W), \\
         &\quad \quad \quad \quad \quad \quad \quad \quad W), \\
         & W) .
     \end{split}
\end{equation}
\emph{Even after cancelling to remove the \(A_r^{-1} A_r \) and in the ideal case where \(\varphi = A_{r_{i} +j}\), we are left with an extra factor of \(A_r^{-1}\) left multiplying \(f_{\leq r_i}(x, W) \)}:
\begin{equation}
    \label{eq:stitch-in-block3}
     \begin{split}
         f_{r_{i+1}}(& \\
         &A_r^{-1} f_{\leq r_i}(x, W) + g_{>j}( \\
         &\quad \quad \quad \quad \quad \quad \quad \quad   g_{\leq j}(f_{\leq r_i}(x, W), W), \\
         &\quad \quad \quad \quad \quad \quad \quad \quad W), \\
         & W) .
     \end{split}
\end{equation}

\section{Network dissection details}
\label{sec:net-d-extra}
Here we include some supplementary results and experiments for examining coordinate basis interpretability with network dissection \cite{Bau2017NetworkDQ}. 

\subsection{Network dissection methodology}\label{sec:net-d-methodology}
The Broden concept dataset, compiled by Bau et al.,
contains pixel-level annotations for hierarchical concepts including colors,
textures, objects, and scenes. For every channel activation, network dissection
assesses the binary segmentation performance with every visual concept from
Broden. The method first computes the channel activation for every Broden image.
The distribution of the activations for the channel is used to binarize the
activation (where we threshold by the top $0.5\%$ of all activations for the
channel) to define a segmentation mask for the channel activation which is
interpolated to the size of the input image. If the Intersection over Union
(IoU) of the activation segmentation mask and a concept mask is high enough
(namely where $\text{IoU} > 0.04$), network dissection labels the activation an
interpretable detector for the concept. 

\subsection{Additional experiments}\label{sec:net-d-further}
 \cref{fig:net-d3} breaks down the categories of interpretable units for the models and rotations examined in \cref{fig:net-d1}. The number of interpretable units tends to be dominated by the object and scene concept detectors for the ResNet-50 and the ConvNeXt models. 

\begin{figure}
  \centering
  \includegraphics[width=5.5in]{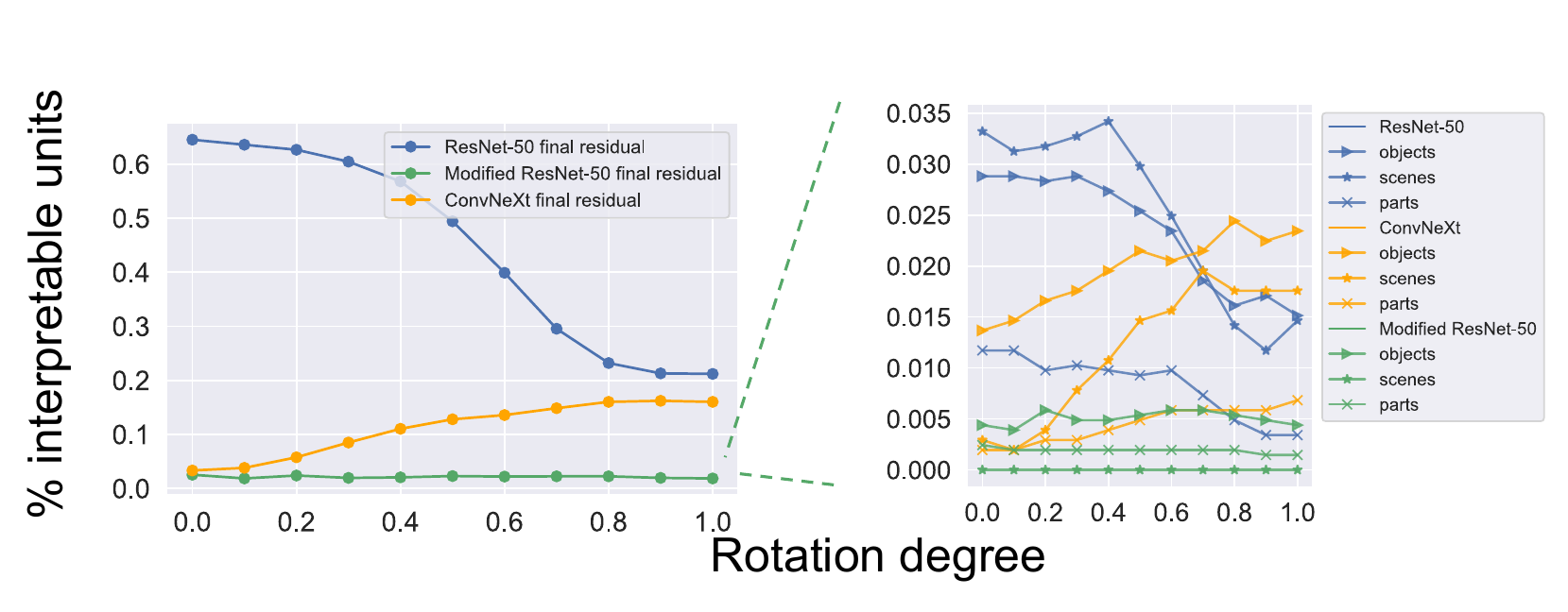}
  \caption[]{Supplement for the same network dissection experiment for the ResNet-50, modified ResNet-50, and ConvNeXt models in \cref{fig:net-d1}, highlighting the categories of interpretable units for each model and basis on the right. The y-axis for the plot on the right is distinct concepts.}\label{fig:net-d3}
\end{figure}

\begin{figure}
  \centering
  \includegraphics[width=4in]{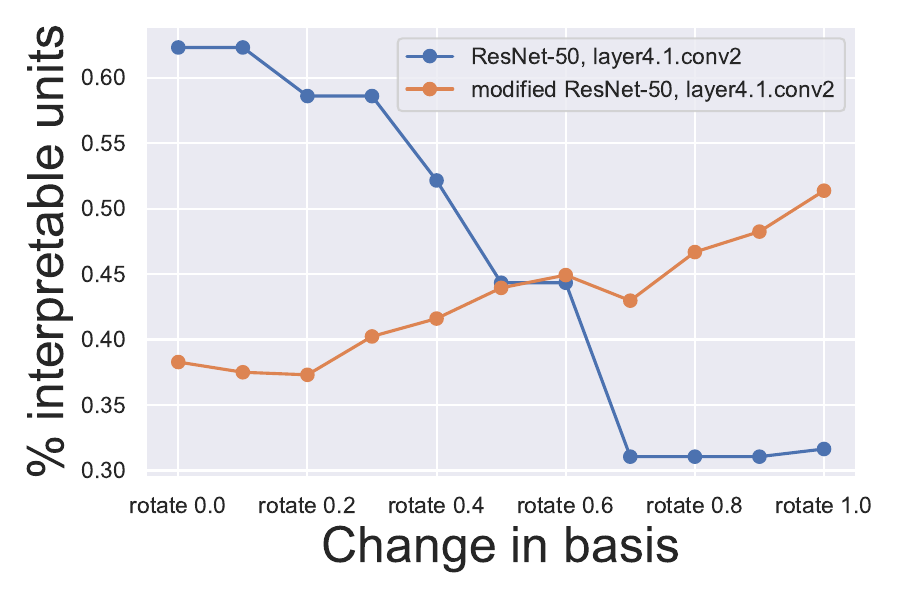}
  \caption[]{Fraction of network dissection interpretable units under rotations of
  the representation basis for a ResNet-50 and a modified ResNet-50 without an activation function on the residual output.}\label{fig:net-d2}
\end{figure}

We also perform an analogous network dissection experiment to \cref{sec:net-dissect} within the residual blocks for the normal and modified ResNet-50 in \cref{fig:net-d2}. Like the ConvNeXt model in \cref{fig:net-d1} we find that, surprisingly, the percentage of interpretable units actually tends to increase away from the activation basis.

Per-concept breakdowns produced by network dissection for three different rotation powers in the experiment in \cref{fig:net-d1} are given for the ResNet-50 in \cref{fig:r50-concepts}, the modified ResNet-50 (without a ReLU activation function on the residual output) in \cref{fig:r50-modified-concepts}, and the ConvNeXt in in \cref{fig:convnext-concepts}. We also include the units with the highest concept intersection over union scores for the the three representative rotations for the ResNet-50 in \cref{fig:r50-highest-iou}, the modified ResNet-50 (without a ReLU activation function on the residual output) in \cref{fig:r50-modified-highest-iou}, and the ConvNeXt in in \cref{fig:convnext-highest-iou}

\begin{figure}
  \includegraphics[width=5.5in]{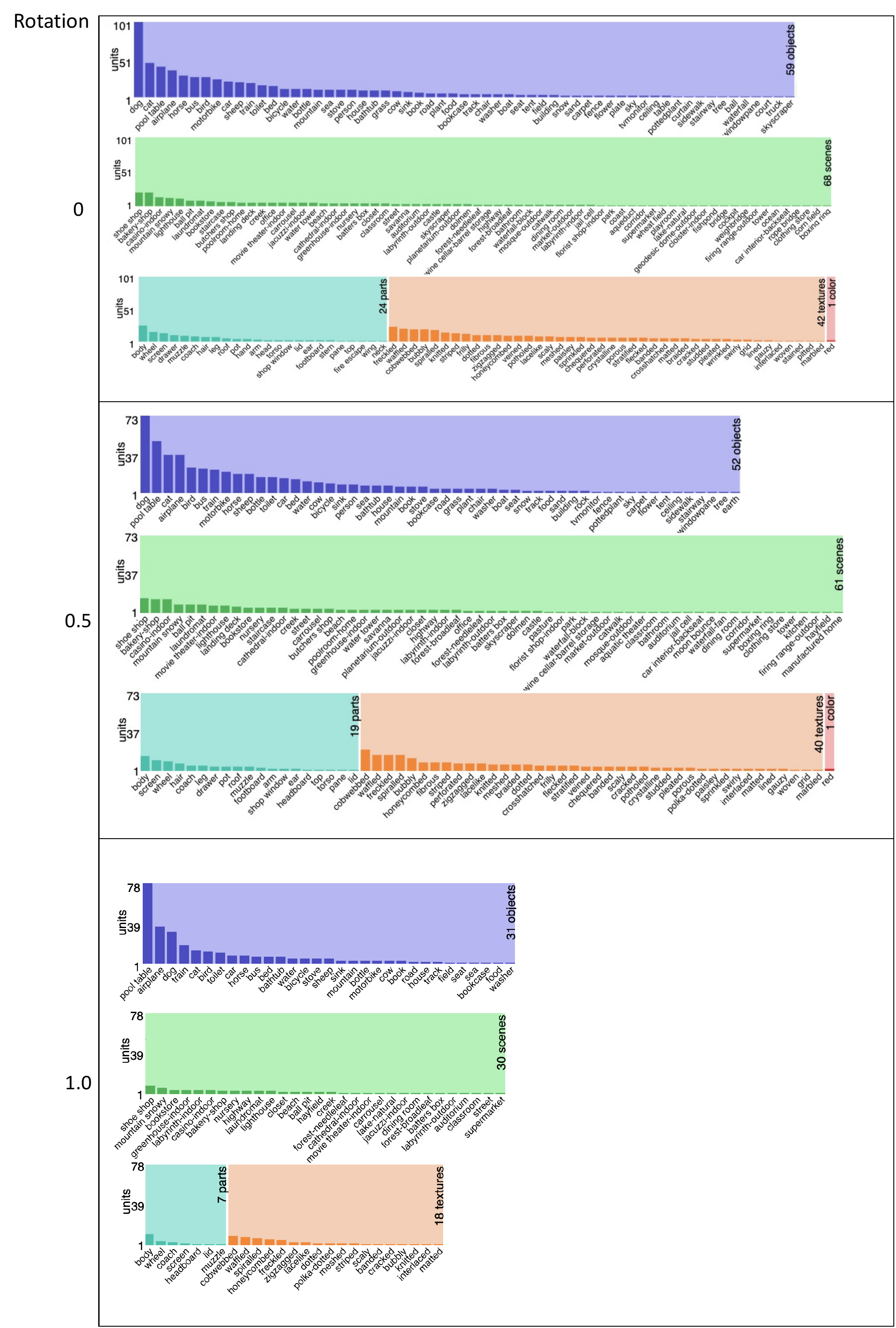}
  \caption[]{Network dissection bar graph of categories of unique concepts at three different rotation powers for the ResNet-50 model in \cref{fig:net-d1}.}\label{fig:r50-concepts}
\end{figure}

\begin{figure}
  \includegraphics[width=6.5in]{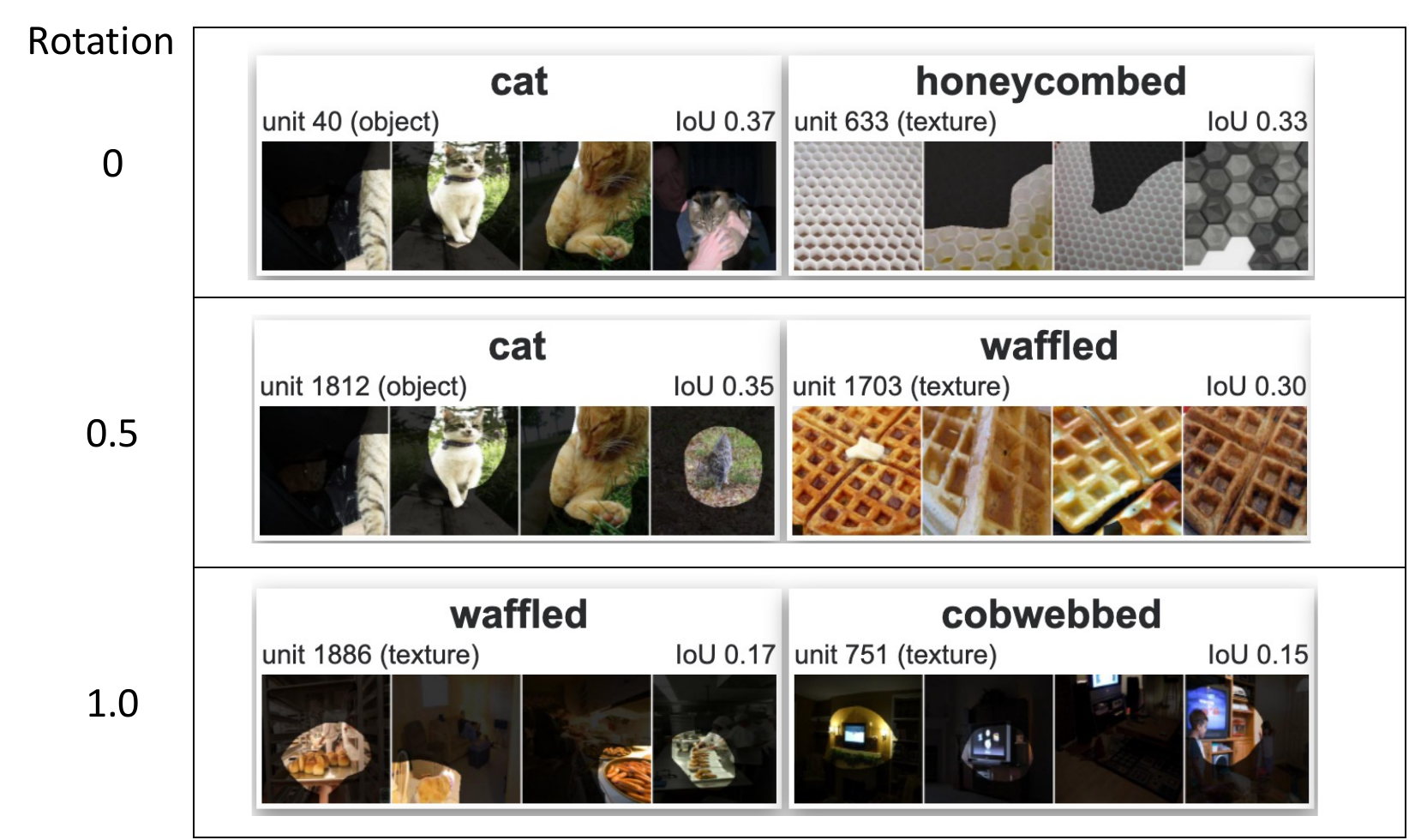}
  \caption[]{Top two highest scoring units for network dissection at three different rotation powers for the ResNet-50 model in \cref{fig:net-d1}.}\label{fig:r50-highest-iou}
\end{figure}

\begin{figure}
  \centering
  \includegraphics[width=3in]{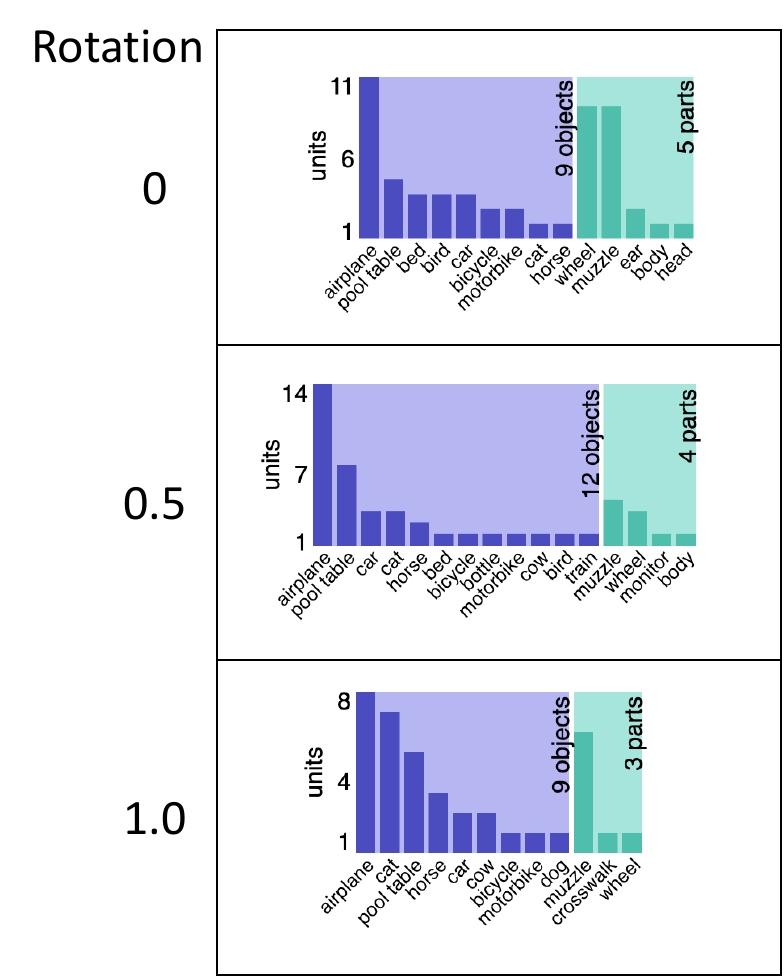}
  \caption[]{Network dissection bar graph of categories of unique concepts at three different rotation powers for the modified ResNet-50 model (without a ReLU activation function on the residual output) in \cref{fig:net-d1}.}\label{fig:r50-modified-concepts}
\end{figure}

\begin{figure}
  \centering
  \includegraphics[width=6.5in]{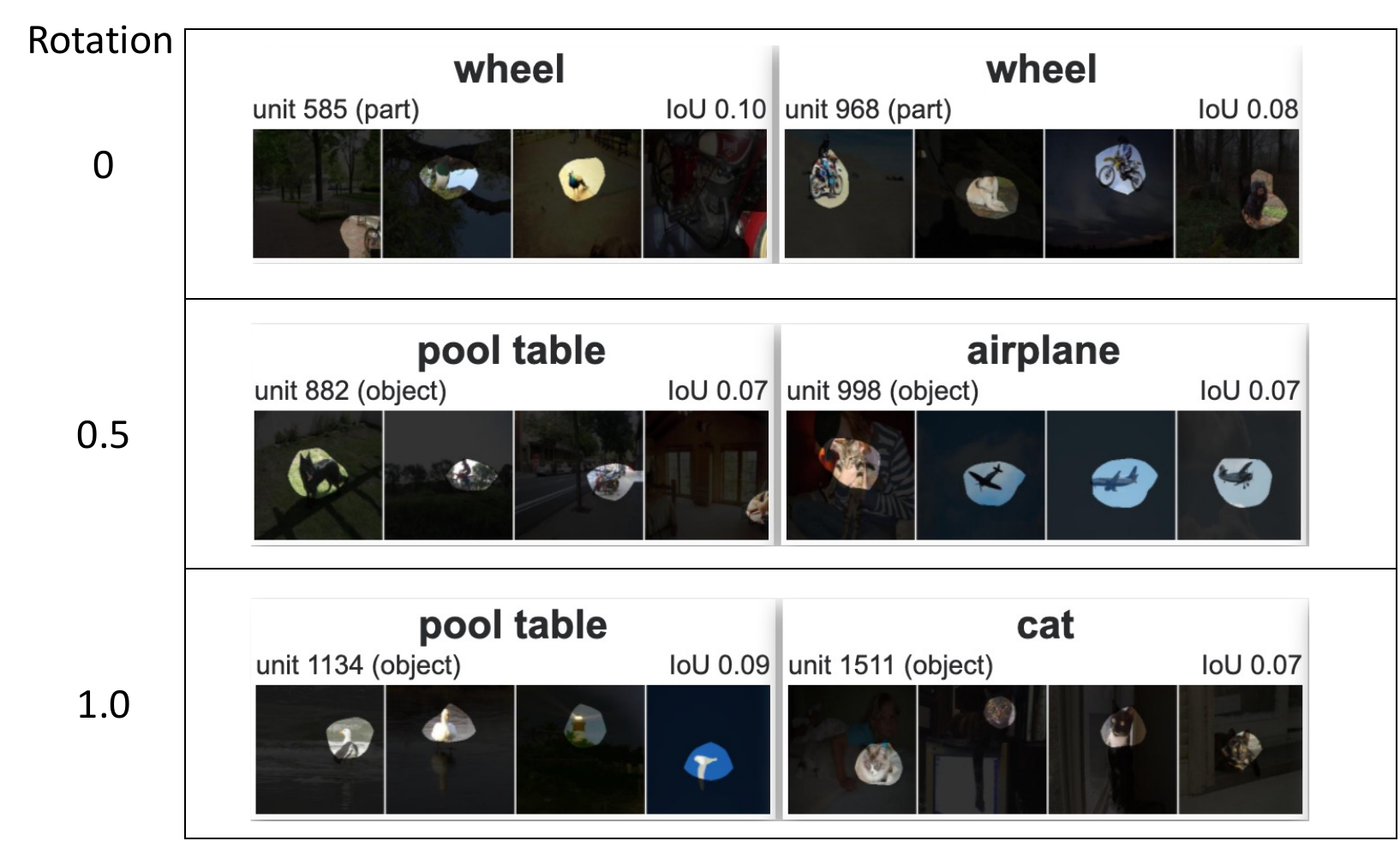}
  \caption[]{Top two highest scoring units for network dissection at three different rotation powers for the modified ResNet-50 model (without a ReLU activation function on the residual output) in \cref{fig:net-d1}.}\label{fig:r50-modified-highest-iou}
\end{figure}

\begin{figure}
  \centering
  \includegraphics[width=5in]{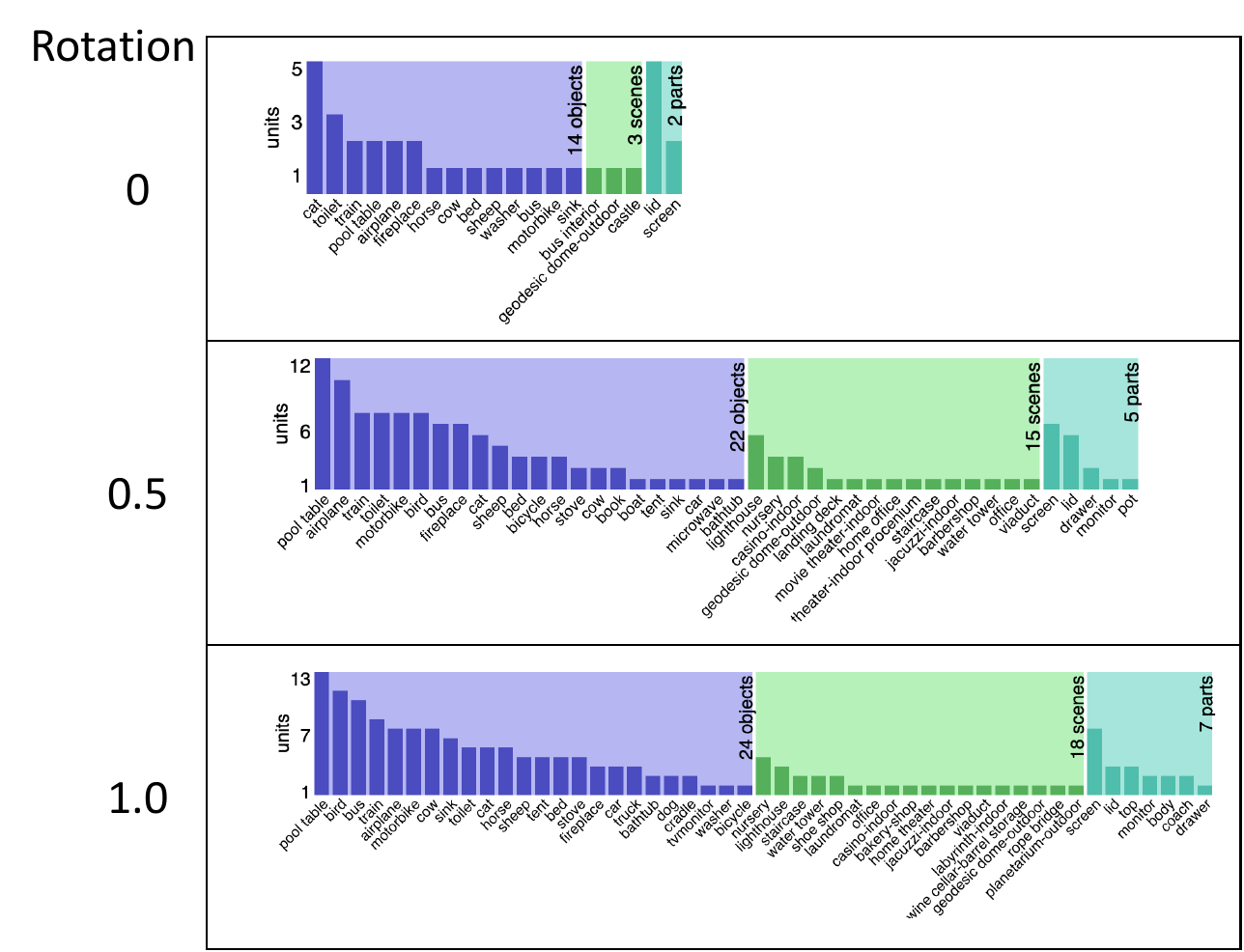}
  \caption[]{Network dissection bar graph of categories of unique concepts at three different rotation powers for the ConvNeXt model in \cref{fig:net-d1}.}\label{fig:convnext-concepts}
\end{figure}

\begin{figure}
  \centering
  \includegraphics[width=6.5in]{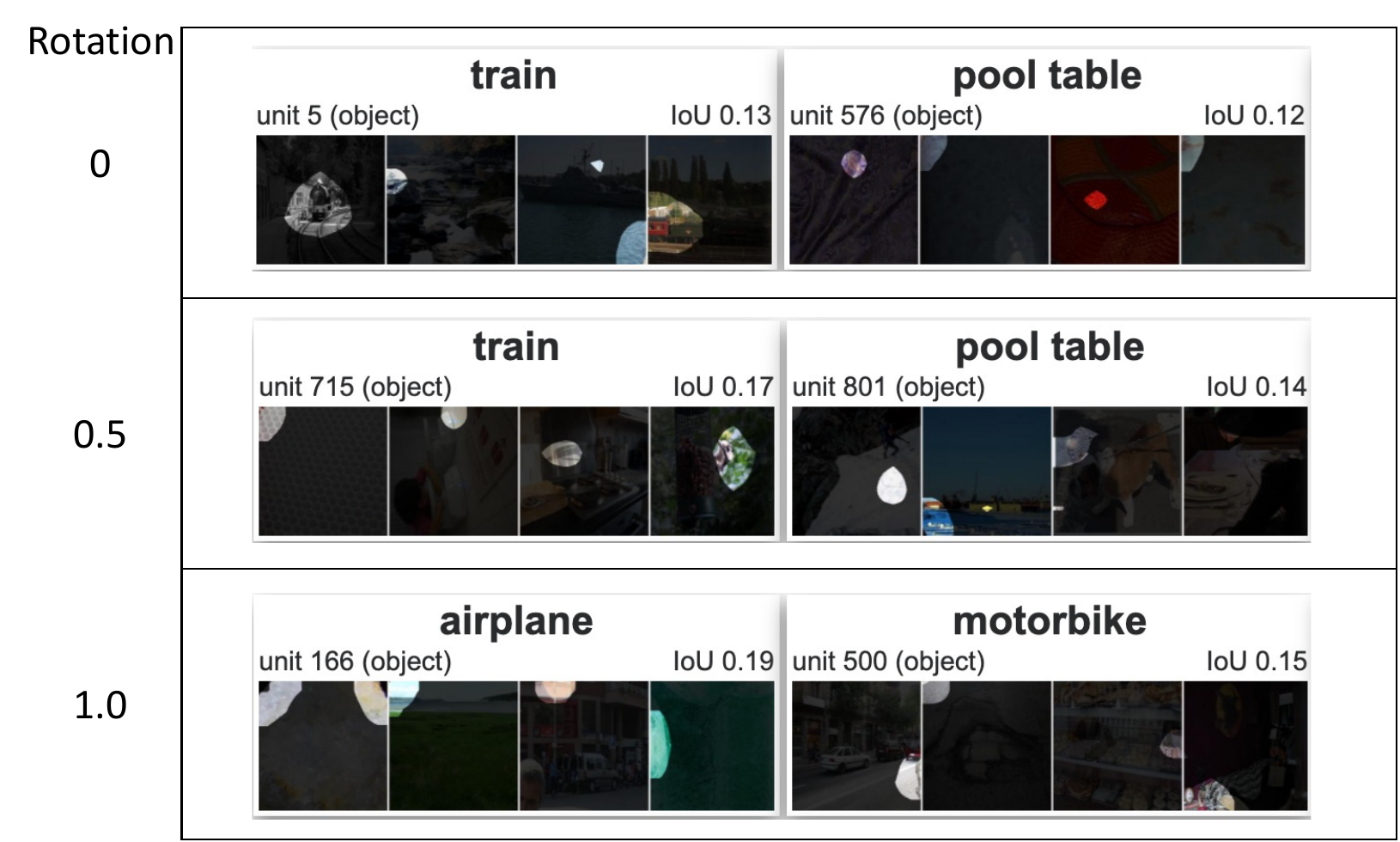}
  \caption[]{Top two highest scoring units for network dissection at three different rotation powers for the ConvNeXt model in \cref{fig:net-d1}.}\label{fig:convnext-highest-iou}
\end{figure}

\subsection{Model training details}
\label{sec:net-d-train}
We train a ResNet-50 without ReLU (or any activation function) on the residual blocks in PyTorch using \cite{leclerc2022ffcv} on ImageNet \cite{deng2009imagenet}. We train with SGD with momentum for 88 epochs with a cyclic learning rate rate of $1.7$, label smoothing of $0.1$, a batch size of 512, and weight decay of $10^{-4}$. The model achieves 76.1\% top-1 accuracy. We use pretrained weights for the ResNet-50 (unmodified) and ConvNeXt models from \cite{marcel2010torchvision} and \cite{rw2019timm} respectively.

\section{Dataset Details}
\label{sect-dataset-details}

CIFAR-10: CIFAR-10 is covered by the MIT License (MIT). We use canonical train/test splits (imported using \href{https://pytorch.org/vision/stable/index.html}{torchvision}).

Broden: the \href{https://github.com/CSAILVision/NetDissect}{code used to generate the dataset} is covered by the MIT license.

ImageNet: ImageNet is covered by CC-BY 4.0. We use canonical train/test splits.

\begin{figure}[h]
    \begin{subfigure}{0.5\linewidth}
    \centering
    \includegraphics[width=\linewidth]{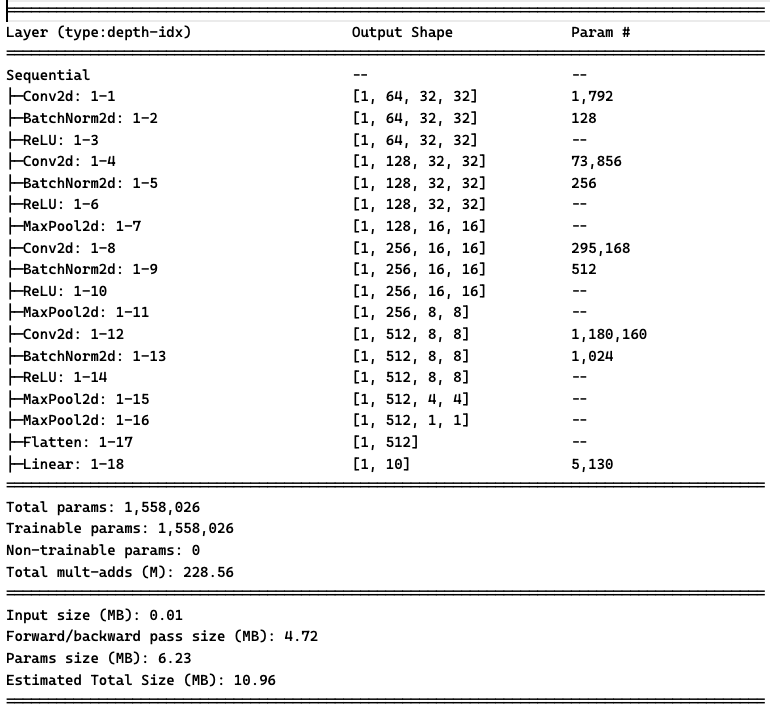}
    \caption{Myrtle CNN architecture, summary courtesy of \href{https://github.com/tyleryep/torchinfo}{torchinfo}.}\label{fig:arch-mCNN}
    \end{subfigure}
    \begin{subfigure}{0.5\linewidth}
    \centering
    \includegraphics[width=\linewidth]{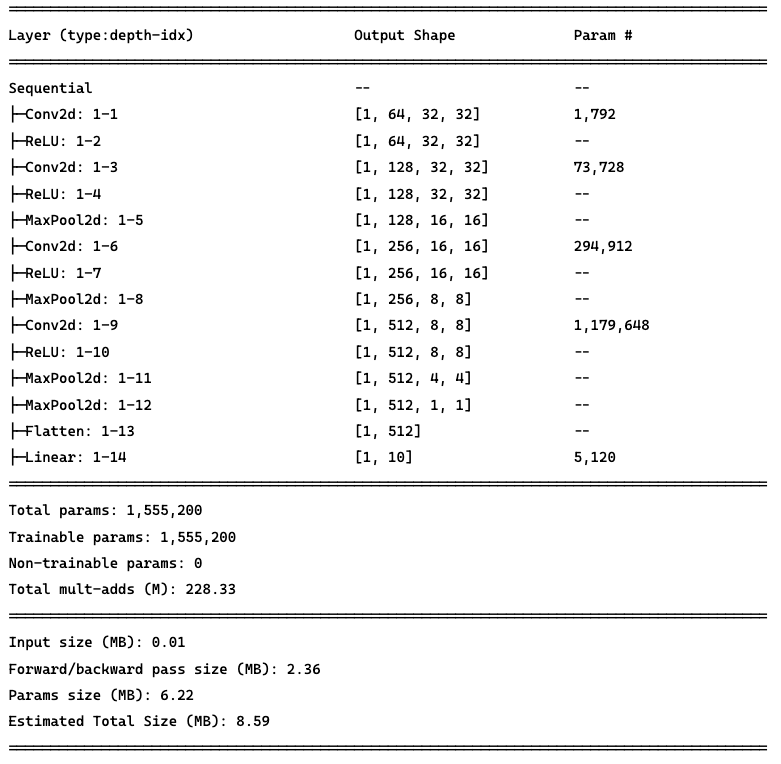}
    \caption{Myrtle CNN architecture without batch norm (only used in \cref{sec:sanity-rotations}), summary courtesy of \href{https://github.com/tyleryep/torchinfo}{torchinfo}.}
    \end{subfigure}
\end{figure}

\begin{figure}[h]
   \begin{subfigure}{0.5\linewidth}
    \centering
    \includegraphics[width=\linewidth]{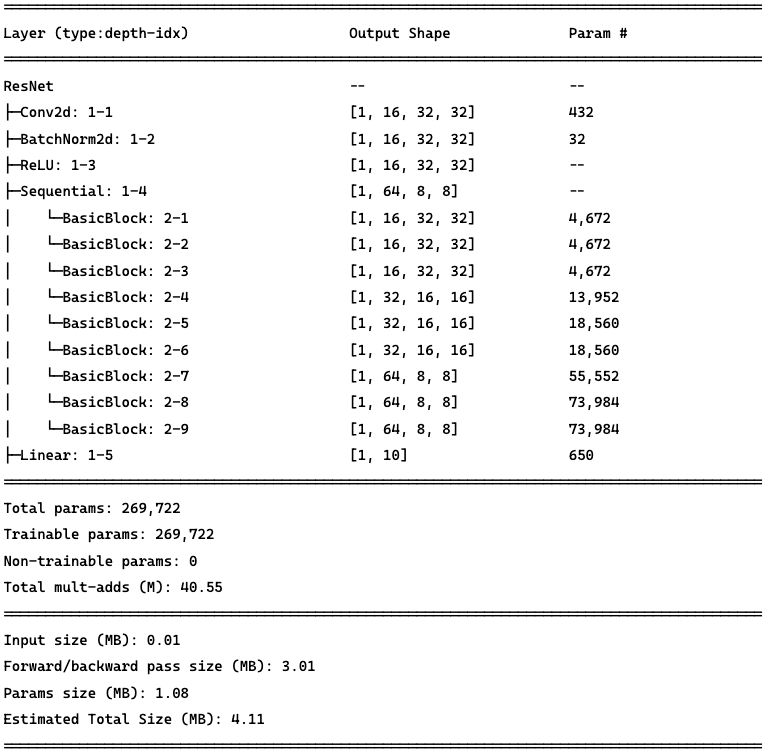}
    \caption{Our ResNet20 architecture, summary courtesy of \href{https://github.com/tyleryep/torchinfo}{torchinfo}.}\label{fig:arch-rn20}
   \end{subfigure}
   \begin{subfigure}{0.5\linewidth}
    \centering
    \includegraphics[width=\linewidth]{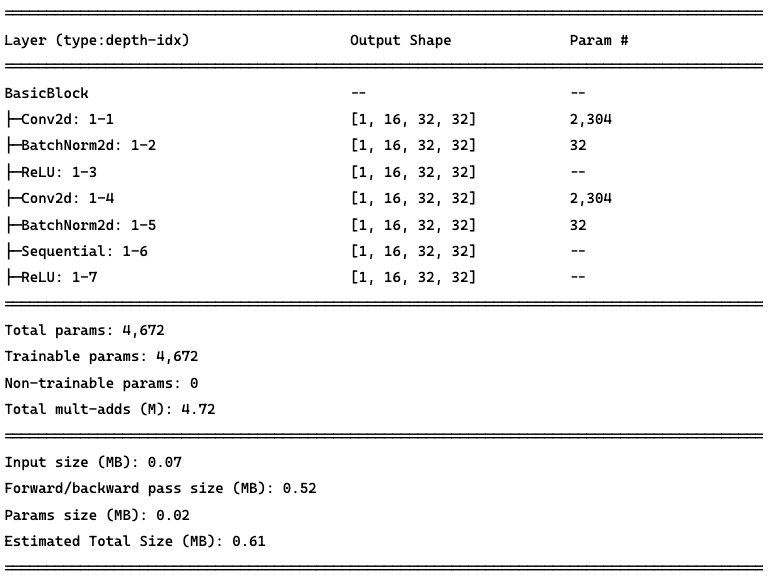}
    \caption{Internals of the 1st \lstinline{BasicBlock} (the sequential contains the residual connection).}
   \end{subfigure}
    
\end{figure}

\begin{figure}[h]
   \begin{subfigure}{0.5\linewidth}
    \centering
    \includegraphics[width=\linewidth]{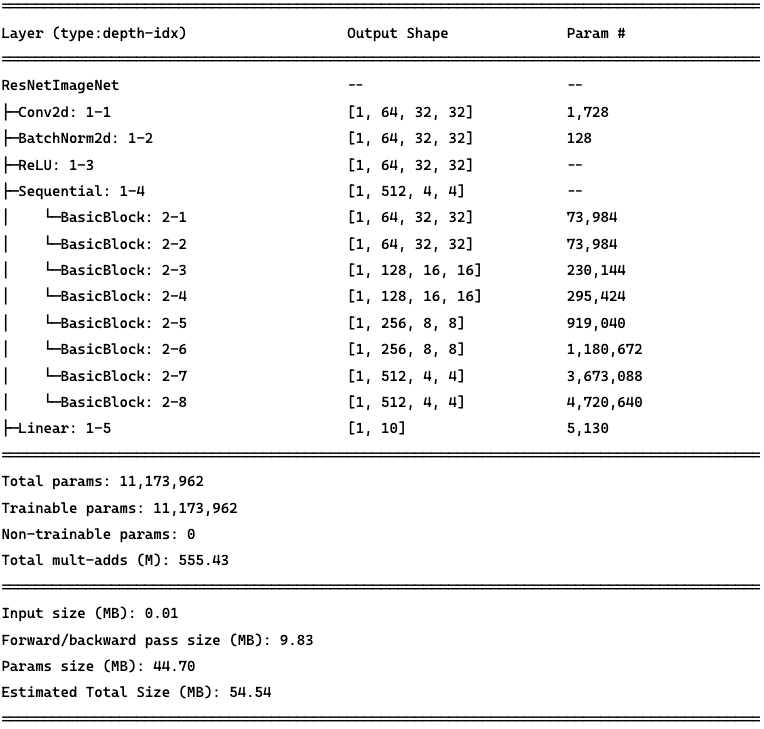}
    \caption{Our ResNet18 architecture, summary courtesy of \href{https://github.com/tyleryep/torchinfo}{torchinfo}.}\label{fig:arch-rn18}
   \end{subfigure}
   \begin{subfigure}{0.5\linewidth}
    \centering
    \includegraphics[width=\linewidth]{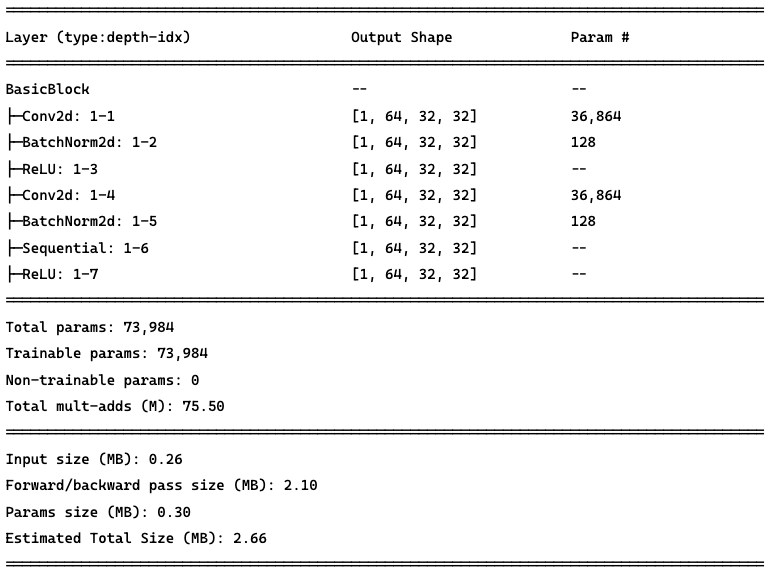}
    \caption{Internals of the 1st \lstinline{BasicBlock} (the sequential contains the residual connection).}
   \end{subfigure}
    
\end{figure}

\end{document}